\newcommand{\bz}{\mathbf{z}}
\newcommand{\btheta}{\mathbf{\theta}}
\newcommand{\by}{\mathbf{y}}
\def\bphi{{\boldsymbol{\phi}}}
\def\eandelong{{\sc Embed \& Emulate}\xspace}
\def\eande{{E\&E}\xspace}
\newsavebox\myboxA
\newsavebox\myboxB
\newlength\mylenA
\def\eandelong{{\sc Embed \& Emulate}\xspace}
\def\eande{{E\&E}\xspace}
\DeclareMathOperator*{\E}{\mathbb{E}}
\DeclareMathOperator*{\argmin}{\mathrm{argmin}}
\definecolor{midnight}  {rgb}{0,0,.5}
\definecolor{forest}  {rgb}{0,.4,0}
\title{Embed and Emulate: Contrastive representations for simulation-based inference}
\author[1]{Ruoxi Jiang\thanks{Equal contribution}}
\author[1]{Peter Y. Lu$^*$}
\author[1,2]{Rebecca Willett}
\affil[1]{Department of Computer Science, The University of Chicago, US}
\affil[2]{Department of Statistics, The University of Chicago, US}
\date{}                   
\newtheorem{theorem}{Theorem}
\newtheorem{lemma}[theorem]{Lemma}
\newtheorem{corollary}[theorem]{Corollary}
\newtheorem{definition}[theorem]{Definition}
\newenvironment{keywords}
{\bgroup\leftskip 20pt\rightskip 20pt \small\noindent{\bf Keywords:} }%
{\par\egroup\vskip 0.25ex}
\begin{document}
\maketitle

\begin{abstract}
Scientific modeling and engineering applications rely heavily on parameter estimation methods to fit physical models and calibrate numerical simulations using real-world measurements. In the absence of analytic statistical models with tractable likelihoods, modern simulation-based inference (SBI) methods first use a numerical simulator to generate a dataset of parameters and simulated outputs. This dataset is then used to approximate the likelihood and estimate the system parameters given observation data. Several SBI methods employ machine learning emulators to accelerate data generation and parameter estimation. However, applying these approaches to high-dimensional physical systems remains challenging due to the cost and complexity of training high-dimensional emulators. This paper introduces Embed and Emulate (\eande): a new SBI method based on contrastive learning that efficiently handles high-dimensional data and complex, multimodal parameter posteriors. \eande learns a low-dimensional latent embedding of the data (i.e., a summary statistic) and a corresponding fast emulator in the latent space, eliminating the need to run expensive simulations or a high-dimensional emulator during inference. We illustrate the theoretical properties of the learned latent space through a synthetic experiment and demonstrate superior performance over existing methods in a realistic, non-identifiable parameter estimation task using the high-dimensional, chaotic Lorenz 96 system.
\end{abstract}

\begin{keywords}
simulation-based inference, contrastive representations, parameter estimation, multimodal distributions, high-dimensional data
\end{keywords}

\section{Introduction}
Model parameter estimation and inference tasks are ubiquitous in scientific modeling and engineering applications, spanning fields such as climate forecasting \citep{schneider2017earth, adam2018regional, schneider2024opinion}, cosmology \citep{mishra2022neural, boddy2022snowmass2021, Prat_2023}, evolutionary biology \citep{toni2010simulation, st2019bayesian}, and more. 
In scientific modeling, parameter inference is used to fit physical models to real observation data, providing not only point estimates of system parameters but also measures of uncertainty that are critical for science.
However, traditional statistical inference methods often fall short when dealing with complex, high-dimensional models that lack tractable likelihood functions.
Simulation-based inference (SBI) has emerged as a powerful approach to address these challenges \citep{cranmer2020frontier, lueckmann2021benchmarking}. 
SBI uses numerical simulators to generate data for a range of parameter values, enabling inference without explicit likelihood calculations.
For example, approximate Bayesian computation (ABC) \citep{csillery2010approximate}, a traditional SBI method, iteratively compares simulated data with observations to construct parameter estimates and posterior distributions. 
These estimates provide key insights into the behavior of the physical system and allow us to accurately calibrate physical models.

However, traditional SBI methods like ABC can be prohibitive when the underlying numerical simulators are computationally demanding. 
Machine learning provides a mechanism for mitigating this challenge via \textit{learned emulators}, which are data-driven models trained to mimic numerical simulations at a much lower computational cost \citep{li2020fourier, gupta2022multispatiotemporalscalegeneralizedpdemodeling, pentland2023gparareal, takamoto2023learning, bruna2024neural, jiang2024training, schiff2024dyslim, raonic2024convolutional, cachay2024probabilistic}.
In the context of SBI, one might consider the following procedure \citep{lueckmann2019likelihood}: 
first, generate training samples using a numerical simulator for a physical system; then, learn an emulator that maps system parameters to the simulated data using the generated samples; finally, given real observation data, find the best-fit parameters using the (cheap) emulator in place of the original (expensive) numerical simulator.
Past efforts have shown compelling proofs of concept using emulators for parameter estimation \citep{RAISSI2019686, lueckmann2019likelihood, gmd-2021-267, watson2021model}, but a number of open challenges remain. For example, the reported computational savings associated with the learned emulators typically do not account for the computational burden of generating training data using an expensive numerical simulator. Furthermore, when the simulator outputs are high-dimensional, the number of training samples needed for high-fidelity predictions may be quite large. 

This paper describes an alternative approach, based on contrastive learning \citep{hoffer2015deep, oord2018representation, zhang2020self, radford2021learning, zhang2022dino}, that aims to reduce the number of training samples required for SBI by designing an emulator specifically for the parameter estimation task. Our approach, called \textit{Embed and Emulate} (\eande), jointly learns a low-dimensional latent embedding for the data and a fast latent emulator that maps the system parameters directly to the latent space. The embedding homes in on aspects of the simulator outputs that are most salient to the parameter estimation task---the learned embedding is a low-dimensional summary statistic. 
During inference, we first embed the data (i.e.,~compute the learned summary statistic) and then use the latent emulator to estimate the system parameters. The embedding and the latent emulator can be learned with far fewer samples than an emulator operating in the system's original high-dimensional output space. In fact, the computational burden of generating training data and performing inference are both much lower using the \eande approach than with SBI methods operating in the original high-dimensional space, facilitating accurate and efficient parameter estimation even for complex physical systems.

In this work, we extend the scope of the original \eande method \citep{jiang2022embed} to address more general SBI tasks and provide new theoretical support for the approach.
While the original approach was limited by its assumption of a Gaussian likelihood \citep{iglesias2013ensemble} and its use of a supervised regression head for unimodal parameter estimation, our extension significantly broadens its applicability.
We update key elements of the original method and propose a parameterization of the posterior that allows for a wide class of multimodal likelihoods.
Another major contribution of our work is the theoretical justification.
We motivate and significantly clarify the use of contrastive learning in this context using new theoretical results that tie optimizing the contrastive loss directly to estimating the posterior parameter distribution. Altogether, these advancements allow \eande to estimate posterior distributions for system parameters that provably converge to the true posterior given sufficient data.

\subsection{Contributions}
\begin{enumerate}
    \item We propose a new method for simulation-based inference with a focus on high-dimensional data. Inspired by contrastive representation learning, our approach parameterizes the likelihood-to-evidence ratio as a distance in a latent representation space between a data embedding and a parameter embedding. The embeddings are first trained using a symmetric inter-domain contrastive loss, which ensures that we capture the correct likelihood-to-evidence ratio, and an optional intra-domain contrastive loss, which reduces the variance of parameter estimates. The learned likelihood-to-evidence ratio, along with the prior, can then be used to construct and sample from the posterior parameter distribution.
    \item We develop a theoretical framework for our method based on an analysis of contrastive representation learning and show that, under standard assumptions, the estimated posterior derived from our learned embeddings will converge to the true posterior. Our analysis also highlights the distinctive features of our approach: the data and parameter embeddings, which can be interpreted as a sufficient statistic and a fast latent emulator, and the symmetric form of the inter-domain loss, which results in a better-behaved estimate of the posterior.
    \item We use a synthetic task to illustrate our theoretical results in a controlled setting and provide additional insight into the learned data and parameter embeddings. 
    In this setting, we can explicitly predict the form of the optimal embeddings and reconstruct the latent generative process. We also introduce additional redundant parameters to study settings where the parameters are not identifiable. Our method learns to ignore the redundant parameters, leading to low-dimensional latent embeddings.
    \item We then test our method on a realistic simulation-based inference task using the Lorenz 96 system: a high-dimensional, chaotic model for atmospheric dynamics. In our experiment, we examine a parameterization of the dynamical system that introduces parameter non-identifiability and thus a complex posterior distribution. On this challenging task, our method outperforms other recently proposed simulation-based inference methods.
\end{enumerate}
\subsection{Related work}
\textbf{Simulation-based inference (SBI).} Traditional SBI methods like Approximate Bayesian Computation (ABC) have been widely used but suffer from limitations including sample inefficiency, reliance on predefined sufficient statistics, and lack of amortization.
ABC, for instance, typically requires repeated simulator runs and must restart its costly inference process for each new observation.
To address these limitations, modern techniques propose training tractable surrogates or emulators for simulators \citep{cranmer2020frontier, spurio2023bayesian}.
These approaches can be broadly classified based on their estimation targets: neural posterior estimation (NPE) \citep{papamakarios2016fast,lueckmann2017flexible,greenberg2019automatic,rodrigues2021hnpe,ward2022robust}, neural likelihood estimation (NLE) \citep{papamakarios2019sequential}, and neural ratio estimation (NRE) \citep{moustakides2019training, hermans2020likelihood, miller2021truncated, miller2022contrastive, kelly2024misspecification}.
Furthermore, these approaches can be divided into two setups: sequential and amortized.
The sequential setup involves iteratively updating the proposal distribution by generating new simulations, while the amortized setup aims to learn the posterior distribution from a fixed dataset, thereby spreading the cost of training and data simulation across multiple observations.
In this study, we focus on the amortized setup, which is primarily employed for NRE methods and is also applicable for NPE methods.

NPE methods directly parameterize posteriors using conditional density estimators with specialized architectures like mixture density networks, autoregressive models, and normalizing flows \citep{bishop1994mixture, uria2016neural, papamakarios2017masked, kobyzev2020normalizing, papamakarios2021normalizing}.
The initial method (sequential) NPE-A \citep{papamakarios2016fast} uses mixture density networks to fit the proposal distribution by minimizing the negative log probability of the simulated parameters given the corresponding data.
In its sequential setup, when the proposal distribution for training samples is different from the prior, they propose an analytical post-hoc step for correction. 
The subsequent method SNPE-B \citep{lueckmann2017flexible} alleviates the need for post-hoc corrections by introducing an importance-weighted loss, although a high variation of the importance weights during training can lead to inaccurate inference.
SNPE-C \citep{greenberg2019automatic} addresses these issues through a reparameterization that also supports more flexible architectures, such as normalizing flows. 
In this study, we refer to its single-round non-sequential version as NPE-C.

Instead of direct posterior estimation, NRE methods learn the posterior by estimating the ratio between the data-generating distribution and the marginal distribution, referred to as the likelihood-to-evidence ratio \citep{cranmer2015approximating}.
Early works LFIRE \citep{thomas2020likelihoodfreeinferenceratioestimation} and NRE-A \citep{hermans2020likelihood} frame this ratio estimation problem as a classification task, where the goal is to distinguish between samples generated from the joint distribution (parameters and simulated data sampled together) and samples from the marginal distribution (parameters and data sampled independently).
While LFIRE learns a separate classifier per posterior evaluation, the latter method NRE-A trains an amortized classifier and improves computational efficiency.
{\sc NRE-B} \citep{durkan2020contrastive} extends this binary classification framework to a multi-class task, improving stability and accuracy.
However, \citet{miller2022contrastive} identify a normalization problem of {\sc NRE-B}: the estimated ratio includes a data-dependent bias term. 
To tackle this, they introduce {\sc NRE-C} using a similar framework but correct the ratio by introducing additional hyperparameters.

Despite these advancements, scaling {\sc NPE} or {\sc NRE} methods to high-dimensional problems remains challenging \citep{wildberger2024flow, anau2024scalable, akhmetzhanova2024data, gloeckler2024all}.
For NPE methods, normalizing flows are often employed to model the posterior distribution, but such architectures are known to suffer from the curse of dimensionality due to the invertibility constraint \citep{papamakarios2021normalizing}.
While {\sc NPE-C} attempts to address this computation bottleneck by implicitly attaching an embedding network in front of the constrained neural density estimator, learning embeddings without explicit objectives to form a structured latent space can require large quantities of training data.
Our \eande method mitigates these challenges by choosing a loss function that provides explicit guidance for learning a structured embedding space to enable accurate parameter estimation and an embedding network framework designed for data efficiency and computational efficiency.
When compared to neural ratio estimation methods, we show that our ratio estimator is properly normalized without introducing extra hyperparameters compared to NRE-C.

\vspace{5pt}
\noindent\textbf{Contrastive representation learning.} 
Contrastive learning has emerged as a powerful paradigm in unsupervised representation learning \citep{hoffer2015deep, goroshin2015unsupervised, oord2018representation, he2020momentum, grill2020bootstrap, zhou2021ibot, rangnekar2022semantic, zhang2022dino}.
Operating under the push-pull principle, these methods pull ``positive'' data points (formed under various data augmentations) closer together in the latent space while simultaneously pushing ``negative'' data points apart.
The majority of these frameworks, focusing on instance discrimination in vision, have primarily dealt with single-domain data \citep{chen2020simple, he2020momentum}, where positive pairs are typically created through different augmentations of the same image.
Recently, cross-domain methods like CLIP \citep{radford2021learning} and ALIGN \citep{jia2021scaling} have extended this framework across different domains, often pairing images with textual descriptions.
Theoretical understanding of contrastive learning has also advanced. 
\citet{wang2020understanding} provides a theoretical framework for understanding single-domain contrastive learning, relating it to the alignment and uniformity of the latent features on the hypersphere,
and \citet{zimmermann2021contrastive} interpret single-domain contrastive learning as learning to invert a latent generative process.

In this work, we adapt cross-domain contrastive learning and build on prior theoretical analyses to develop a new method for SBI that parameterizes the posterior in terms of learned contrastive representations.

\section{Problem setup}
High-fidelity simulations play a fundamental role in modeling and understanding complex physical processes. Simulations typically model a stochastic generative process $\by \sim G(\bphi)$, taking input parameters $\bphi \in \Phi \subseteq \mathbb{R}^s$ and sampling data $\by \in \mathcal{Y} \subseteq \mathbb{R}^d$ with a likelihood distribution $p(\by \mid \bphi)$. The scientific goal is then to match this model to observed data $\by^o$ by estimating the parameters $\bphi$ associated with the real physical system with uncertainty. Formally, this problem can be solved using Bayesian inference by computing the posterior parameter distribution
\begin{align}
    p(\bphi \mid \by^o) = \frac{p(\by^o \mid \bphi)\,p(\bphi)}{p(\by^o)}
\end{align}
given the model likelihood $p(\by \mid \bphi)$ and a parameter prior $p(\bphi)$. Often, this is accomplished by sampling from the posterior using methods, such as Markov chain Monte Carlo (MCMC) or rejection sampling \citep{eckhardt1987stan}, that use an unnormalized form of the distribution $p(\bphi \mid \by^o) 
 \propto p(\by^o \mid \bphi)\,p(\bphi)$.

However, for complex scientific models, standard Bayesian inference approaches run into computational difficulties. In many cases, the likelihood function $p(\by \mid \bphi)$ cannot be explicitly derived or efficiently computed. Instead, simulation-based inference (SBI), also known as likelihood-free inference, uses the simulator to generate a dataset of parameter--data pairs $\{(\bphi_i, \by_i)\}_{i=1}^N \sim p(\bphi, \by)$, which can then be used to estimate the likelihood \citep{cranmer2020frontier} or a related quantity such as the likelihood-to-evidence ratio \citep{thomas2020likelihoodfreeinferenceratioestimation, hermans2020likelihood, durkan2019neural, miller2022contrastive}
\begin{equation}
r(\bphi, \by) := \frac{p(\by \mid \bphi) } {p(\by)} = \frac{p(\bphi \mid \by)} {p(\bphi)} = \frac{p(\bphi, \by)}{p(\bphi)\,p(\by)}.
\end{equation}
This problem is particularly challenging in scenarios where the data $\by$ is high-dimensional, and therefore the likelihood $p(\by\mid\bphi)$ is a high-dimensional probability distribution. High-dimensional data is common in many scientific domains \citep{schneider2017earth, wang2022physics, ye2024nonlinear}, such as spatiotemporal dynamical systems, and is one of the key difficulties that we aim to address in our new SBI approach.
\section{Proposed approach}
Directly emulating the generative process $\by \sim G(\bphi)$ and approximating the likelihood $p(\by\mid\bphi)$ for high-dimensional data $\by$ is challenging and often has high sample complexity, requiring complex models trained on large datasets \citep{bi2022pangu,lam2022graphcast, kurth2023fourcastnet, li2024generative}. Instead, our proposed \textit{Embed and Emulate} (\eande) method simultaneously learns an encoder $\hat{f}_\theta:\mathcal{Y}\to\mathbb{S}^{n-1}$ for compressing the high-dimensional data $\by\in\mathcal{Y}$ to a summary statistic in a latent space $\mathbb{S}^{n-1}$ (the unit hypersphere in $\mathbb{R}^n$) and a latent emulator $\hat{g}_\theta:\Phi\to\mathbb{S}^{n-1}$ for learning to map the parameters $\bphi\in\Phi$ to the summary statistic.

The encoder $\hat{f}_\theta$ performs dimensionality reduction, removing irrelevant information that is not necessary for parameter inference and thus providing a low-dimensional sufficient statistic (Section~\ref{sec:maintheory}). The latent emulator $\hat{g}_\theta  \approx \hat{f}_\btheta \circ G$ can then focus on emulating the low-dimensional statistic rather than the original high-dimensional data. These two components $\hat{f}_\theta,\hat{g}_\theta$ are jointly trained to reconstruct the likelihood-to-evidence ratio $r(\bphi,\by)$, which then allows us to sample from the parameter posterior $p(\bphi \mid \by) = r(\bphi, \by)\,p(\bphi)$. Since we learn the likelihood-to-evidence ratio rather than the posterior directly, the prior used for data generation does not necessarily need to match the prior for inference, allowing for greater flexibility (Appendix~\ref{sec:altpriors}).

\subsection{Parameterizing the likelihood-to-evidence ratio as a similarity measure}
The emulator $\hat{g}_\theta$ ideally learns a map from the parameters $\bphi$ to the summary statistic $\hat{g}_\theta(\bphi) \approx \hat{f}_\btheta(G(\bphi))$ with mismatch due to the stochastic nature of the generative process $G$.
When the marginals $p(\bphi)$ and $p(\by)$ are fixed, the likelihood-to-evidence ratio $r(\bphi, \by)$ scales proportionally to the joint density $p(\bphi, \by)$.
Therefore, given data $\by$, we expect $r(\bphi, \by)$ to be large when $\hat{g}_\theta(\bphi)$ is ``close'' to $\hat{f}_\btheta(\by)$ and small when $\hat{g}_\theta(\bphi)$ is ``far'' from $\hat{f}_\btheta(\by)$. We can make this intuition precise by parameterizing $r(\bphi, \by)$ in terms of a metric or other similarity measure in the latent embedding space $\mathbb{S}^{n-1}$. In particular, we parameterize our model for the likelihood-to-evidence ratio
\begin{equation}
    \log \hat{r}_\theta(\bphi, \by) = \hat{f}_\theta(\by)\cdot\hat{g}_\theta(\bphi)/\tau - \log C(\by)
\end{equation}
in terms of the cosine similarity $\hat{f}_\theta(\by)\cdot\hat{g}_\theta(\bphi)$ with a scale hyperparameter $\tau$ and normalization factor $C(\by)$. From this, we can derive a model for the posterior distribution
\begin{equation}\label{eqn:posterior}
    \hat{q}_\theta(\bphi \mid \by) := \hat{r}_\theta(\bphi, \by)\,p(\bphi) = C(\by)^{-1}e^{\hat{f}_\theta(\by)\cdot\hat{g}_\theta(\bphi)/\tau}\,p(\bphi).
\end{equation}
By choosing this form for the likelihood-to-evidence ratio, we are effectively picking a particular form of the likelihood distribution $p(\hat{f}_\btheta(\by)\mid\bphi)$ for the summary statistic, which determines the structure of the learned latent space (Section \ref{sec:understanding}). This form can also be equivalently described as parameterizing the posterior (\ref{eqn:posterior}) as an exponential family distribution with sufficient statistic $\hat{f}_\theta(\by)$ and natural parameter $\hat{g}_\theta(\bphi)$.

\subsection{Optimizing the symmetric inter-domain InfoNCE loss}
\label{sec:inter-modal}
\begin{figure}[t!]
    \centering
    \includegraphics[width=1\linewidth]{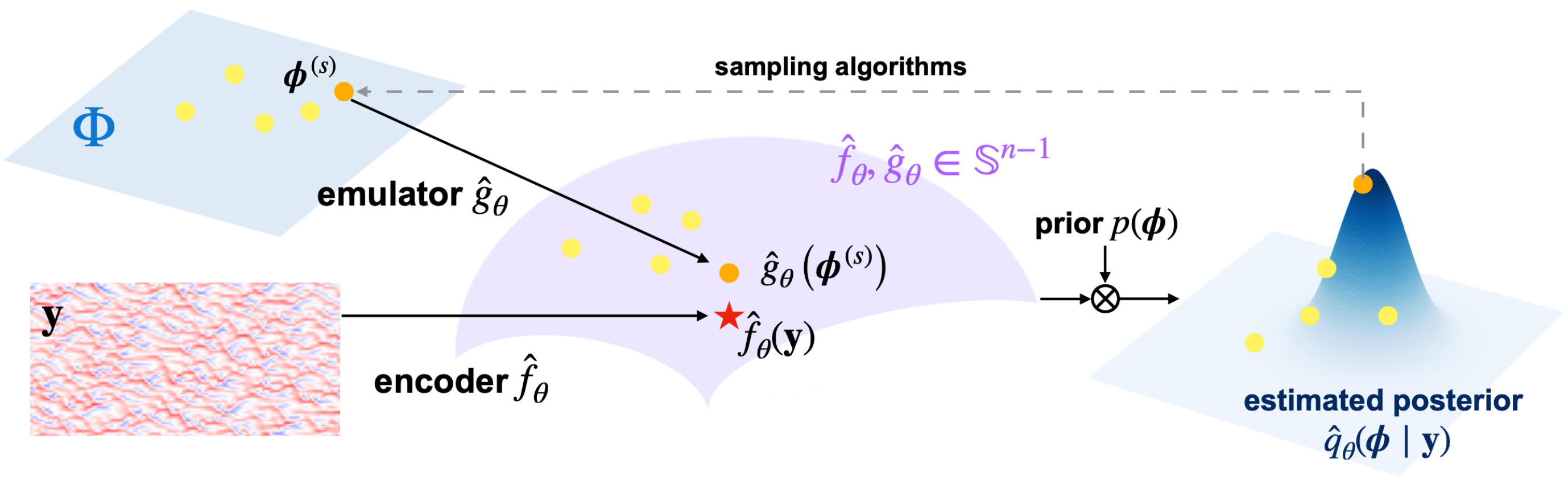}
    \caption{\textbf{Diagram of posterior inference using \eande.} For each observation $\by$, our approach requires a single forward pass through the encoder $f_\theta$, which processes the high-dimensional data to produce a lower-dimensional embedding. This embedding is then combined with the emulator’s outputs $\hat{g}_\theta(\bphi)$ to parameterize the posterior estimator. 
    Posterior samples $\bphi^{(s)} \sim p(\bphi)$ are drawn using a posterior sampling algorithm that repeatedly calls the learned emulator $\hat{g}_\theta$.}
    \label{fig:eande_diagram_sampling}
\end{figure}
We aim to optimize the model $\hat{r}_\theta(\bphi, \by)$ so that it matches the true likelihood-to-evidence ratio $r(\bphi, \by)$, which acts like a similarity measure that is large for matched positive pairs $(\bphi, \by) \sim p(\bphi, \by) = p(\by\mid\bphi)\,p(\bphi)$ sampled from the joint distribution and small for unmatched negative pairs $(\bphi, \by) \sim p(\by)\,p(\bphi)$. Therefore, we aim to learn embedding functions $\hat{f}_\theta,\hat{g}_\theta$ such that positive pairs are embedded nearby in latent space and negative pairs are embedded further apart. The symmetric inter-domain InfoNCE loss, also known as CLIP \citep{radford2021learning} in contrastive representation learning, formalizes this intuition: 

\begin{align}
   L_\mathrm{sym}(\hat{f}_\btheta, \hat{g}_\btheta, M)
   & := L_{\Phi\mathcal{Y}}(\hat{f}_\btheta, \hat{g}_\btheta, M) 
   + L_{\mathcal{Y}\Phi}(\hat{f}_\btheta, \hat{g}_\btheta, M),
   \label{eqn:L_inter}
\end{align}
where
\begin{align}
    L_{\Phi\mathcal{Y}}(\hat{f}_\btheta, \hat{g}_\btheta, M) 
    &:= -\frac{1}{M}\sum_{i=1}^M\log\left(\frac{\hat{r}_\theta(\bphi_i, \by_i)}{\sum_{j=1}^M\hat{r}_\theta(\bphi_j, \by_i)}\right) = -\frac{1}{M}\sum_{i=1}^M\log\left(\frac{e^{\hat{f}_\btheta(\by_i)\cdot\hat{g}_\btheta(\bphi_i)/\tau}}{\sum_{j=1}^M e^{\hat{f}_\btheta(\by_i)\cdot\hat{g}_\btheta(\bphi_j)/\tau}}\right)
    \label{eqn:LPhiY}\\
    L_{\mathcal{Y}\Phi}(\hat{f}_\btheta, \hat{g}_\btheta, M) 
    &:= -\frac{1}{M}\sum_{i=1}^M\log\left(\frac{\hat{r}_\theta(\bphi_i, \by_i)}{\sum_{j=1}^M\hat{r}_\theta(\bphi_i, \by_j)}\right) = -\frac{1}{M}\sum_{i=1}^M\log\left(\frac{e^{ \hat{f}_\btheta(\by_i)\cdot\hat{g}_\btheta(\bphi_i)/\tau}}{\sum_{j=1}^M e^{\hat{f}_\btheta(\by_j)\cdot\hat{g}_\btheta(\bphi_i)/\tau}}\right)
    \label{eqn:LYPhi}
\end{align}
are computed using a batch of training parameter--data pairs $\{(\bphi_i,\by_i)\}_{i=1}^M \overset{\text{i.i.d.}}{\sim} p(\bphi,\by)$ of size $M$. $L_{\Phi\mathcal{Y}}$ trains $\hat{r}_\theta$ to match a given $\by_i$ to the corresponding $\bphi_i$ from a batch of parameters $\{\bphi_j\}_{j=1}^M$, and, similarly, $L_{\mathcal{Y}\Phi}$ trains $\hat{r}_\theta$ to match a given $\bphi_i$ to the corresponding $\by_i$ from a batch of data samples $\{\by_j\}_{j=1}^M$. 
We present the main training algorithm in Section \ref{sec:algorithm-main}.
Together, these loss functions align positive pairs in the latent space while separating negative pairs (Appendix~\ref{sec:asymptotics}). In fact, we prove that, at the global optimum of this symmetric InfoNCE loss, the trained model $\hat{r}_\theta(\bphi,\by)$ exactly matches the true likelihood-to-evidence ratio $r(\bphi,\by)$ (Section~\ref{sec:maintheory}).

\vspace{5pt}
\noindent\textbf{Symmetric form of the InfoNCE loss.} 
Requiring symmetry in forming negative pairs from two distinct domains (e.g.~parameters and data, or text and images) leads to the symmetric InfoNCE loss \citep{radford2021learning}. In contrastive representation learning, this symmetric loss aligns embeddings from different domains while ensuring the embeddings remain as distinctive as possible by balancing both types of negative pairs. In our \eande method, the symmetric form of the loss acts as a regularizer that ensures that the estimated ratio $\hat{r}_\theta(\bphi, \by)$ recovers the true likelihood-to-evidence ratio $r(\bphi, \by)$ up to a normalization constant $C(\by) = C^*$ that does not vary with the data $\by$ (Section~\ref{sec:maintheory}). Without this guarantee, $C(\by)$ can have an arbitrarily pathological dependence on $\by$, which leads to poor empirical performance \citep{ma2018noise} and makes verification tools like importance sampling-based diagnostics unusable \citep{miller2022contrastive}.

\vspace{5pt}
\noindent \textbf{Efficient sampling from the model posterior.}
Our design choice of $\hat{r}_\theta(\bphi, \by)$ offers significant computational advantages for posterior inference, particularly when dealing with high-dimensional data, while being easily integrated into most posterior sampling algorithms.
As illustrated in Figure \ref{fig:eande_diagram_sampling}, given the observation $\by$, our approach requires only a single forward pass through the encoder $\hat{f}_\btheta$ to compute the low-dimensional embedding.
Subsequently, for each candidate sample $\bphi^{(s)}$ drawn from a proposal distribution, we only need to run the forward pass of the emulator branch $\hat{g}_\btheta$ to estimate its posterior probability.
In contrast, modeling $\log \hat{r}_\theta(\bphi, \by) \propto h_\theta(\bphi, \by)$ for some mapping $h_\theta$ would necessitate repeated calculations of $h_\theta$ (and hence repeated processing of the high-dimensional data $\by$) during the posterior sampling procedure.
This design is particularly beneficial when the dimensionality of the data exceeds that of the parameters, as is often the case in complex simulation-based inference tasks, where a more computationally intensive backbone is typically required to encode the high-dimensional data $\by$.
By eliminating the need for additional runs of the encoder $\hat{f}_\btheta$, our method substantially reduces the computational time for estimating posterior values, especially when a large number of samples are required for accurate posterior inference. An illustration of how \eande can be combined with acceptance-rejection sampling is provided in Appendix \ref{sec:algorithm-sampling}.

\subsection{Regularization with the intra-domain InfoNCE loss}

From a representation learning perspective, incorporating domain-specific data augmentations
can further structure the representation space in a physically meaningful way \citep{oord2018representation}.
Although augmentations do not come for free, existing knowledge in many fields of scientific simulators has provided us with a way to form these augmented views.
For example, in the study of dynamical systems where data are typically observed as time sequences, an augmented view of the data $\by$ could be represented by a shifted time sequence $\tilde{\by}$ simulated using the same parameters $\bphi$ but with a different initial condition.
We generalize this idea by defining a conditional distribution that samples an augmented view $\tilde{\by} \sim p(\tilde{\by} \mid \by)$ such that the posterior distribution given the augmented data remains invariant $p(\bphi \mid \by) \equiv p(\bphi \mid \tilde{\by})$.

We use the intra-domain InfoNCE loss, formulated as:
\begin{equation}
    L_{\mathcal{Y}\mathcal{Y}}(\hat{f}_\btheta, \hat{g}_\btheta, M) 
    := 
    -\frac{1}{M}\sum_{i=1}^M\log\left(\frac{e^{\hat{f}_\btheta(\tilde{\by}_i) \cdot \hat{f}_\btheta(\by_i)/\tau}}{\sum_{j=1}^M e^{\hat{f}_\btheta(\by_j) \cdot \hat{f}_\btheta(\by_i)/\tau}}\right),
    \label{eqn:LYY}
\end{equation}
where a batch of $M$ data-augmentation pairs $\{(\by_i, \tilde{\by}_i)\}_{i=1}^M \overset{\text{i.i.d.}}{\sim} p(\tilde{\by} \mid \by)\,p(\by)$ are sampled in a conditional way.
Similarly to before, $\tau$ is a scale hyperparameter. In our experiments, the same value of $\tau$ is applied to both inter-domain and intra-domain InfoNCE losses.
\section{Simulation-based inference using contrastive learning}
\label{sec:theory}
In this section, we provide a theoretical analysis of our \eande method, linking contrastive representation learning and simulation-based inference.
Our main result (Theorem~\ref{thm:main}) shows that, when trained using the symmetric inter-domain InfoNCE loss with batch size $M\to\infty$, our model $\hat{q}_\theta(\bphi\mid\by)$ converges to the true posterior $p(\bphi\mid\by)$. We also examine how \eande can achieve optimal data compression and deal with non-identifiable parameters.
Finally, we provide intuition for the contrastive latent space $\mathbb{S}^{n-1}$ by studying a synthetic example where contrastive learning learns to exactly reconstruct the generative process.
See Appendix~\ref{apx:proofs} for proofs and additional theoretical analysis, including a similar convergence result derived for the one-sided inter-domain InfoNCE loss (Appendix~\ref{sec:infonceproofs}, Corollary~\ref{cor:single}) and a discussion on using different priors during data generation (both training and validation) and inference (Appendix~\ref{sec:altpriors}).

\subsection{Learning the parameter posterior by optimizing the InfoNCE loss}\label{sec:maintheory}
To prove that the trained \eande model converges to the true posterior, we first assume that the true posterior $p(\bphi\mid\by)$ and likelihood $p(\by\mid\bphi)$ both belong to the exponential family parameterized by sufficiently flexible learnable embedding functions $\hat{f}_\theta: \mathcal{Y} \to \mathbb{S}^{n-1}, \hat{g}_\theta: \Phi \to \mathbb{S}^{n-1}$.

\begin{restatable}{assumption}{assumpflexiblemain}\label{assump:flexiblemain}
There exists $\theta^*$ such that both
\begin{align}
    p(\bphi\mid\by) &=  C_{\Phi\mathcal{Y}}(\by; \theta^*)^{-1}\,e^{\hat{f}_{\theta^*}(\by)\cdot \hat{g}_{\theta^*}(\bphi)/\tau}\,p(\bphi)\label{eqn:assump1}
\intertext{and}
    p(\by\mid\bphi) &=  C_{\mathcal{Y}\Phi}(\bphi; \theta^*)^{-1}\,e^{\hat{f}_{\theta^*}(\by)\cdot \hat{g}_{\theta^*}(\bphi)/\tau}\,p(\by),
\end{align}
where $C_{\Phi\mathcal{Y}}, C_{\mathcal{Y}\Phi}$ are normalization factors.
\end{restatable}

Then, we show that the symmetric inter-domain InfoNCE loss (with batch size $M\to\infty$) bounds the Kullback--Leibler (KL) divergence between the true posterior $p(\bphi\mid\by)$ and the model $\hat{q}_\theta(\bphi\mid\by)$, with $\hat{q}_{\theta^*}(\bphi\mid\by) = p(\bphi\mid\by)$ at the global minimum of the InfoNCE loss. Compared with a similar result for the one-sided inter-domain InfoNCE loss (Appendix~\ref{sec:infonceproofs}, Corollary~\ref{cor:single}), the symmetric form of the InfoNCE loss also ensures that the normalization factor $C(\by) = C^*$ of the trained model $\hat{q}_{\theta^*}(\bphi\mid\by)$ is invariant to different data instances $\by$, improving inference performance \citep{ma2018noise} and diagnostics \citep{miller2022contrastive}.

\begin{restatable}{theorem}{thmmain}\label{thm:main}
The asymptotic symmetric inter-domain InfoNCE loss
\begin{align}
    \overline L_\mathrm{sym}(\hat{f}_\theta,\hat{g}_\theta) := \lim_{M\to\infty} \left[L_\mathrm{sym}(\hat{f}_\theta,\hat{g}_\theta, M) - 2\log M\right]
\end{align}
bounds the KL divergence between the true posterior $p(\bphi\mid\by)$ and the model $\hat{q}_\theta(\bphi\mid\by)$:
\begin{align}
    \E_{\by\sim p(\by)}\left[D_\mathrm{KL}(p(\bphi\mid\by)\,\|\,\hat{q}_\theta(\bphi\mid\by))\right] &= D_\mathrm{KL}(p(\bphi,\by)\,\|\,\hat{q}_\theta(\bphi\mid\by)\,p(\by))\\
    &\le \overline L_\mathrm{sym}(\hat{f}_\theta,\hat{g}_\theta) + 2\,I(\bphi,\by),
\end{align}
where $I(\bphi,\by) := D_\mathrm{KL}(p(\bphi,\by)\,\|\,p(\bphi)\,p(\by))$ is the mutual information between $\bphi$ and $\by$.\\

\noindent Furthermore, given Assumption~\ref{assump:flexiblemain}, the global minimum is
\begin{align}
    \min_\theta \overline L_\mathrm{sym}(\hat{f}_\theta,\hat{g}_\theta) = -2\,I(\bphi,\by),
\end{align}
and, for any global minimizer $\theta^* \in \argmin_{\theta}  \overline L_\mathrm{sym}(\hat{f}_\theta,\hat{g}_\theta)$, the model posterior
\begin{align}
    \hat{q}_{\theta^*}(\bphi\mid\by) := \hat{r}_{\theta^*}(\bphi,\by)\,p(\bphi) = p(\bphi\mid\by)
\end{align}
and model likelihood-to-evidence ratio
\begin{align}
    \hat{r}_{\theta^*}(\bphi,\by) = C^{*-1}\,e^{\hat{f}_{\theta^*}(\by)\cdot \hat{g}_{\theta^*}(\bphi)/\tau} = r(\bphi,\by),
\end{align}
where $C^*$ is a normalization constant that does not vary with $\by$.
\end{restatable}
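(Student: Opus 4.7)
The plan is to reduce the asymptotic symmetric InfoNCE loss $\overline L_\mathrm{sym}$ to a sum of two nonnegative KL divergences shifted by $-2I(\bphi,\by)$, and then use Assumption~\ref{assump:flexiblemain} to characterize the global minimizer. I would start by taking the $M\to\infty$ limit of $L_{\Phi\mathcal{Y}}$. The denominator $\frac{1}{M}\sum_{j=1}^M e^{\hat{f}_\theta(\by_i)\cdot\hat{g}_\theta(\bphi_j)/\tau}$ is an empirical mean of a uniformly bounded random variable (the exponent lies in $[-1/\tau, 1/\tau]$ since $\hat{f}_\theta,\hat{g}_\theta$ land on the unit hypersphere), so the strong law of large numbers together with dominated convergence lets me push the limit inside the outer expectation. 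After subtracting $\log M$, I obtain
\[
\overline L_{\Phi\mathcal{Y}}(\hat{f}_\theta,\hat{g}_\theta) = -\E_{p(\bphi,\by)}\!\left[\hat{f}_\theta(\by)\cdot\hat{g}_\theta(\bphi)/\tau - \log Z_\Phi(\by)\right],
\]
with $Z_\Phi(\by) := \E_{\bphi'\sim p(\bphi)}[e^{\hat{f}_\theta(\by)\cdot\hat{g}_\theta(\bphi')/\tau}]$.

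Next I would identify $C(\by) = Z_\Phi(\by)$ so that $\hat{q}_\theta(\bphi\mid\by)$ is normalized as a conditional density over $\bphi$. Adding and subtracting $\log p(\bphi\mid\by)$ and $\log p(\bphi)$ inside the expectation, and recognizing $\E_{p(\bphi,\by)}[\log(p(\bphi\mid\by)/p(\bphi))] = I(\bphi,\by)$, rearranges the expression into
\[
\overline L_{\Phi\mathcal{Y}}(\hat{f}_\theta,\hat{g}_\theta) = D_\mathrm{KL}\!\left(p(\bphi,\by)\,\|\,\hat{q}_\theta(\bphi\mid\by)\,p(\by)\right) - I(\bphi,\by).
\]
Swapping the roles of $\bphi$ and $\by$, an identical argument with $\tilde{q}_\theta(\by\mid\bphi) \propto e^{\hat{f}_\theta(\by)\cdot\hat{g}_\theta(\bphi)/\tau}\,p(\by)$ gives the symmetric companion
\[
\overline L_{\mathcal{Y}\Phi}(\hat{f}_\theta,\hat{g}_\theta) = D_\mathrm{KL}\!\left(p(\bphi,\by)\,\|\,\tilde{q}_\theta(\by\mid\bphi)\,p(\bphi)\right) - I(\bphi,\by).
\]
Summing yields $\overline L_\mathrm{sym} = D_1 + D_2 - 2I(\bphi,\by)$ with $D_1, D_2 \ge 0$, and the KL inequality in the theorem follows by dropping the nonnegative $D_2$.

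Finally, I would establish the value of the minimum and characterize the minimizers. Since both KL terms are nonnegative, $\overline L_\mathrm{sym} \ge -2I(\bphi,\by)$, and Assumption~\ref{assump:flexiblemain} exhibits a $\theta^*$ where both divergences vanish simultaneously: equation (\ref{eqn:assump1}) forces $\hat{q}_{\theta^*}(\bphi\mid\by) = p(\bphi\mid\by)$, and the companion equation does the same for $\tilde{q}_{\theta^*}(\by\mid\bphi)$. Conversely, Gibbs' inequality implies any global minimizer must zero both KL terms, pinning down $\hat{q}_{\theta^*}(\bphi\mid\by) = p(\bphi\mid\by)$ and hence $\hat{r}_{\theta^*} = r$ almost everywhere. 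To show $C^*$ does not vary with $\by$, I would substitute Assumption~\ref{assump:flexiblemain} into the identity $p(\bphi\mid\by)\,p(\by) = p(\by\mid\bphi)\,p(\bphi)$: the common factor $e^{\hat{f}_{\theta^*}(\by)\cdot\hat{g}_{\theta^*}(\bphi)/\tau}\,p(\bphi)\,p(\by)$ cancels, leaving $C_{\Phi\mathcal{Y}}(\by;\theta^*) = C_{\mathcal{Y}\Phi}(\bphi;\theta^*)$ on the support of $(\bphi,\by)$. Since the left depends only on $\by$ and the right only on $\bphi$, both must equal a common constant $C^*$.

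The step I expect to be the main obstacle is the algebraic bookkeeping in the second paragraph: the model normalization $C(\by)$ is defined implicitly and must be identified with the InfoNCE denominator's limit $Z_\Phi(\by)$ --- the subtraction of $2\log M$ in the definition of $\overline L_\mathrm{sym}$ is essentially engineered to make this identification work cleanly, but one has to be careful not to conflate $C(\by)$ with the final constant $C^*$. The key conceptual point is that it is specifically the \emph{symmetric} loss, not either one-sided loss alone, that lets Bayes' rule collapse $C(\by)$ to a true constant; this emerges from the Bayes consistency argument in the last step rather than from the loss computation itself.
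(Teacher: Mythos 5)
Your proof is correct and follows essentially the same route as the paper: decompose each asymptotic one-sided InfoNCE loss as a KL divergence minus the mutual information (the paper's Lemma on asymptotics plus its cross-entropy lemma), sum to get $\overline L_\mathrm{sym}=D_1+D_2-2I(\bphi,\by)$, lower-bound by dropping the nonnegative terms, and then use Assumption~\ref{assump:flexiblemain} to exhibit and characterize the minimizers; the Bayes-rule cancellation you use to show $C_{\Phi\mathcal{Y}}(\by)=C_{\mathcal{Y}\Phi}(\bphi)=C^*$ is exactly the paper's separate lemma on the normalization constant. One small precision note: when arguing that $C^*$ is constant, you should invoke the two equalities $\hat{q}_{\theta^*}(\bphi\mid\by)=p(\bphi\mid\by)$ and $\hat{q}_{\theta^*}(\by\mid\bphi)=p(\by\mid\bphi)$ that follow from zeroing both KL terms at any global minimizer (so that the argument covers every minimizer, not just the one exhibited by Assumption~\ref{assump:flexiblemain}), but this is a phrasing matter rather than a gap.
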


Because the trained model posterior $\hat{q}_{\theta^*}(\bphi\mid\by)$ has an exponential family form, the resulting data embedding $\hat{f}_{\theta^*}(\by)$ is a sufficient statistic for the parameters $\bphi$, which matches our intuition.

\begin{restatable}{corollary}{corsufficientstatisticmain}\label{cor:sufficientstatisticmain}
If $\by\sim G(\bphi)$, then $\hat{f}_{\theta^*}(\by)$ is a sufficient statistic for $\bphi$.
\end{restatable}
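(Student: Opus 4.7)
The plan is to invoke the Fisher--Neyman factorization theorem using the exponential-family form of the likelihood guaranteed by Theorem~\ref{thm:main}. Recall that $T(\by)$ is a sufficient statistic for $\bphi$ if and only if the likelihood admits a factorization $p(\by\mid\bphi) = h(\by)\,g(T(\by),\bphi)$ where $h$ does not depend on $\bphi$ and $g$ depends on $\by$ only through $T(\by)$.

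First I would extract the likelihood at the optimum. By Theorem~\ref{thm:main}, at any global minimizer $\theta^*$ we have $\hat{r}_{\theta^*}(\bphi,\by) = r(\bphi,\by) = p(\by\mid\bphi)/p(\by)$, so
\begin{equation*}
    p(\by\mid\bphi) = C^{*-1}\,e^{\hat{f}_{\theta^*}(\by)\cdot \hat{g}_{\theta^*}(\bphi)/\tau}\,p(\by),
\end{equation*}
with $C^*$ a constant independent of $\by$ (and independent of $\bphi$, since it arises as the shared normalization factor in the symmetric case).

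Then I would simply identify the factors in the Fisher--Neyman form. Setting $T(\by) := \hat{f}_{\theta^*}(\by)$, the factor $h(\by) := C^{*-1}\,p(\by)$ has no dependence on $\bphi$, and the factor $g(T,\bphi) := \exp(T\cdot \hat{g}_{\theta^*}(\bphi)/\tau)$ depends on $\by$ only through $T(\by)$. This is exactly the factorization criterion, so $\hat{f}_{\theta^*}(\by)$ is sufficient for $\bphi$.

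The only subtlety I expect is making sure the constant $C^*$ is genuinely independent of both $\by$ and $\bphi$; Theorem~\ref{thm:main} explicitly gives $\by$-independence, and the symmetric form of the loss together with Assumption~\ref{assump:flexiblemain} (which requires the same exponential-family kernel to represent both $p(\bphi\mid\by)$ and $p(\by\mid\bphi)$) pins down $\bphi$-independence as well. Once this is noted, the corollary follows immediately from factorization; no further calculation is required.
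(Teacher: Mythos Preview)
Your proposal is correct and follows essentially the same approach as the paper: write the likelihood in the exponential-family form given by Theorem~\ref{thm:main} and apply the Fisher--Neyman factorization theorem. The paper additionally notes that even when the normalization factor is allowed to depend on $\by$ (as in the one-sided case of Corollary~\ref{cor:single}), it depends only through $\hat{f}_{\theta^*}(\by)$, so the factorization---and also Bayesian sufficiency $p(\bphi\mid\by)=p(\bphi\mid\hat{f}_{\theta^*}(\by))$---still holds; but your argument via the constant $C^*$ is sufficient for the stated corollary.
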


\subsection{Optimal data compression and non-identifiable parameters}
\label{sec:non-identifiable}
\eande is designed to handle high-dimensional data and non-identifiable parameters, which result in complex multimodal posteriors, by learning a summary statistic $\hat{f}_{\theta^*}$ and a latent emulator $\hat{g}_{\theta^*}$. To examine the properties of these learned embeddings, consider the following general result for any sufficient statistic $S(\by)$ and any reparameterization $\Pi(\bphi)$.

\begin{restatable}{theorem}{thmembeddings}\label{thm:embeddings}
Assuming the Theorem 1 conditions hold, for surjective maps $S: \mathcal{Y} \to \mathcal{M}$ and $\Pi: \Phi \to \Psi$ such that
\begin{align}
    p(\bphi\mid\by) &= p(\bphi\mid S(\by))\\
    p(\by\mid\bphi) &= p(\by\mid \Pi(\bphi)),
\end{align}
there exist $\hat{f}_\mathcal{M}:\mathcal{M}\to\mathbb{S}^{n-1}$ and $\hat{g}_\Psi:\Psi\to\mathbb{S}^{n-1}$ such that, for all $\by \in \mathcal{Y},\bphi \in \Phi$,
\begin{align}
    \hat{f}_\mathcal{M}(S(\by))\cdot\hat{g}_\Psi(\Pi(\bphi)) = \hat{f}_{\theta^*}(\by)\cdot\hat{g}_{\theta^*}(\bphi).
\end{align}
Therefore, the likelihood-to-evidence ratio
\begin{align}
    r(\bphi,\by) = C^{*-1}\,e^{\hat{f}_\mathcal{M}(S(\by))\cdot \hat{g}_\Psi(\Pi(\bphi))/\tau}.
\end{align}
\end{restatable}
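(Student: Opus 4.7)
The plan is to exploit Theorem~\ref{thm:main}, which identifies the inner product $\hat{f}_{\theta^*}(\by)\cdot\hat{g}_{\theta^*}(\bphi)$ with (a monotone function of) the true likelihood-to-evidence ratio $r(\bphi,\by)$, and then observe that under the stated hypotheses $r$ factors through $S$ and $\Pi$. Concretely, I would proceed as follows.

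First, I would show that $r(\bphi,\by)$ depends on $\by$ only through $S(\by)$ and on $\bphi$ only through $\Pi(\bphi)$. Using $r(\bphi,\by) = p(\bphi\mid\by)/p(\bphi)$ together with the hypothesis $p(\bphi\mid\by) = p(\bphi\mid S(\by))$ gives the first invariance; using $r(\bphi,\by) = p(\by\mid\bphi)/p(\by)$ together with $p(\by\mid\bphi) = p(\by\mid\Pi(\bphi))$ gives the second. Hence there exists a function $\tilde r:\Psi\times\mathcal{M}\to\mathbb{R}_{>0}$ with $r(\bphi,\by) = \tilde r(\Pi(\bphi), S(\by))$.

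Next, by Theorem~\ref{thm:main} (which applies because Assumption~\ref{assump:flexiblemain} is part of the Theorem 1 conditions), we have
\begin{equation}
\hat{f}_{\theta^*}(\by)\cdot\hat{g}_{\theta^*}(\bphi) = \tau\bigl(\log r(\bphi,\by) + \log C^*\bigr) = \tau\bigl(\log \tilde r(\Pi(\bphi),S(\by)) + \log C^*\bigr),
\end{equation}
so the inner product itself depends on $(\bphi,\by)$ only through $(\Pi(\bphi),S(\by))$. In particular, whenever $S(\by_1) = S(\by_2)$ we have $\hat{f}_{\theta^*}(\by_1)\cdot\hat{g}_{\theta^*}(\bphi) = \hat{f}_{\theta^*}(\by_2)\cdot\hat{g}_{\theta^*}(\bphi)$ for all $\bphi$, and symmetrically for $\Pi$.

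To build $\hat{f}_\mathcal{M}$ and $\hat{g}_\Psi$, I would use surjectivity to choose (via the axiom of choice) right inverses $\sigma_S:\mathcal{M}\to\mathcal{Y}$ and $\sigma_\Pi:\Psi\to\Phi$ satisfying $S\circ\sigma_S = \mathrm{id}_\mathcal{M}$ and $\Pi\circ\sigma_\Pi = \mathrm{id}_\Psi$, and then set
\begin{equation}
\hat{f}_\mathcal{M} := \hat{f}_{\theta^*}\circ\sigma_S, \qquad \hat{g}_\Psi := \hat{g}_{\theta^*}\circ\sigma_\Pi.
\end{equation}
Both land in $\mathbb{S}^{n-1}$ since $\hat{f}_{\theta^*},\hat{g}_{\theta^*}$ do. Applying the invariance from the previous step with the pair $(\sigma_S(S(\by)),\sigma_\Pi(\Pi(\bphi)))$, which agrees with $(\by,\bphi)$ under $(S,\Pi)$, yields
\begin{equation}
\hat{f}_\mathcal{M}(S(\by))\cdot\hat{g}_\Psi(\Pi(\bphi)) = \hat{f}_{\theta^*}(\sigma_S(S(\by)))\cdot\hat{g}_{\theta^*}(\sigma_\Pi(\Pi(\bphi))) = \hat{f}_{\theta^*}(\by)\cdot\hat{g}_{\theta^*}(\bphi).
\end{equation}
Plugging this into the expression from Theorem~\ref{thm:main} gives the claimed formula $r(\bphi,\by) = C^{*-1}\,e^{\hat{f}_\mathcal{M}(S(\by))\cdot\hat{g}_\Psi(\Pi(\bphi))/\tau}$.

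The main conceptual point — and the only thing to be careful about — is the invariance step: $\hat{f}_{\theta^*}$ itself need not be a function of $S(\by)$ (two distinct points on $\mathbb{S}^{n-1}$ can share every inner product with the image of $\hat{g}_{\theta^*}$ only if that image lies in a proper subspace), so I cannot hope to define $\hat{f}_\mathcal{M}(m)$ as "the common value of $\hat{f}_{\theta^*}(\by)$ over $\by\in S^{-1}(m)$." The section construction is the right move precisely because we only need the composition $\hat{f}_\mathcal{M}\circ S$ to reproduce the correct inner products with $\hat{g}_\Psi\circ\Pi$, not to recover $\hat{f}_{\theta^*}$ pointwise. No regularity of $\hat{f}_\mathcal{M},\hat{g}_\Psi$ is asserted, so the use of abstract sections suffices for the statement as given.
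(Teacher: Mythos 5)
Your proposal is correct and follows essentially the same route as the paper's proof: factor $r$ through $(S,\Pi)$ using the two conditional-independence hypotheses, invoke Theorem~\ref{thm:main} to transfer this invariance to the inner product $\hat{f}_{\theta^*}(\by)\cdot\hat{g}_{\theta^*}(\bphi)$, and then define $\hat{f}_\mathcal{M},\hat{g}_\Psi$ by composing $\hat{f}_{\theta^*},\hat{g}_{\theta^*}$ with right inverses of $S$ and $\Pi$. Your closing remark about why only the composed inner products (not $\hat{f}_{\theta^*}$ pointwise) need to factor is a sound clarification consistent with the paper's argument.
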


\noindent\textbf{Optimal data compression.} To deal with high-dimensional data, \eande compresses the data using the learned summary statistic $\hat{f}_{\theta^*}$, which we previously showed is a sufficient statistic for parameter estimation (Corollary~\ref{cor:sufficientstatisticmain}). Taking $S(\by)$ to be a \emph{minimal} sufficient statistic for $\bphi$ and $\Pi(\bphi) = \bphi$ to be the identity, we can construct an embedding function $\hat{f}_\mathrm{opt}:\mathcal{Y}\to\mathbb{S}^{n-1}$ that is a minimal sufficient statistic for $\bphi$ and also a drop-in replacement for $\hat{f}_{\theta^*}$. In other words, $\hat{f}_\mathrm{opt}$ is an encoder that achieves optimal data compression for parameter estimation.

\begin{restatable}{corollary}{corminimal}\label{cor:minimal}
There exists $\hat{f}_\mathrm{opt}:\mathcal{Y}\to\mathbb{S}^{n-1}$ such that
\begin{enumerate}[(i)]
    \item for $\by \sim G(\bphi)$, $\hat{f}_\mathrm{opt}(\by)$ is a minimal sufficient statistic for $\bphi$, and
    \item the likelihood-to-evidence ratio $r(\bphi,\by) = C^{*-1}\,e^{\hat{f}_\mathrm{opt}(\by)\cdot \hat{g}_{\theta^*}(\bphi)/\tau}$.
\end{enumerate}
\end{restatable}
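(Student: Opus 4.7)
The strategy is to apply Theorem~\ref{thm:embeddings} to a carefully chosen pair $(S,\Pi)$. I would take $S:\mathcal{Y}\to\mathcal{M}$ to be a minimal sufficient statistic for $\bphi$ (its existence follows from mild regularity, e.g.\ a Polish sample space with full-support prior) and set $\Pi = \mathrm{id}_\Phi$, so that $p(\bphi\mid\by) = p(\bphi\mid S(\by))$ holds by sufficiency of $S$ and $p(\by\mid\bphi) = p(\by\mid\Pi(\bphi))$ is trivial. Theorem~\ref{thm:embeddings} then provides $\hat{f}_\mathcal{M}:\mathcal{M}\to\mathbb{S}^{n-1}$ and $\hat{g}_\Phi:\Phi\to\mathbb{S}^{n-1}$ with $\hat{f}_\mathcal{M}(S(\by))\cdot\hat{g}_\Phi(\bphi) = \hat{f}_{\theta^*}(\by)\cdot\hat{g}_{\theta^*}(\bphi)$. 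I would then define the candidate encoder by
\begin{equation*}
    \hat{f}_\mathrm{opt}(\by) := \hat{f}_\mathcal{M}(S(\by)).
\end{equation*}

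Claim (ii) follows immediately: the expression for $r(\bphi,\by)$ supplied by Theorem~\ref{thm:main} combined with the inner-product identity above gives
\begin{equation*}
    r(\bphi,\by) = C^{*-1}\, e^{\hat{f}_{\theta^*}(\by)\cdot \hat{g}_{\theta^*}(\bphi)/\tau} = C^{*-1}\, e^{\hat{f}_\mathrm{opt}(\by)\cdot \hat{g}_{\theta^*}(\bphi)/\tau}.
\end{equation*}

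For claim (i), I would show that the level sets of $\hat{f}_\mathrm{opt}$ coincide exactly with those of $S$, which is enough to conclude minimal sufficiency. By construction $\hat{f}_\mathrm{opt}$ is a function of $S$, so its partition of $\mathcal{Y}$ is no finer than the one induced by $S$. Conversely, if $\hat{f}_\mathrm{opt}(\by_1) = \hat{f}_\mathrm{opt}(\by_2)$, then (ii) forces $r(\bphi,\by_1) = r(\bphi,\by_2)$ for every $\bphi$, hence $p(\bphi\mid\by_1) = p(\bphi\mid\by_2)$; applying Bayes' rule then yields $p(\by_1\mid\bphi)/p(\by_2\mid\bphi) = p(\by_1)/p(\by_2)$, which is $\bphi$-independent. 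This is exactly the equivalence relation characterizing the minimal sufficient partition, so $S(\by_1) = S(\by_2)$. Thus $\hat{f}_\mathrm{opt}$ and $S$ induce the same partition of $\mathcal{Y}$, and since $S$ is minimal sufficient, so is $\hat{f}_\mathrm{opt}$.

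The main obstacle is not any single calculation but the careful packaging of the final step: translating equality of posteriors into the Bayes-ratio characterization of minimal sufficiency (which implicitly needs $p(\bphi)$ to have full support on $\Phi$), together with stating the mild regularity assumptions under which a minimal sufficient statistic $S$ is guaranteed to exist. Once those conditions are in place, the argument reduces cleanly to an invocation of Theorem~\ref{thm:embeddings} plus the partition-matching observation above.
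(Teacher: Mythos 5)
Your proposal matches the paper's proof in its essential structure: the same invocation of Theorem~\ref{thm:embeddings} with $S$ a minimal sufficient statistic and $\Pi = \mathrm{id}_\Phi$, and the same definition $\hat{f}_\mathrm{opt} := \hat{f}_\mathcal{M}\circ S$, from which claim~(ii) is immediate. The only (minor) difference is in how you close out claim~(i): the paper notes that $\hat{f}_\mathrm{opt}$ is sufficient via the exponential-family factorization and then uses the standard fact that a sufficient statistic which is a function of a minimal sufficient statistic is itself minimal, whereas you spell out the equivalent partition-matching argument (showing the level sets of $\hat{f}_\mathrm{opt}$ and $S$ coincide via the Bayes-ratio characterization); both are correct and essentially interchangeable.
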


This illustrates how \eande can, in principle, learn an optimally compressed summary statistic that only retains features influencing parameter estimation, discarding all irrelevant information in the data. While this property is not guaranteed by our optimization objective, we can encourage stronger compression by choosing a smaller latent space dimension $n$, forcing the embedding to perform dimensionality reduction.

\vspace{5pt}
\noindent\textbf{Non-identifiable parameters.} For complex multimodal data distributions, there are often sets of parameters $\Phi_\mathrm{eq}$ that all yield identical data distributions, i.e., $G(\bphi)$ is the same for any $\bphi \in \Phi_\mathrm{eq} \subseteq \Phi$. We can partition the parameter space $\Phi$ into disjoint sets $\Phi_\mathrm{eq}^{(\boldsymbol{\psi})}$ parameterized by $\boldsymbol{\psi} \in \Psi$. For example, if we take each $\Phi_\mathrm{eq}^{(\boldsymbol{\psi})}$ to an equivalence class defined by having the same data distribution $G(\bphi)$, then $\Psi \cong \Phi / {\sim_G}$ is the quotient space after modding out by the data distribution equivalence relation $\bphi_1 \sim_G \bphi_2 \Leftrightarrow G(\bphi_1) = G(\bphi_2)$.

Taking $S(\by) = \by$ to be the identity and $\Pi(\bphi) = \boldsymbol{\psi} : \bphi \in \Phi_\mathrm{eq}^{(\boldsymbol{\psi})}$ to be the projection map $\Pi:\Phi \to \Psi$, we can construct a latent emulator $\hat{g}_\mathrm{eff}: \Phi \to \mathbb{S}^{n-1}$ that only depends on $\bphi$ through the effective parameters $\boldsymbol{\psi} = \Pi(\bphi)$ and is a drop-in replacement for $\hat{g}_{\theta^*}$. In other words, $\hat{g}_\mathrm{eff}$ deals with non-identifiable parameters by ignoring variations of $\bphi$ within the non-identifiable sets $\Phi_\mathrm{eq}^{(\boldsymbol{\psi})}$.

\begin{restatable}{corollary}{corredundant}\label{cor:redundant}
Consider any partition of parameter space
\begin{align}
    \Phi = \bigsqcup_{\boldsymbol{\psi}\in\Psi} \Phi_\mathrm{eq}^{(\boldsymbol{\psi})}
\end{align}
where $\forall\boldsymbol{\psi}\in\Psi$, $\forall\bphi_1,\bphi_2 \in \Phi_\mathrm{eq}^{(\boldsymbol{\psi})}$, the data distributions $G(\bphi_1) = G(\bphi_2)$. Let $\Pi: \Phi \to \Psi$ be the projection operator $\bphi \mapsto \boldsymbol{\psi} : \bphi \in \Phi_\mathrm{eq}^{(\boldsymbol{\psi})}$ associated with this partition. Then, there exists $\hat{g}_\mathrm{eff}: \Phi \to \mathbb{S}^{n-1}$ such that
\begin{enumerate}[(i)]
    \item  $\hat{g}_\mathrm{eff} = \hat{g}_\Psi \circ \Pi$ for some $\hat{g}_\Psi: \Psi\to\mathbb{S}^{n-1}$, and
    \item the likelihood-to-evidence ratio $r(\bphi,\by) = C^{*-1}\,e^{\hat{f}_{\theta^*}(\by)\cdot \hat{g}_\mathrm{eff}(\bphi)/\tau}$.
\end{enumerate}
\end{restatable}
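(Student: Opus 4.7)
My plan is to derive the corollary as an application of Theorem \ref{thm:embeddings} with $S(\by) = \by$ taken to be the identity on $\mathcal{Y}$ and $\Pi:\Phi\to\Psi$ the partition-projection $\bphi \mapsto \boldsymbol{\psi}$, where $\boldsymbol{\psi}$ is the unique index with $\bphi \in \Phi_\mathrm{eq}^{(\boldsymbol{\psi})}$.

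First I would verify the hypotheses of Theorem \ref{thm:embeddings}. Both $S$ and $\Pi$ are surjective: $S$ trivially, and $\Pi$ because $\{\Phi_\mathrm{eq}^{(\boldsymbol{\psi})}\}_{\boldsymbol{\psi}\in\Psi}$ is a partition of $\Phi$, so no fiber is empty. The equality $p(\bphi\mid\by) = p(\bphi\mid S(\by))$ is immediate. The nontrivial condition $p(\by\mid\bphi) = p(\by\mid\Pi(\bphi))$ is exactly the content of the hypothesis: if $\bphi_1,\bphi_2$ lie in the same fiber $\Phi_\mathrm{eq}^{(\boldsymbol{\psi})}$, then $G(\bphi_1) = G(\bphi_2)$, so $p(\by\mid\bphi)$ is constant on fibers of $\Pi$ and hence factors through $\Pi$. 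Applying Theorem \ref{thm:embeddings} supplies a map $\hat{g}_\Psi:\Psi\to\mathbb{S}^{n-1}$, and defining $\hat{g}_\mathrm{eff} := \hat{g}_\Psi \circ \Pi$ immediately establishes (i).

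For (ii) the mild subtlety is that Theorem \ref{thm:embeddings} produces a factorization using some $\hat{f}_\mathcal{Y}:\mathcal{Y}\to\mathbb{S}^{n-1}$ that need not coincide with $\hat{f}_{\theta^*}$, whereas the statement of the corollary requires $\hat{f}_{\theta^*}$ itself to appear in the exponent. The cleanest way around this is to bypass Theorem \ref{thm:embeddings} at this last step and build $\hat{g}_\mathrm{eff}$ directly. Fix a choice of representatives $\bphi_{\boldsymbol{\psi}} \in \Phi_\mathrm{eq}^{(\boldsymbol{\psi})}$ for each $\boldsymbol{\psi}\in\Psi$, and set $\hat{g}_\mathrm{eff}(\bphi) := \hat{g}_{\theta^*}(\bphi_{\Pi(\bphi)})$. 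This map factors as $\hat{g}_\Psi\circ\Pi$ with $\hat{g}_\Psi(\boldsymbol{\psi}) := \hat{g}_{\theta^*}(\bphi_{\boldsymbol{\psi}})$, lands in $\mathbb{S}^{n-1}$ (because $\hat{g}_{\theta^*}$ does), and therefore satisfies (i).

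To conclude (ii), I would note that $\bphi$ and $\bphi_{\Pi(\bphi)}$ lie in the same fiber, so they share the same likelihood, and hence the same likelihood-to-evidence ratio: $r(\bphi,\by) = r(\bphi_{\Pi(\bphi)},\by)$ for every $\by$. Applying the ratio formula from Theorem \ref{thm:main} to the representative gives
\begin{align}
    r(\bphi,\by) = r(\bphi_{\Pi(\bphi)},\by) = C^{*-1}\,e^{\hat{f}_{\theta^*}(\by)\cdot\hat{g}_{\theta^*}(\bphi_{\Pi(\bphi)})/\tau} = C^{*-1}\,e^{\hat{f}_{\theta^*}(\by)\cdot\hat{g}_\mathrm{eff}(\bphi)/\tau},
\end{align}
which is (ii). I do not expect a serious obstacle: the only nontrivial ingredient is translating the hypothesis ``$G$ is constant on each $\Phi_\mathrm{eq}^{(\boldsymbol{\psi})}$'' into the fiber-constancy of the likelihood (hence of the ratio), after which everything collapses into the Theorem \ref{thm:main} formula applied to a representative.
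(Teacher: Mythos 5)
Your proposal is correct, and its first half is exactly the paper's route: the paper proves this corollary by applying Theorem~\ref{thm:embeddings} with $S=\mathrm{id}_{\mathcal{Y}}$ and $\Pi$ the partition projection, then setting $\hat{g}_\mathrm{eff}:=\hat{g}_\Psi\circ\Pi$ and reading off the ratio formula. Where you diverge is the last step: the paper treats the appearance of $\hat{f}_{\theta^*}$ in the exponent as immediate, which is justified only by looking inside the proof of Theorem~\ref{thm:embeddings}, where $\hat{f}_\mathcal{M}:=\hat{f}_{\theta^*}\circ S^\dagger$ reduces to $\hat{f}_{\theta^*}$ when $S$ is the identity (the statement of the theorem alone only gives the exponent $\hat{f}_\mathcal{M}(S(\by))\cdot\hat{g}_\Psi(\Pi(\bphi))$, as you correctly note). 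Your fix—choosing representatives $\bphi_{\boldsymbol{\psi}}$, defining $\hat{g}_\mathrm{eff}(\bphi):=\hat{g}_{\theta^*}(\bphi_{\Pi(\bphi)})$, and combining fiber-constancy of $r$ with the ratio formula from Theorem~\ref{thm:main}—is precisely the specialization of the theorem's internal construction (your representatives play the role of the right-inverse $\Pi^\dagger$, and your $\hat{g}_\Psi=\hat{g}_{\theta^*}\circ\Pi^\dagger$), so in substance the two arguments coincide. What your version buys is a self-contained derivation that makes explicit why $\hat{f}_{\theta^*}$ itself survives in the exponent and that only Theorem~\ref{thm:main} is really needed for part (ii); what the paper's version buys is brevity and uniformity with the companion Corollary~\ref{cor:minimal}, both being one-line instantiations of the same general factorization theorem.
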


As a special case, consider parameters $\bphi = (\bphi_{\mathrm{R}},\boldsymbol{\psi}) \in \Phi_\mathrm{R} \times \Psi$ that decompose into redundant parameters $\bphi_{\mathrm{R}} \in \Phi_\mathrm{R}$ that have no impact on the data generating process $G$, and effective parameters $\boldsymbol{\psi} \in \Psi$ that determine the distribution of $G$. In this setting, \eande can, in principle, learn a latent emulator $\hat{g}_\mathrm{eff}: \Phi \to \mathbb{S}^{n-1}$ that does not vary with the redundant parameters $\bphi_R$. We test this result in our experiments and find that the learned emulators do, in fact, learn to ignore redundant parameters (Figure~\ref{fig:posterior-vmf}).

\subsection{Understanding the learned latent space}
\label{sec:understanding}

\begin{figure}[t!]
    \centering
    \includegraphics[width=0.8\linewidth]{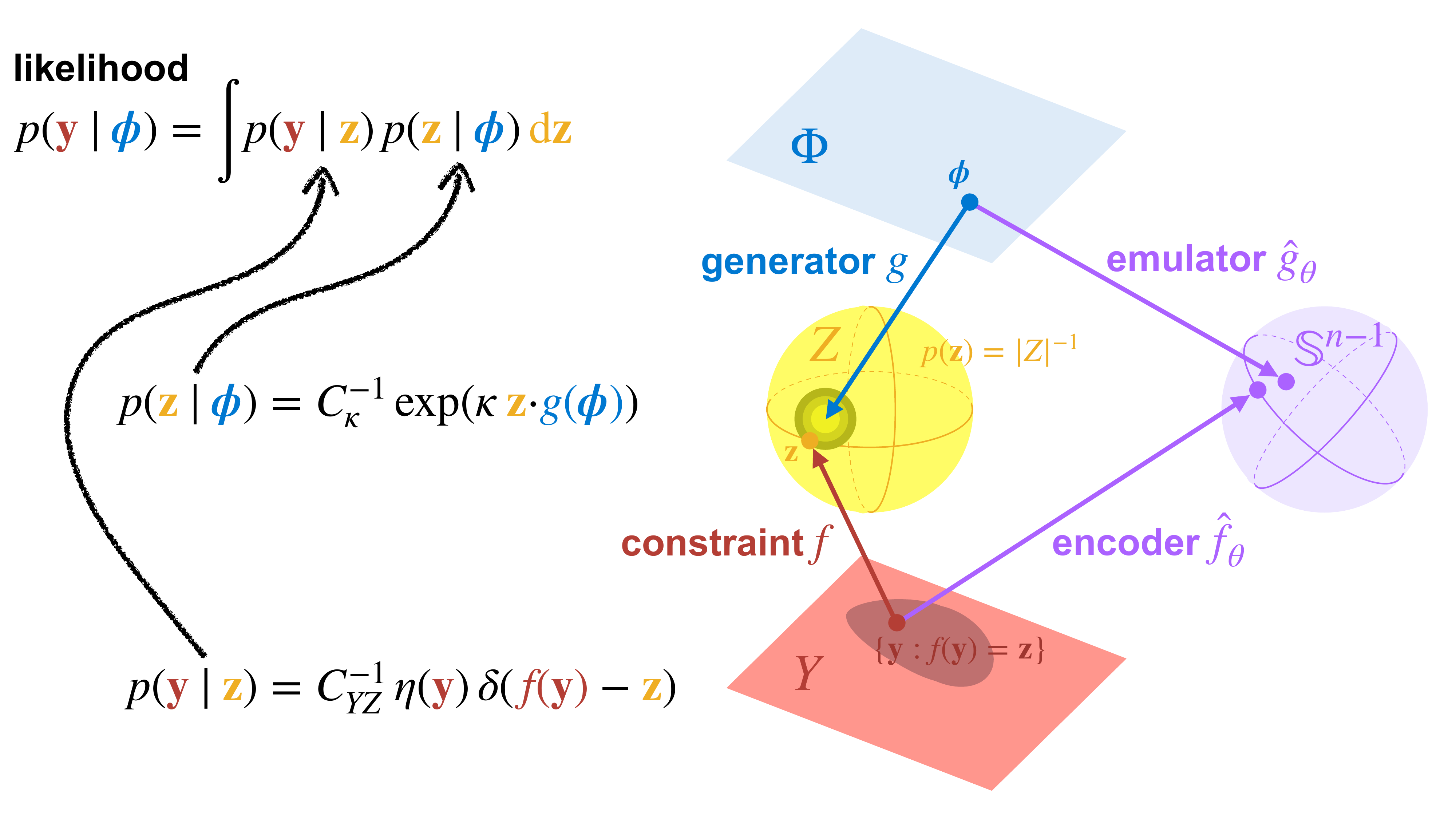}
    \caption{\textbf{Diagram of generative process model with latent space $\mathcal{Z}$ reconstructed by the learned \eande embedding space $\mathbb{S}^{n-1}$.} The generative process $p(\by\mid\bphi)$ described in Definition~\ref{def:toymodel} has a structured latent space $\mathcal{Z}$ defined by a constraint function $f:\mathcal{Y}\to\mathcal{Z}$ and a generator $g:\Phi\to\mathcal{Z}$. After training, the learned \eande embeddings $\hat{f}_\theta: \mathcal{Y}\to\mathbb{S}^{n-1}$, $\hat{g}_\theta: \Phi\to\mathbb{S}^{n-1}$ exactly reconstruct $f, g$ up to a rotation of the latent space (Theorem~\ref{thm:rotation}).}
    \label{fig:toy_task_diagram}
\end{figure}

To better understand the \eande method, we analyze an example where we can explicitly describe the discovered latent space. This example formalizes our intuition that \eande learns to reconstruct a latent generative process characterized by an embedding function $\hat{f}_{\theta^*}$, which is a sufficient statistic, and a latent emulator $\hat{g}_{\theta^*}$ for the learned statistic.

First, we define a generative process $\Phi \to \mathcal{Z} \to \mathcal{Y}$, from parameters $\Phi$ to latent space $\mathcal{Z} := \mathbb{S}^{n-1}$ to data $\mathcal{Y}$, defined in terms of a constraint function $f:\mathcal{Y}\to\mathcal{Z}$ and a generating function $g: \Phi\to\mathcal{Z}$. The generator $g$ maps the parameters $\bphi\in\Phi$ to the latent space $\mathcal{Z}$ with some von Mises--Fisher noise $\bz \sim p(\bz\mid\bphi) \propto e^{\kappa\,\bz\cdot g(\bphi)}$, and the constraint $f$ describes the manifold on which the data $\by\in\mathcal{Y}$ is sampled conditional on the latent parameter $\bz \in \mathcal{Z}$.

\begin{definition}\label{def:toymodel}
Consider a model for generating data $\by \in \mathcal{Y}$ given parameters $\bphi \in \Phi$ with an intermediate latent parameter $\mathbf{z} \in \mathcal{Z} := \mathbb{S}^{n-1}$ such that
\begin{align}\label{eqn:toymodel}
    p(\by\mid \bphi) &= \int_\mathcal{Z} p(\by\mid\bz)\,p(\bz\mid\bphi)\,\mathrm{d}\bz
\end{align}
with the following conditions:
\begin{enumerate}[(i)]
    \item $p(\bz\mid\bphi) = C_\kappa^{-1}e^{\kappa\,\bz\cdot g(\bphi)}$ is a von Mises--Fisher distribution, where $g: \Phi\to\mathcal{Z}$ is a generating function, and $C_\kappa$ is a normalization constant;
    \item $p(\by\mid\bz) = C_{\mathcal{Y}\mathcal{Z}}^{-1}\,\eta(\by)\,\delta(f(\by) - \bz)$, where $f:\mathcal{Y}\to\mathcal{Z}$ is a constraint function, $\eta(\by) \ge 0$ is a (normalized) probability distribution, $\delta(\cdot)$ is the delta distribution, and $C_{\mathcal{Y}\mathcal{Z}}$ is a normalization constant;
    \item $p(\bz) = |\mathcal{Z}|^{-1}$, i.e., the latent parameter $\bz$ has a uniform marginal distribution on the unit hypersphere $\mathbb{S}^{n-1}$ with surface area $|\mathcal{Z}| = 2\pi ^{n/2}/\Gamma(n/2)$.
\end{enumerate}
\end{definition}

Then, we show that the learned $\hat{f}_{\theta^*},\hat{g}_{\theta^*}$ and ground truth $f,g$ are equivalent, respectively, up to an arbitrary rotation of the latent space. In other words, the latent embedding space discovered by \eande is precisely the intermediate latent space $\mathcal{Z}$ which defines this generative process. $\hat{f}_{\theta^*}$ learns the sufficient statistic given by $f$, and $\hat{g}_{\theta^*}$ emulates the generating function $g$. This construction generalizes the example given in \citet{zimmermann2021contrastive} from single-domain to cross-domain contrastive learning.

\begin{restatable}{lemma}{lemnormconstant}\label{lem:normconstant}
The marginal $p(\by) = \eta(\by)$, the constant $C_{\mathcal{Y}\mathcal{Z}} = |\mathcal{Z}|^{-1}$, and the likelihood
\begin{align}
    p(\by\mid \bphi) = |\mathcal{Z}|C_\kappa^{-1}e^{\kappa\,f(\by)\cdot g(\bphi)}\,p(\by).
\end{align}
\end{restatable}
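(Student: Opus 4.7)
The plan is to derive each of the three claims by direct computation from the definitions, using only the marginalization identity $p(\by) = \int_{\mathcal{Z}} p(\by\mid\bz)\,p(\bz)\,\mathrm{d}\bz$ and the full likelihood formula (\ref{eqn:toymodel}), together with the sifting property of the delta distribution.

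First I would compute the marginal $p(\by)$. Substituting conditions (ii) and (iii) of Definition~\ref{def:toymodel} gives
\begin{align*}
    p(\by) = \int_{\mathcal{Z}} C_{\mathcal{Y}\mathcal{Z}}^{-1}\,\eta(\by)\,\delta(f(\by)-\bz)\,|\mathcal{Z}|^{-1}\,\mathrm{d}\bz
    = C_{\mathcal{Y}\mathcal{Z}}^{-1}\,|\mathcal{Z}|^{-1}\,\eta(\by),
\end{align*}
where the delta integrates to one because $f(\by)\in\mathcal{Z}$ lies in the domain of integration. Since $\eta$ is assumed to be a normalized probability density on $\mathcal{Y}$ and $p(\by)$ must also integrate to one, the proportionality constant must satisfy $C_{\mathcal{Y}\mathcal{Z}}^{-1}\,|\mathcal{Z}|^{-1} = 1$, i.e., $C_{\mathcal{Y}\mathcal{Z}} = |\mathcal{Z}|^{-1}$, and hence $p(\by) = \eta(\by)$. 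This pins down both $C_{\mathcal{Y}\mathcal{Z}}$ and the marginal simultaneously.

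Next I would plug condition (i) and condition (ii) into (\ref{eqn:toymodel}) and again apply the sifting property of $\delta(f(\by)-\bz)$ to evaluate the integral at $\bz = f(\by)$:
\begin{align*}
    p(\by\mid\bphi) = \int_{\mathcal{Z}} C_{\mathcal{Y}\mathcal{Z}}^{-1}\,\eta(\by)\,\delta(f(\by)-\bz)\,C_\kappa^{-1}\,e^{\kappa\,\bz\cdot g(\bphi)}\,\mathrm{d}\bz
    = C_{\mathcal{Y}\mathcal{Z}}^{-1}\,C_\kappa^{-1}\,\eta(\by)\,e^{\kappa\,f(\by)\cdot g(\bphi)}.
\end{align*}
Substituting $C_{\mathcal{Y}\mathcal{Z}}^{-1} = |\mathcal{Z}|$ and $\eta(\by) = p(\by)$ from the previous step yields the claimed expression $p(\by\mid\bphi) = |\mathcal{Z}|\,C_\kappa^{-1}\,e^{\kappa\,f(\by)\cdot g(\bphi)}\,p(\by)$.

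The only subtle point — and the only place worth being careful — is that $\delta(f(\by)-\bz)$ is a distribution on the hypersphere $\mathcal{Z} = \mathbb{S}^{n-1}$ rather than on $\mathbb{R}^n$, so the sifting property must be understood with respect to the surface measure on $\mathbb{S}^{n-1}$; this is consistent with the definition of $|\mathcal{Z}|$ as the surface area. No difficult estimate or limit is involved: the entire lemma is a two-line consequence of the delta-function collapse of the latent integral.
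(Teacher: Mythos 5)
Your proposal is correct and follows essentially the same route as the paper's proof: compute the marginal by integrating the delta against the uniform $p(\bz)$, fix $C_{\mathcal{Y}\mathcal{Z}} = |\mathcal{Z}|^{-1}$ from the normalization of $\eta$, and then collapse the latent integral in (\ref{eqn:toymodel}) via the sifting property to obtain the likelihood. Your remark that the delta must be interpreted with respect to the surface measure on $\mathbb{S}^{n-1}$ is a nice clarification that the paper leaves implicit.
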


\begin{restatable}{theorem}{thmrotation}\label{thm:rotation}
Assuming the Theorem~\ref{thm:main} conditions hold and the hyperparameter $\tau = 1/\kappa$,
\begin{align}
    \hat{f}_{\theta^*}(\by) &= Rf(\by)\\
    \hat{g}_{\theta^*}(\bphi) &= Rg(\bphi)
\end{align}
for some orthogonal matrix $R \in \mathrm{SO}(n)$.
\end{restatable}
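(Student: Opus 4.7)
The plan is to reduce the claim to a pointwise algebraic identity between inner products on $\mathbb{S}^{n-1}$, and then exploit the rigidity of the sphere to extract a linear orthogonal transformation. The key inputs will be Theorem~\ref{thm:main} (exact recovery of the likelihood-to-evidence ratio at the global optimum) and Lemma~\ref{lem:normconstant} (explicit form of that ratio for the generative model in Definition~\ref{def:toymodel}).

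First I would combine these two results. Theorem~\ref{thm:main} gives $\hat{r}_{\theta^*}(\bphi,\by) = C^{*-1}e^{\hat{f}_{\theta^*}(\by)\cdot\hat{g}_{\theta^*}(\bphi)/\tau} = r(\bphi,\by)$, while Lemma~\ref{lem:normconstant} gives $r(\bphi,\by) = |\mathcal{Z}|C_\kappa^{-1}e^{\kappa f(\by)\cdot g(\bphi)}$. With $\tau = 1/\kappa$, taking logarithms and absorbing constants yields the pointwise identity
\[
\hat{f}_{\theta^*}(\by)\cdot\hat{g}_{\theta^*}(\bphi) \;=\; f(\by)\cdot g(\bphi) \;+\; c
\]
for every $\by\in\mathcal{Y}$, $\bphi\in\Phi$, where $c := \kappa^{-1}\log(|\mathcal{Z}|C^*/C_\kappa)$ is a single real constant.

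Next I would argue $c=0$. Both sides are cosines between unit vectors and hence lie in $[-1,1]$. Because the von Mises--Fisher conditional $p(\bz\mid\bphi)$ has full support and the marginal $p(\bz)$ is uniform on $\mathcal{Z} = \mathbb{S}^{n-1}$, the images $f(\mathcal{Y})$ and $g(\Phi)$ cover $\mathbb{S}^{n-1}$, so I can pick $(\by,\bphi)$ realizing $f(\by)\cdot g(\bphi) = \pm 1$; these give $1+c\le 1$ and $-1+c \ge -1$, forcing $c=0$. The same identity shows $\hat{f}_{\theta^*}$ and $\hat{g}_{\theta^*}$ factor through $f$ and $g$: if $f(\by_1) = f(\by_2)$, then $[\hat{f}_{\theta^*}(\by_1) - \hat{f}_{\theta^*}(\by_2)]\cdot\hat{g}_{\theta^*}(\bphi) = 0$ for every $\bphi$, and the spanning of $\hat{g}_{\theta^*}(\Phi)$ (verified at the next step) forces $\hat{f}_{\theta^*}(\by_1) = \hat{f}_{\theta^*}(\by_2)$. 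Symmetrically for $\hat{g}_{\theta^*}$. Hence there are well-defined maps $F,H:\mathbb{S}^{n-1}\to\mathbb{S}^{n-1}$ with $F\circ f = \hat{f}_{\theta^*}$, $H\circ g = \hat{g}_{\theta^*}$, satisfying $F(u)\cdot H(v) = u\cdot v$ on all of $\mathbb{S}^{n-1}\times\mathbb{S}^{n-1}$.

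Finally I would extract orthogonality from this identity. Evaluating at the standard basis vectors $\{e_i\}\subset\mathbb{S}^{n-1}$, set $f_i := F(e_i)$ and $h_j := H(e_j)$; the relation yields $f_i\cdot h_j = \delta_{ij}$, so $\{f_i\}$ and $\{h_j\}$ are dual bases of $\mathbb{R}^n$ (in particular linearly independent, which retroactively justifies the spanning claim above). The coordinate identities $F(u)\cdot h_j = u_j$ and $f_i\cdot H(v) = v_i$ then uniquely determine $F$ and $H$ as linear maps on $\mathbb{R}^n$, and norm preservation on the sphere makes each linear extension orthogonal. The relation $F(u)\cdot H(v) = u\cdot v$ reads $u^\top F^\top H v = u^\top v$ for all $u,v$, hence $H = F =: R$, giving $\hat{f}_{\theta^*} = R\circ f$ and $\hat{g}_{\theta^*} = R\circ g$ as claimed (orientation can be absorbed into $R$ to land in $\mathrm{SO}(n)$). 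The main obstacle I anticipate is making the surjectivity and spanning arguments rigorous: the $c=0$ step and well-definedness step both need that $f(\mathcal{Y})$, $g(\Phi)$, and $\hat{g}_{\theta^*}(\Phi)$ are rich enough, which must be teased out of Definition~\ref{def:toymodel} rather than assumed. Once those measure-theoretic points are pinned down, the remaining algebra is a clean consequence of the rigidity of the cosine on $\mathbb{S}^{n-1}$.
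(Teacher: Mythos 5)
Your first two steps are exactly the paper's: combine Theorem~\ref{thm:main} with Lemma~\ref{lem:normconstant} to get $\hat{f}_{\theta^*}(\by)\cdot\hat{g}_{\theta^*}(\bphi) = f(\by)\cdot g(\bphi) + c$ with $c=\kappa^{-1}\log(|\mathcal{Z}|C^*/C_\kappa)$, and then kill the constant by the $[-1,1]$ range argument, using condition (iii) of Definition~\ref{def:toymodel} to guarantee the range is actually attained. Where you genuinely diverge is the last step: the paper simply invokes an extension of the Mazur--Ulam theorem from \citet{zimmermann2021contrastive} (inner products preserved on $\mathbb{S}^{n-1}$ $\Rightarrow$ the two latent spaces differ by an isometry), whereas you give an explicit, self-contained rigidity argument via dual bases and linear extension. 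Your route is more elementary and makes transparent why the same $R$ works for both embeddings ($F^\top H = I$ plus orthogonality), which the paper leaves implicit in the citation.

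Two loose ends in your version deserve attention, both of which you flag but do not resolve. First, the claim that $g(\Phi)$ covers (or is dense in) $\mathbb{S}^{n-1}$ does not follow just from the vMF conditional having full support: a vMF mixture can only be uniform if the pushforward of $p(\bphi)$ under $g$ is itself uniform (injectivity of the vMF smoothing operator via Funk--Hecke, since all Gegenbauer coefficients of $e^{\kappa\,\bz\cdot\mu}$ are nonzero), and that is the argument actually needed; for the $c=0$ step alone, surjectivity of $f$ plus a single $\bphi$ suffices, which is all the paper uses. Second, your well-definedness of $F,H$ rests on $\hat{g}_{\theta^*}(\Phi)$ spanning $\mathbb{R}^n$, which you justify "retroactively" from the dual-basis step--- but that step already presupposes $F,H$ are defined at the basis vectors, so as written the argument is circular. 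It can be repaired without dual bases: for any $\bphi$ pick $\by$ with $f(\by)=g(\bphi)$ (surjectivity of $f$); then $\hat{f}_{\theta^*}(\by)\cdot\hat{g}_{\theta^*}(\bphi)=1$ forces $\hat{f}_{\theta^*}(\by)=\hat{g}_{\theta^*}(\bphi)$ by the equality case of Cauchy--Schwarz, which reduces the cross-domain identity to the single-domain statement $\hat{f}_{\theta^*}(\by_1)\cdot\hat{f}_{\theta^*}(\by_2)=f(\by_1)\cdot f(\by_2)$, from which the orthogonal extension follows by your linear-algebra argument (or the cited result). With those repairs your proof is correct; note also that, strictly speaking, both your argument and the paper's yield $R\in\mathrm{O}(n)$ rather than $\mathrm{SO}(n)$ --- a reflection cannot simply be "absorbed" --- but this imprecision is shared with the paper's statement.
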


Furthermore, we can reinterpret this result by viewing the conditions given in Definition~\ref{def:toymodel} not as a specific solvable example but as describing the general class of latent spaces discovered by \eande and, more broadly, contrastive representation learning. \eande learns to decompose an arbitrary generative process into one of the form described in Definition~\ref{def:toymodel}. This form then makes it easy to derive the posterior
\begin{align}
    p(\bphi\mid\by) = \frac{p(\by \mid \bphi)\,p(\bphi)}{p(\by)} = |\mathcal{Z}|C_\kappa^{-1}e^{\kappa\,f(\by)\cdot g(\bphi)}\,p(\bphi).
\end{align}
In other words, \eande performs simulation-based inference by solving a representation learning problem.

\section{Experiments}
\label{sec:experiments}
In this section, we conduct experiments to empirically verify the performance of \eande. First, we generate synthetic datasets in both unimodal and multimodal setups, demonstrating that our method can accurately recover the true posterior. In the second part, we focus on data generated from dynamical systems. In a complex multimodal setup, we show that \eande significantly outperforms the baselines.

\subsection{Synthetic task: Reconstructing the generative process}
\label{sec:vmf}
We construct a synthetic task to illustrate our theoretical claims and our intuition for the latent space discovered by \eande. Based on the general framework described in Section~\ref{sec:understanding}, we simulate a generative process with an intermediate latent space and show that \eande learns to reconstruct the latent space and model the parameter posterior. We also augment our synthetic task to show how redundant parameters, which result in a multimodal posterior, are ignored by the learned latent emulator as described in Section~\ref{sec:non-identifiable}.

For the synthetic unimodal task, the parameters $\bphi \in \Phi = \mathbb{R}^2$ are sampled from the prior
\begin{equation}
    p(\bphi) = |\det(A)|\,p_{U(\mathbb{S}^1)}(A\bphi),
\end{equation}
where $A$ in an invertible $2\times 2$ matrix, and $p_{U(\mathbb{S}^1)} = \delta_{\mathbb{S}^1}/(2\pi)$ is the uniform distribution with support on $\mathbb{S}^1 \in \mathbb{R}^2$. Then, an intermediate latent variable $\bz \in \mathcal{Z} = \mathbb{S}^1$ is generated by a von Mises--Fisher distribution
\begin{equation} \label{eqn:synthetic-latent}
    p(\bz\mid\bphi) = C_\kappa^{-1}e^{\kappa\,\bz\cdot g(\bphi)} = C_\kappa^{-1}e^{\kappa\,\bz\cdot (A\bphi)},
\end{equation}
where $g(\bphi) := A\bphi$ is the generating function. Along with the prior $p_{U(\mathbb{S}^1)} = \delta_{\mathbb{S}^1}/(2\pi)$, this implies the latent space marginal $p(\bz) = |\mathcal{Z}|^{-1} = (2\pi)^{-1}$ is uniform as required in Section \ref{sec:understanding}. Finally, the data $\by \in \mathcal{Y} = \mathbb{R}^2$ is given by
\begin{align}
    \by = f^{-1}(\bz) = \mathrm{MLP}(\bz),
\end{align}
where MLP is an invertible multilayer perception (Appendix \ref{apx:exp-synthetic}), and $f(\by) := \mathrm{MLP}^{-1}(\by)$ is the constraint function. The marginal $p(\by) = p(\bz)\left|\det(\partial f/\partial\by)\right| = p(\bz)\left|\det(\partial\mathrm{MLP}/\partial\by)\right|^{-1}$, so the likelihood
\begin{align}
    p(\by\mid\bphi) &= \left|\det(\partial\mathrm{MLP}/\partial\by)\right|^{-1}C_\kappa^{-1}e^{\kappa\,\mathrm{MLP}^{-1}(\mathbf{y})\cdot (A\bphi)}\\
    &= 2\pi C_\kappa^{-1}e^{\kappa\,f(\by)\cdot g(\bphi)}\,p(\by)
\end{align}
is of the form given in Lemma~\ref{lem:normconstant}.

In addition, to test the effect of redundant parameters, we create a synthetic multimodal task by concatenating a redundant parameter $\bphi_R \sim U(0,1)$, with a uniform prior on the unit interval, to the original parameters $\bphi$, giving a new set of parameters $\bphi' = (\bphi_R, \bphi) \in \mathbb{R}^3$. The latent variable $\bz \sim p(\bz\mid\bphi') = p(\bz\mid\bphi) = C_\kappa^{-1}e^{\kappa\,\bz\cdot g(\bphi)}$ and the data $\by \sim p(\by\mid\bz)$ are generated in the same manner as before, so $\by$ does not depend on the redundant parameter $\bphi_R$, i.e., $p(\by\mid\bphi') = p(\by\mid\bphi)$. This allows us to verify that the latent emulator learns to ignore the redundant parameter as shown by Corollary~\ref{cor:redundant}.

\begin{figure}[t]
\centering
\noindent
\begin{minipage}{.03\textwidth}
\end{minipage}%
\begin{minipage}{1\textwidth}
  \centering
  \begin{subfigure}{.5\textwidth}
    \centering
    \includegraphics[trim={0.5cm 0.5cm 3.5cm 6.3cm},clip,height=5cm]{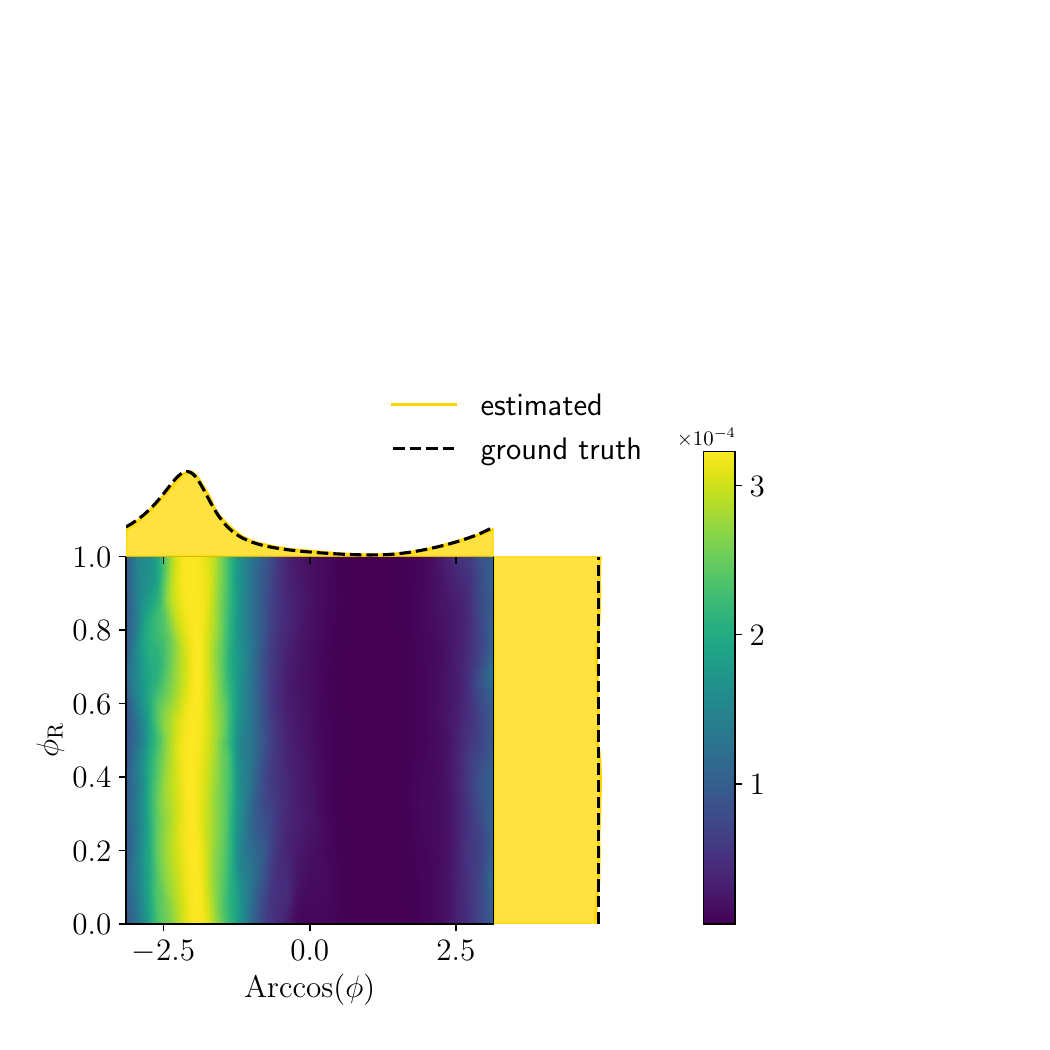}
    \caption{$p(\bphi \mid \by)$ for $\kappa = 2$}
  \end{subfigure}%
  \hfill
  \begin{subfigure}{.5\textwidth}
    \centering
    \includegraphics[trim={0.5cm 0.5cm 3.5cm 6.3cm},clip,height=5cm]{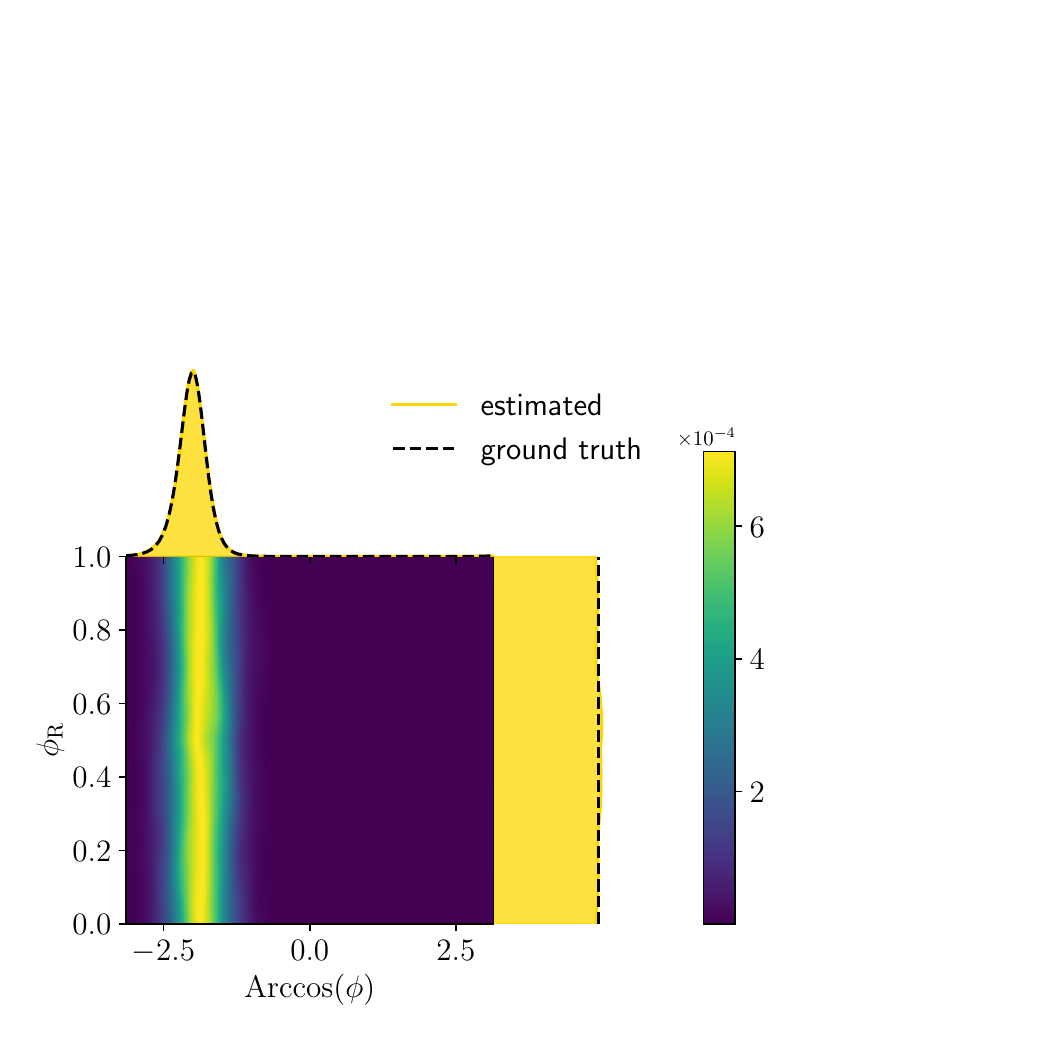}
    \caption{$p(\bphi \mid \by)$ for $\kappa = 8$}
  \end{subfigure}%
\end{minipage}
\caption{\textbf{Visualization of the estimated joint and marginal posterior distributions for one selected test sample for multimodal task.}
The results verify that the latent emulator learns to ignore irrelevant information (in the form of a redundant parameter $\bphi_\mathrm{R}$), as predicted by our theory.
}
\label{fig:posterior-vmf}
\end{figure}

\begin{table*}[t]

\subfloat[
\textbf{Unimodal synthetic data}. 
\label{tab:single}
]{
\begin{minipage}[t]{0.48\linewidth}{
\scalebox{0.82}{
\begin{tabular}{c c c c}
\toprule
$\kappa$ & {$l1\text{ }{\rm Dist.} \big(\hat{q}_\theta, p)$ $\downarrow$} & 
{$R^2(g, \hat{g}_\btheta) \uparrow$} &
{$R^2(f, \hat{f}_\btheta) \uparrow$} \\
\midrule
2 & 0.032 (0.040, 0.061) & 0.999 & 0.999
\\
8 & 0.041 (0.033, 0.085) & 0.999 & 0.999
\\
\bottomrule
\end{tabular}
}}
\end{minipage}
}
\subfloat[
\textbf{Multimodal synthetic data}. 
\label{tab:multi}
]{
\begin{minipage}[t]{0.48\linewidth}{
\scalebox{0.82}{
\begin{tabular}{c c c c}
\toprule
$\kappa$ & {$l1\text{ }{\rm Dist.} \big(\hat{q}_\theta, p)$ $\downarrow$} &
{$R^2(g, \hat{g}_\btheta) \uparrow$} &
{$R^2(f, \hat{f}_\btheta) \uparrow$} \\
\midrule
2 & 0.049 (0.044, 0.058) & 0.999 & 0.999
\\
8 & 0.055 (0.050, 0.068) & 0.999 & 0.999
\\
\bottomrule
\end{tabular}
}}
\end{minipage}
}

\caption{\textbf{Empirical Performance of \eande on synthetic tasks.} 
We present results for 50 test samples, showing the median (25th, 75th percentile) of the $l1$ distance between the estimated and ground truth posteriors, along with the $R^2$ score.
The results demonstrate that \eande accurately models the posterior in both unimodal and multi-modal cases, with high $R^2$ scores (close to 1) indicating a strong linear relationship between the learned embeddings and the true latent variables.
}
\label{table:synthentic}
\end{table*}

\vspace{5pt}
\noindent \textbf{Evaluation.} We evaluate both the quality of the posterior and the role of the embedding.
We compute the posterior of \eande constructed using embeddings in the form of Equation \ref{eqn:posterior}, where we estimate the normalization factor
\begin{align}
    C(\by) \approx \sum_{i=1}^{N'} e^{\hat{f}_\btheta(\by)\cdot \hat{g}_\btheta(\bphi_i)/\tau}
\end{align}
using $N'=10000$ sampled parameters drawn from the prior $\{\bphi_i\}_{i=1}^{N'} \overset{i.i.d.}{\sim} p(\bphi)$.
For direct evaluation of the estimated posterior, we compute the sum of $l1$ distance between the estimates $\hat{q} (\bphi \mid \by)$ and the true posterior $p(\bphi \mid \by)$ over these $N'=10000$ sampled parameters (Appendix \ref{apx:exp-synthetic}).
To test whether the embedding of \eande successfully recovers the latent space defined in the generative process in Theorem~\ref{thm:rotation}, we fit a linear regression between the source signal and its corresponding embedding respectively for 
$\big(g(\bphi), \hat{g}_\btheta(\bphi)\big)$ and $\big(f(\by), \hat{f}_\btheta(\by) \big)$ pairs, and evaluate the quality of linear regression using the $R^2$ statistic.

\vspace{5pt}
\noindent \textbf{Results.} As shown in Table \ref{table:synthentic}, \eande effectively captures the true posterior in both unimodal and multimodal scenarios, achieving an $l1$ distance close to zero over 10000 sampled parameters. The embedding learned by \eande reconstructs the latent space as specified by the generative process (Theorem~\ref{thm:rotation}): with a high $R^2$ score, there is a strong linear correlation between the pairs $\big(g(\bphi), \hat{g}_\btheta(\bphi)\big)$ and $\big(f(\by), \hat{f}_\btheta(\by) \big)$. 
Furthermore, Figure \ref{fig:posterior-vmf} confirms that the emulator $\hat{g}_\btheta$ of \eande is capable of disregarding redundant parameters, as discussed in Section~\ref{sec:non-identifiable}.
\subsection{Lorenz 96: High-dimensional dynamics data with a multimodal posterior}
\label{sec:l96-main}
In this section, we evaluate \eande in a realistic setting with high-dimensional data from a dynamical system and a multimodal posterior due to parameter redundancy.
Our setup is inspired by real data in practice 
\citep{mcguffie2001forty, neelin2010considerations, sexton2012multivariate}.

We conduct a numerical case study using the Lorenz 96 (L96) model, a prototype model for climate science and geophysical applications.
The key parameter that controls the dynamics of the L96 model is the forcing term $F$, which determines the bifurcation behavior of the chaotic dynamics \citep{kerin2020lorenz}. The governing equations for L96 are
\begin{eqnarray*}
    \odv{\mathbf{u}^k_t}{t} = -\mathbf{u}^{k-1}_t(\mathbf{u}^{k-2}_t - \mathbf{u}^{k+1}_t) - \mathbf{u}^k_t + F - c \bar{\mathbf{v}}^k_t, \\
    \frac{1}{c} \odv{\mathbf{v}^{j,k}_t}{t} = -\mathbf{v}^{j+1,k}_t(\mathbf{v}^{j+2,k}_t - \mathbf{v}^{j-1,k}_t) - \mathbf{v}^{j,k}_t + \frac{1}{J}\mathbf{u}^k_t,
\end{eqnarray*} 
where $\mathbf{u} \in \mathbb{R}^{T \times K}$ denotes the slow variable, with the subscript $t$ indicating the $t$-th timestamp, and $\mathbf{v} \in \mathbb{R}^{T \times KJ}$ denotes the fast variable, with $\bar{\mathbf{v}}^k_t = \frac{1}{J} \sum_{j=1}^J \mathbf{v}^{j,k}_t$.
We use $\mathbf{y}: = [\mathbf{u}, \mathbf{v}] \in \mathbb{R}^{T \times K(J+1)}$ to denote the time sequence observed in the system over a period of duration $T$.
We set $c=10$ to maintain that the simulated data remains within the chaotic regime, and $K=36$ and $J=10$, following \citet{schneider2017earth}.
When T=250 (as in our example below), y is $9.9\times10^4$ dimensional---a much higher dimensionality than is commonly used in SBI settings.

To simulate realistic scenarios where the input parameters of the generative process are interrelated, leading to intricate multimodal posterior distributions, we introduce two parameters $F_1,F_2$ that relate to the forcing term by $F^2 = F_1^2+F_2^2$. In our experiments, 
we mimic a scenario where the simulator $G$ takes in parameters $F_1$ and $F_2$ (e.g., as if the simulator designer were unaware of the simpler parameterization in terms of $F$ alone). Hence there are many $\bphi = (F_1,F_2)$ pairs that produce equivalent outputs of $G$. Our goal is to estimate the posterior over parameters $F_1$ and $F_2$ and for that estimated posterior to accurately reflect this nonidentifiability. Note that a uniform distribution over $F$ would correspond to a uniform distribution over a circle in $(F_1,F_2)$ space with radius $F$, and so an accurate method would produce a posterior with mass concentrated uniformly along this circle.

\begin{figure}[t]
\centering
\begin{minipage}{1\textwidth}
  \centering
  \begin{subfigure}{.3\textwidth}
    \centering
    \includegraphics[trim={0.3cm 0.5cm 7cm 6.6cm},clip,height=4.5cm]{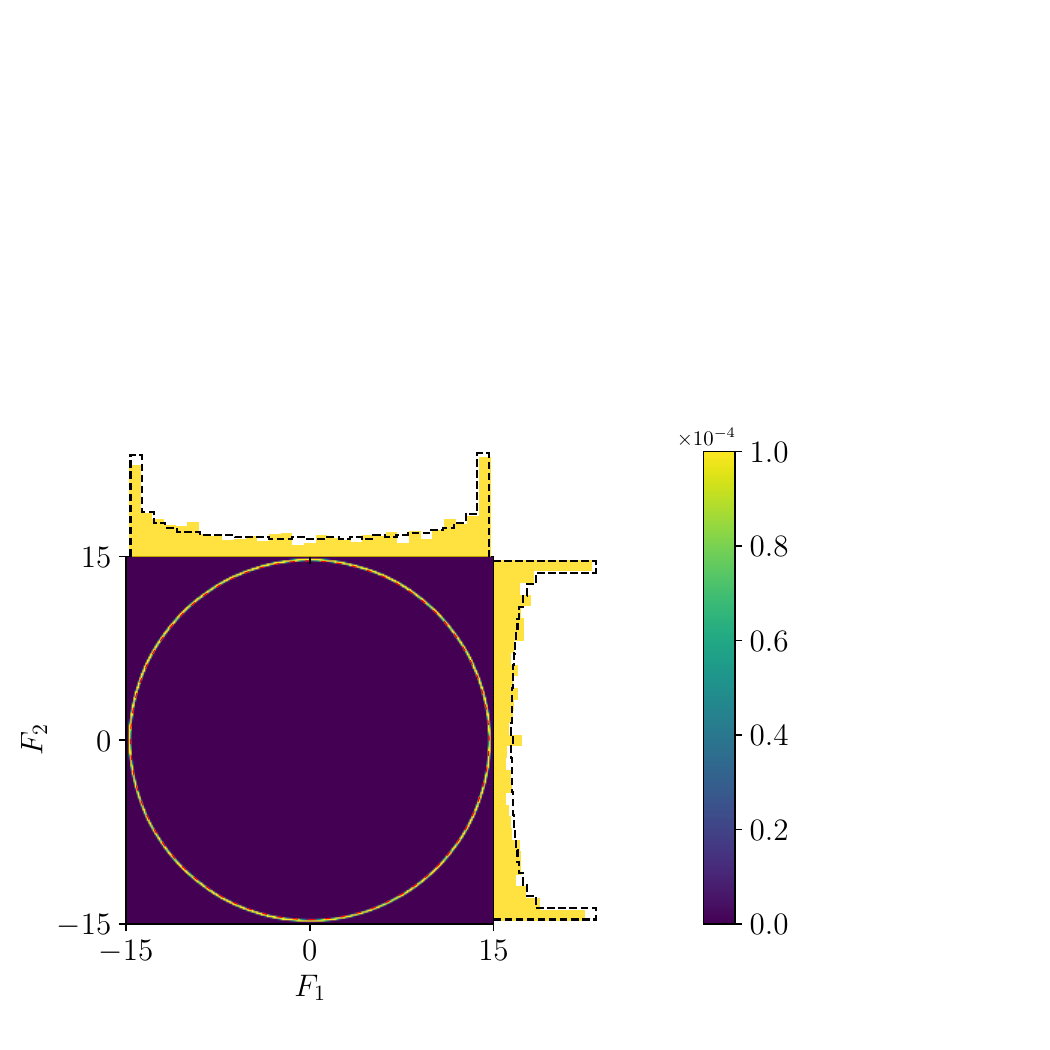}
    \caption{\eande}
  \end{subfigure}%
  \hfill
  \begin{subfigure}{.3\textwidth}
    \centering
    \includegraphics[trim={0.3cm 0.5cm 7cm 6.6cm},clip,height=4.5cm]
    {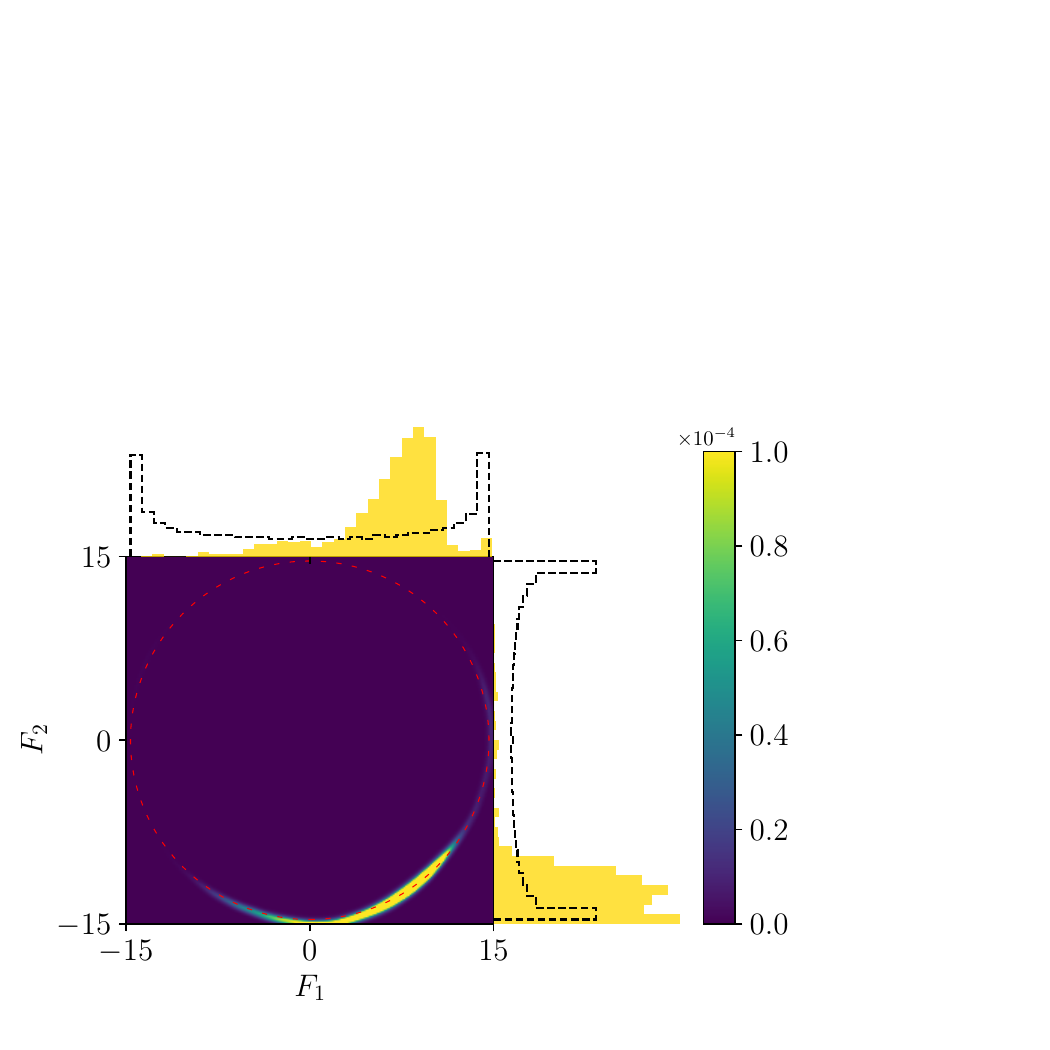}
    \caption{NRE-C}
  \end{subfigure}%
  \hfill
  \begin{subfigure}{.37\textwidth}
    \centering
    \includegraphics[trim={0.3cm 0.5cm 3cm 6.6cm},clip, height=4.5cm]
    {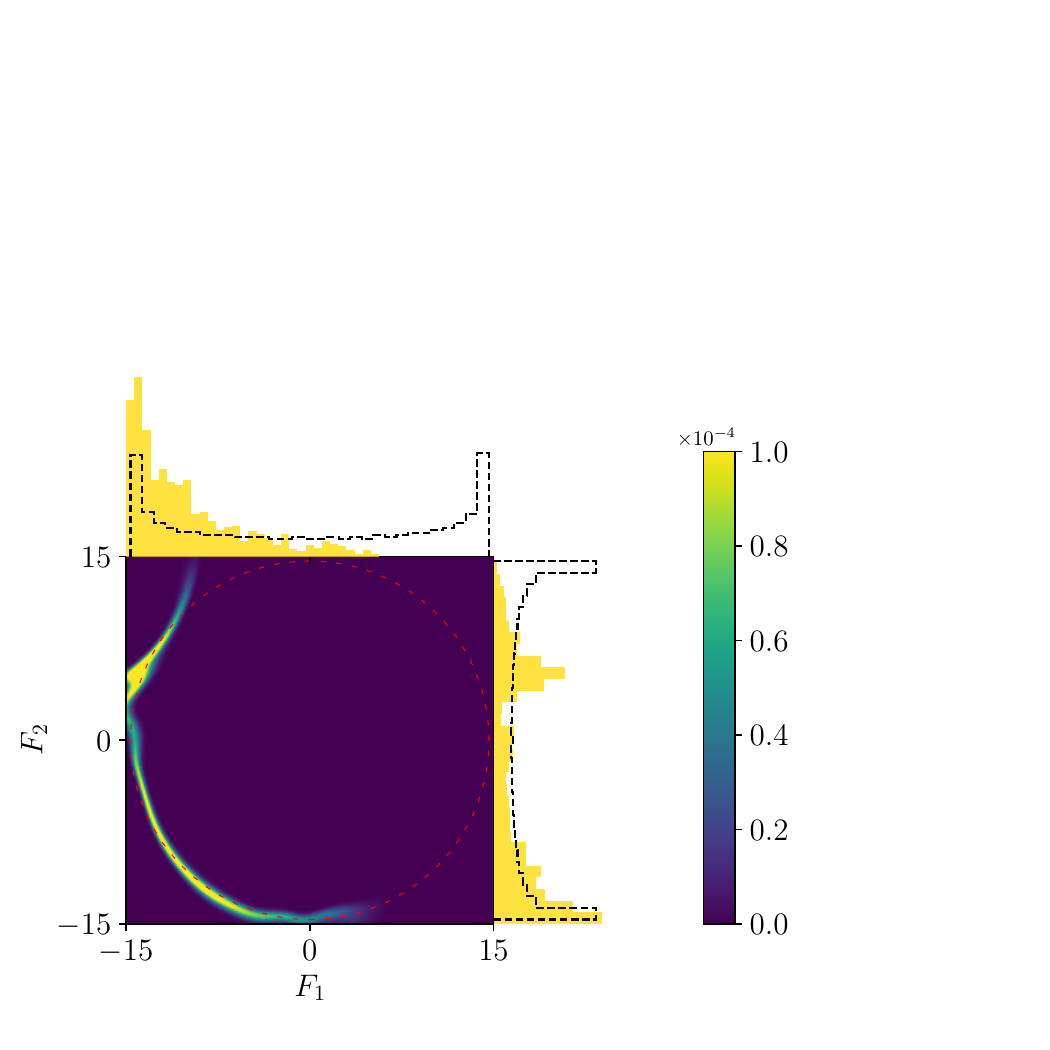}
    \caption{NPE-C}
  \end{subfigure}
\end{minipage}
\caption{\textbf{Visual comparison of the estimated joint and marginal posterior distributions for one test sample.} 
In each subplot, the heatmap displays the estimated posterior probabilities (with the maximum value clipped for better visualization), and the red dashed circle represents the ground truth reference distribution.
The histograms showing marginal densities in the upper and right portions of each subplot are plotted using samples drawn from the estimated posterior using the acceptance-rejection sampling, with the dashed black line illustrating the histograms of samples from the reference distribution.
The results illustrate that \eande captures the full spread of the posterior, whereas NRE-C \citep{miller2022contrastive} and NPE-C \citep{greenberg2019automatic} concentrate on a limited region of the circle, resulting in skewed estimates.
}
\label{fig:posterior}
\end{figure}

\vspace{5pt}
\noindent\textbf{Data generation.} We set the prior distribution $p(\bphi)$ as a two-dimensional uniform distribution over the square region $[-15, 15] \times [-15, 15]$. 
To generate training data, we draw 500 samples $\{\bphi_i\}_{i=1}^N$ from the prior $p(\bphi)$, and then simulate dynamical data using a numerical ODE solver up to $\widetilde{T} = 2000$ starting from random initial conditions sampled from standard normal distributions. This defines the generative process $\by \sim G(\bphi)$. The data is then cropped to a random interval of length $T=250$ during training.  Each $\{\bphi^o_i\}_{i=1}^{N'}$ in the test set is sampled from same prior distribution $p(\bphi)$. At test time, given an observation $\by^o \sim G(\bphi^o)$ with length $T=250$, we aim to estimate the parameter posterior $p(\bphi\mid\by^o)$. 

\vspace{5pt}
\noindent\textbf{Ground truth reference distribution.} While we do not have access to the true posterior in this setting, we can construct a reference distribution $p_\mathrm{ref}(\bphi\mid\by^o)$ that is simply the posterior under the assumption that the original forcing parameter $F$ is always uniquely identifiable from the data $\by^o$. Recalling that $\bphi = (F_1, F_2)$ is redundantly parameterized such that $F^2 = F_1^2 + F_2^2$, the likelihood $p(\by^o\mid \bphi)$ must be invariant to rotations of $\bphi$ about the origin. Thus, given a particular $\bphi^o$ which generates $\by^o\sim G(\bphi^o)$, we can define a set of equivalent parameters $\Phi^o:=\{\bphi: \bphi \in \mathrm{supp}(p(\bphi)), \bphi = R\bphi^o, R\in\mathrm{SO}(2)\}$ that includes rotations of $\bphi$, all of which must have equal likelihood $p(\by^o\mid \bphi)$ and therefore equal posterior $p(\bphi\mid\by^o)$ given a uniform prior $p(\bphi)$. Assuming $F$ is identifiable, the only uncertainty in the posterior comes from this rotational redundancy, so the reference distribution $p_\mathrm{ref} = p_{U(\Phi^o)}$ is the uniform distribution with support on $\Phi^o$. Even if $F$ is not strictly identifiable, as long as the original parameter posterior $p(F\mid\by^o)$ consists of a single sharply peaked mode, this reference distribution remains a good choice for performance comparisons.

\begin{figure}
    \centering
    \includegraphics[width=0.5\linewidth]{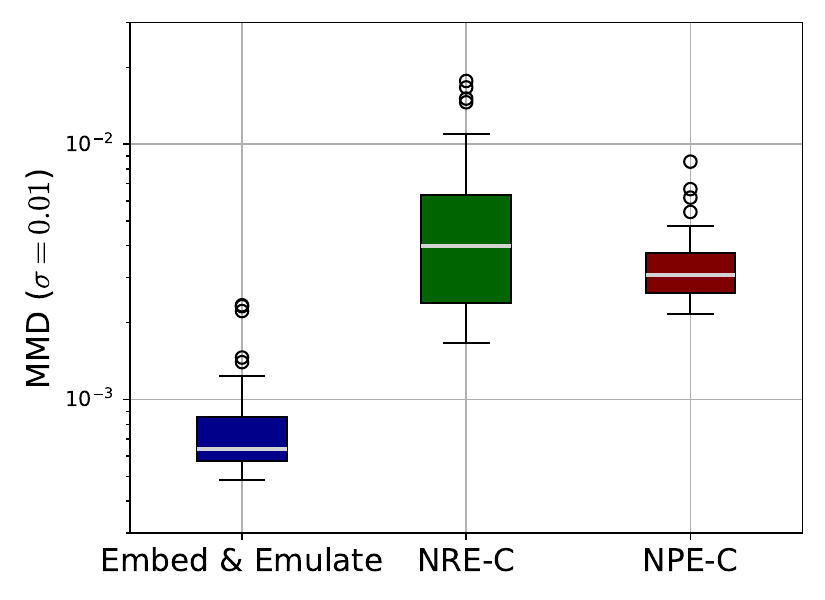}
    \caption{\textbf{Comparison of samples quality using maximum mean discrepancy over 50 testing instances (MMD).} Each box plot shows the median (25th, 75th percentiles) of the error statistics.
    We compare \eande with NRE-C \citep{ miller2022contrastive} and NPE-C \citep{greenberg2019automatic}.
    The results demonstrate that \eande achieves significantly lower errors, with a substantially reduced variance in error statistics, indicating more consistent and reliable performance.}
    \label{fig:mmd}
\vspace{-3pt}
\end{figure}
\begin{table}[t!]
    \centering
    \begin{tabular}{c c c}
       \toprule
         &  \textbf{time} & \textbf{$\#$parameters} \\
        \midrule
       \eandelong & \textbf{0.10} & 24.71M \\
        NRE-C & 3.51 & 22.86M \\
        NPE-C &  4.20 & 25.50M \\
        \bottomrule
    \end{tabular}
    \caption{\textbf{Comparison of the computational time for posterior inference in seconds.}
For each test case, we compute the total wall-clock time for posterior inference over $10,000$ different parameters $\bphi$. The result represents the average time for 50 different test instances. The results indicate that \eande achieves significant efficiency improvements over both NPE-C \citep{greenberg2019automatic} and NRE-C \citep{miller2022contrastive}. All methods use the same amount of time for data generation and similar amounts of time for training.}
\label{table:time}
\end{table}
\vspace{5pt}
\noindent \textbf{Results.} We use both the inter-domain and intra-domain InfoNCE losses for \eande (Algorithm \ref{alg:intra-regularization}), and compared its performance with NPE-C \citep{greenberg2019automatic} and NRE-C \citep{miller2022contrastive}. Both NPE-C and NRE-C can support posterior estimation of high-dimensional data with an embedding network implicitly plugged into their training pipeline. To ensure a fair comparison, we use the same backbone, i.e., ResNet34 \citep{he2016deep} for all models.

We evaluate all models using acceptance-rejection sampling (Section~\ref{sec:algorithm-sampling}), and draw 100 samples for posterior estimation per test instance. 
To reflect the quality of the posterior, we use samples to compute the maximum mean discrepancy (MMD, defined in Section~\ref{apx:l96}) using a Gaussian kernel with sigma = 0.01 between the posteriors of the learned model and the true reference distribution.
As shown in Figure~\ref{fig:mmd},
\eande demonstrates substantially lower errors than both NRE-C and NPE-C, accompanied by a significantly reduced variance in the error metric, suggesting a more robust and consistent posterior estimate. This advantage of \eande can be further confirmed by the posterior visualization of a representative test instance in Figure~\ref{fig:posterior}. Compared with the NRE-C and NPE-C posteriors, the \eande posterior is a much closer match to the reference distribution, which is a uniform distribution with support on a circle. Unlike NRE-C and NPE-C, \eande also successfully captures the rotation symmetry of the posterior given the same limited set of training simulations. We provide more visualizations for comparison in Section~\ref{apx:visualizations}.

We also compare the wall-clock time needed to compute the estimated posterior values for $10,000$ different parameters $\bphi$. Although all models facilitate parallel forward computations, memory limitations restrict the batch size per forward pass. To ensure a fair comparison, we employ a grid search to identify the maximum batch size that each model can accommodate during evaluation. As illustrated in Table \ref{table:time}, when the backbone sizes for all models are comparable in terms of the number of backbone parameters, \eande greatly cuts down the computational time compared to both NRE-C and NPE-C because, as explained in Section \ref{sec:inter-modal} and Appendix \ref{sec:algorithm-sampling}, \eande only needs to compute the data embedding once for posterior inference of each observation.

\subsection{Ablation study on the intra-domain InfoNCE loss}
\label{sec:ablation_intra}
Following the configuration described in Section \ref{sec:l96-main}, we perform an ablation study to evaluate the benefits introduced by the intra-domain InfoNCE loss (Equation \ref{eqn:LYY}). In this experiment, the Lorenz 96 process is chaotic.
The intra-domain contrastive loss helps ensure two $\mathbf{y}$’s that may be far apart in Euclidean distance but which correspond to the same parameter values, just different initial conditions, are mapped to the same location in embedding space, as detailed in \citet{jiang2024training}. The lower variance as depicted in Figure \ref{fig:mmd-ablation}(a) associated with the intra-domain contrastive loss may be attributed to this phenomenon.

\subsection{Ablation study on the symmetric InfoNCE loss}
\label{sec:ablation_symmetry}
\begin{figure}[t!]
\begin{minipage}{1\textwidth}
    \centering
    \begin{subfigure}{.4\textwidth}
    \centering
    \includegraphics[height=5.5cm]{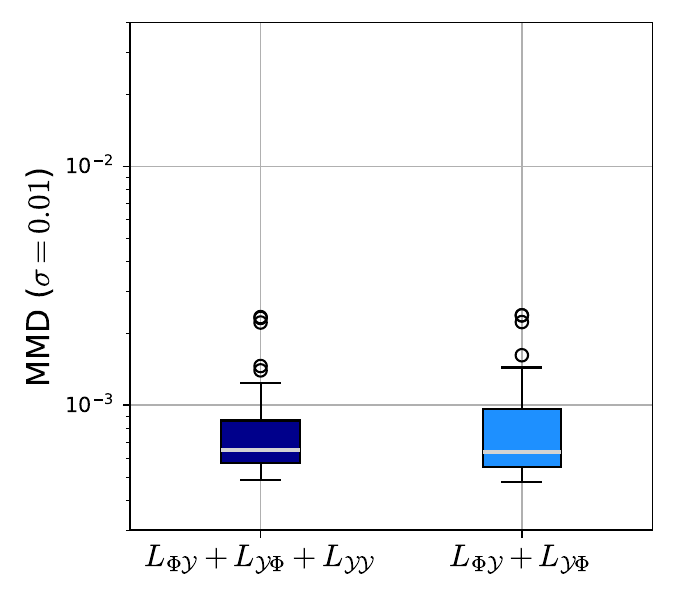}
    \caption{Intra-domain regularization.}
    \end{subfigure}
    \hfill
    \hspace{-10pt}
    \begin{subfigure}{.6\textwidth}
    \centering
    \includegraphics[height=5.5cm]{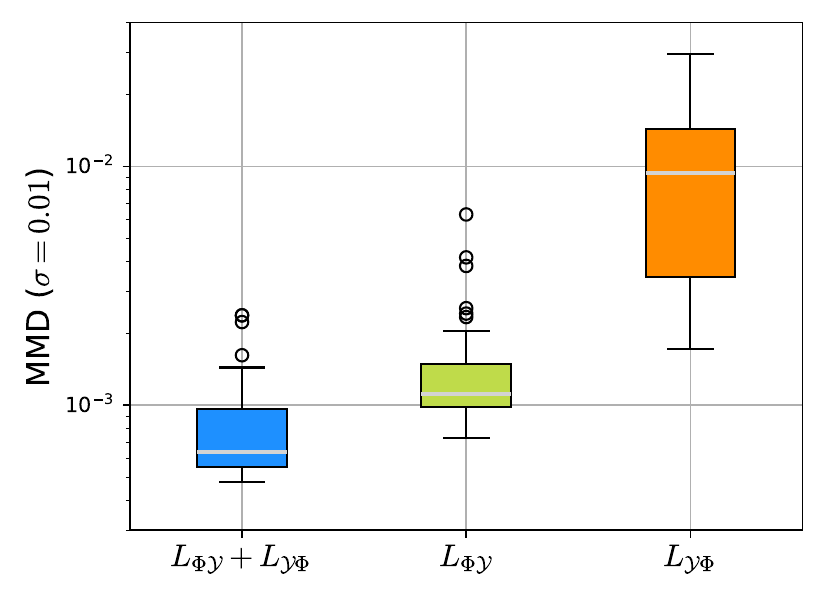}
    \caption{Symmetry of the objective.}
    \end{subfigure}
    \caption{\textbf{Ablation on the roles of different objectives using maximum mean discrepancy (MMD).} 
    We show that (a) including the intra-domain $L_{\mathcal{Y}\mathcal{Y}}$ does not lead to an appreciable reduction in the average MMD, but it does reduce the variance of the errors (Section \ref{sec:ablation_intra}), and (b) using both $L_{\Phi \mathcal{Y}}$ and $L_{\mathcal{Y} \Phi}$ yields better performance than either alone (Section \ref{sec:ablation_symmetry}).
    }
    \label{fig:mmd-ablation}
\end{minipage}
\end{figure}
\begin{figure}[t!]
\centering
\begin{minipage}{1\textwidth}
  \centering
  \begin{subfigure}{.3\textwidth}
    \centering
    \includegraphics[trim={0.3cm 0.5cm 7cm 6.6cm},clip,height=4.5cm]{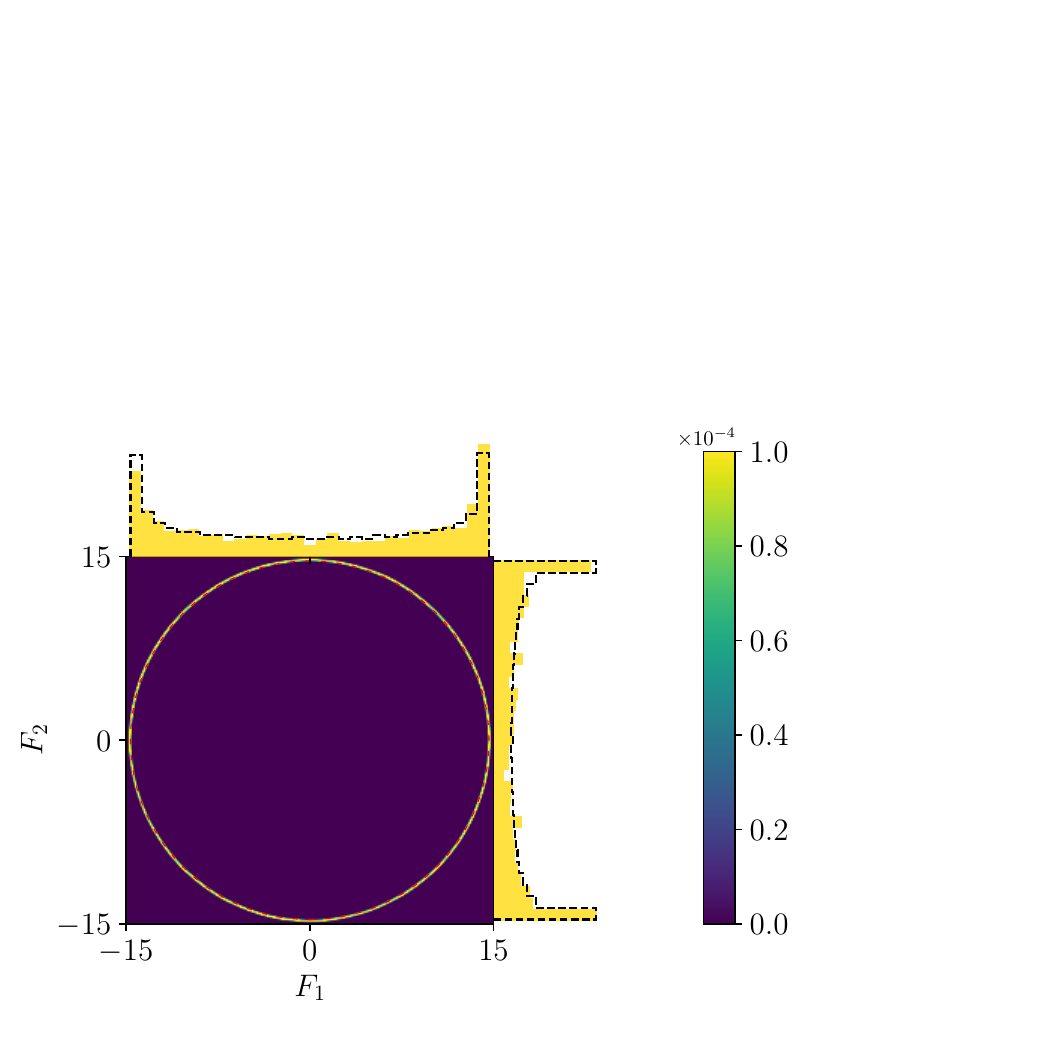}
    \caption{$L_{\Phi \mathcal{Y}} + L_{\mathcal{Y} \Phi}$}
  \end{subfigure}%
  \hfill
  \begin{subfigure}{.3\textwidth}
    \centering
    \includegraphics[trim={0.3cm 0.5cm 7cm 6.6cm},clip,height=4.5cm]
    {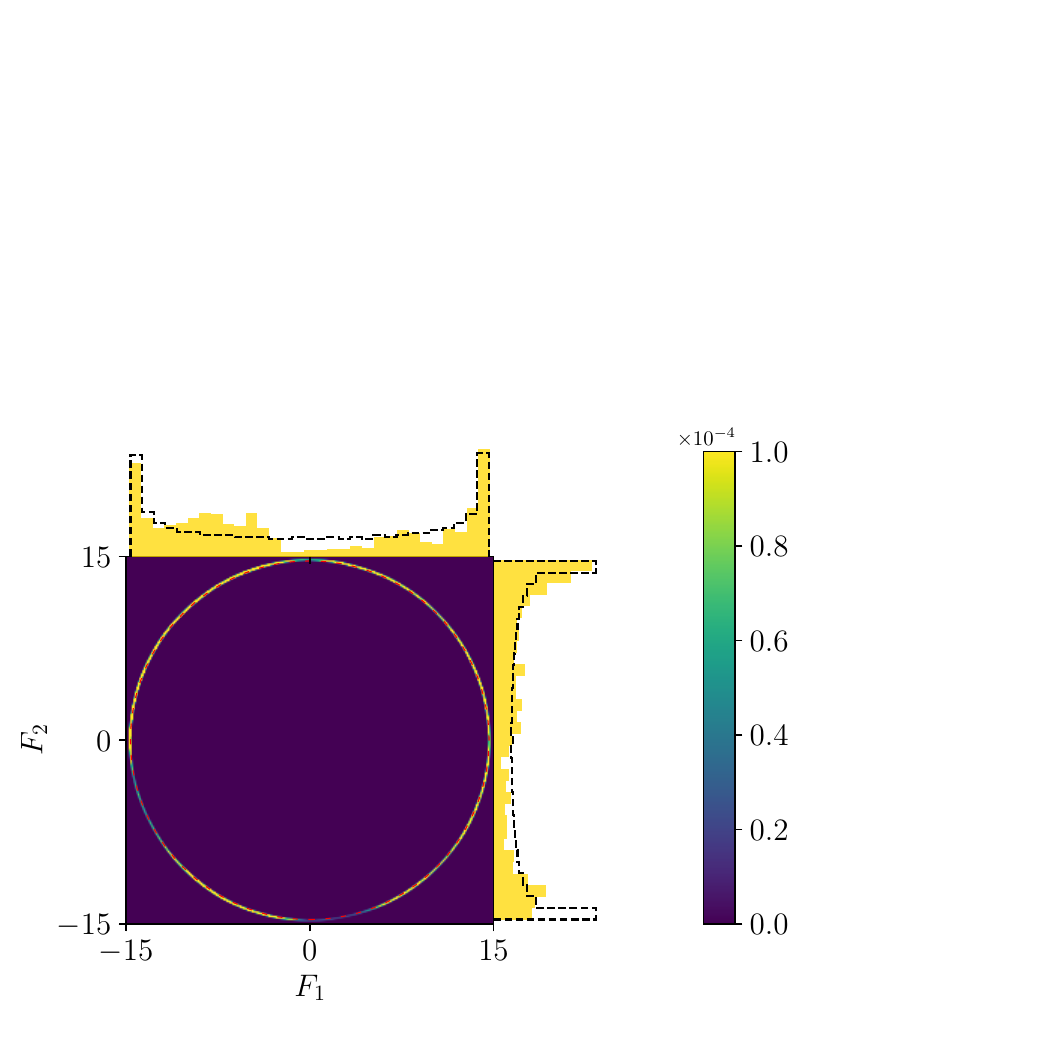}
    \caption{$L_{\Phi \mathcal{Y}}$}
  \end{subfigure}%
  \hfill
  \begin{subfigure}{.37\textwidth}
    \centering
    \includegraphics[trim={0.3cm 0.5cm 3cm 6.6cm},clip, height=4.5cm]{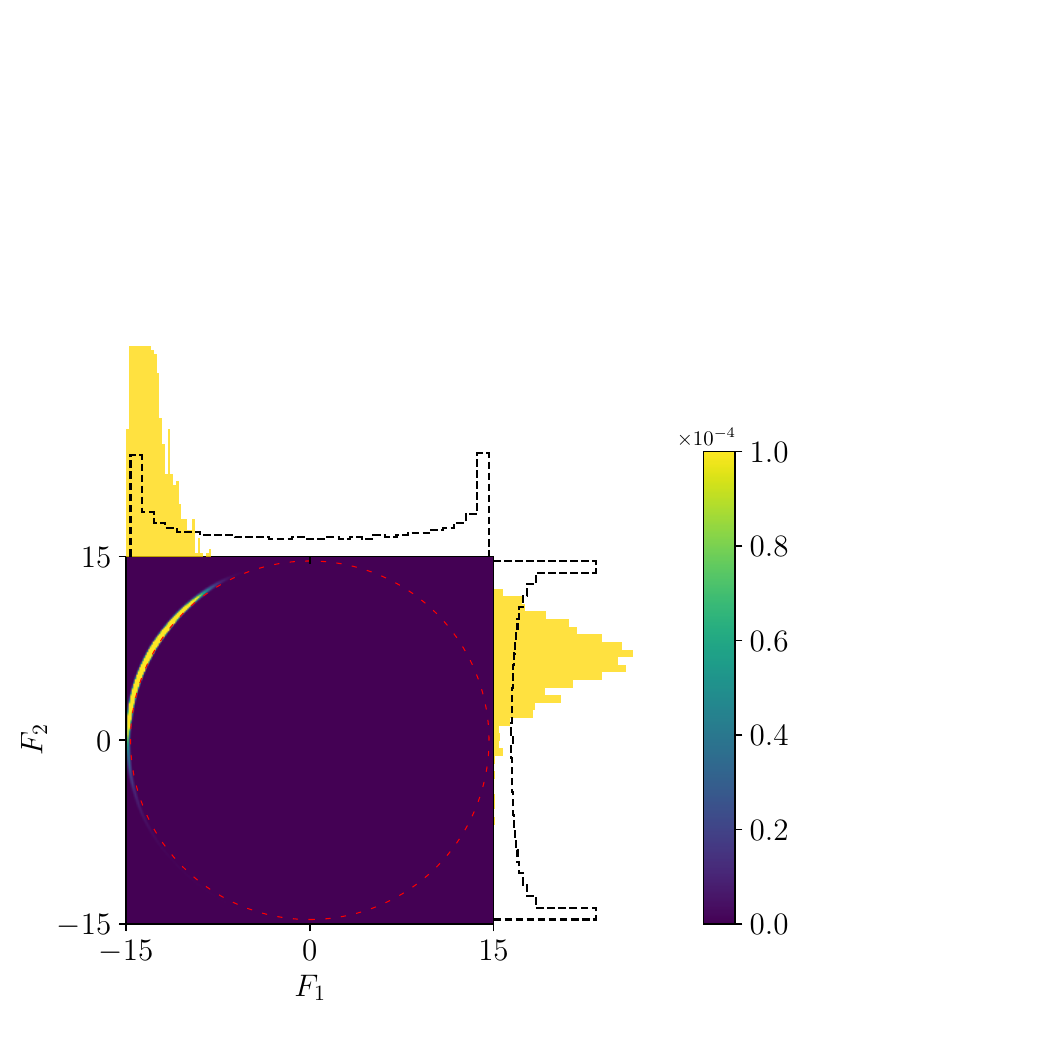}
    \caption{$L_{\mathcal{Y} \Phi}$}
  \end{subfigure}
\end{minipage}
\caption{\textbf{Visualization of the estimated joint and marginal posterior distributions using different training objectives.}
On the \textit{left}, the estimated posterior using symmetric objectives correctly captures the uniform circular shape of the ground truth reference distribution, represented by the red dashed circle.
In the \textit{middle}, the estimated posterior using only $L_{\Phi \mathcal{Y}}$ roughly captures the right shape but deviates from the uniform circular reference.
On the \textit{right}, the estimated posterior using $L_{\mathcal{Y} \Phi}$ performs the worst, focusing only on the upper right corner of the circle and giving the most biased estimates.
}
\label{fig:posterior-symmetry}
\end{figure}
\begin{figure}
    \centering
    \includegraphics[width=0.7\linewidth]{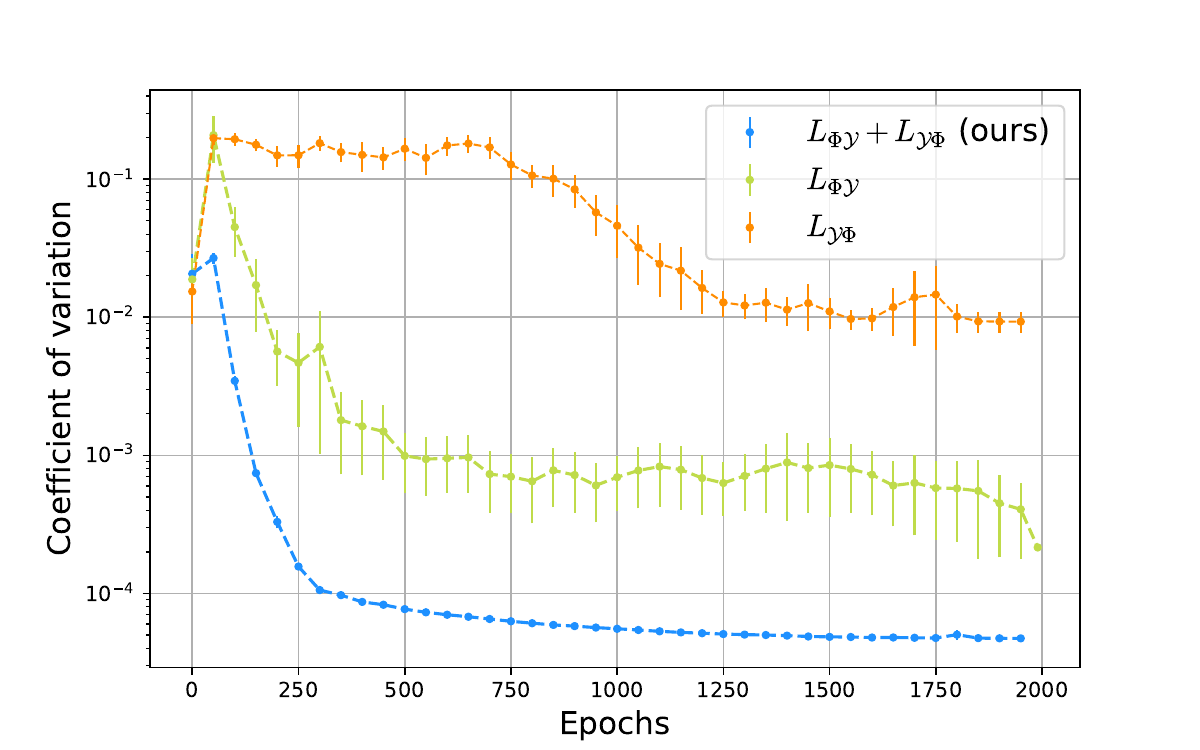}
    \caption{\textbf{Training dynamics of the coefficient of variation (CV) of the posterior normalizer, using different objectives.}
    }
    \label{fig:normalizer}
\end{figure}
We also empirically study the effect of using the symmetric version of the inter-domain InfoNCE loss in \eande. Theoretically, optimizing either the symmetric InfoNCE loss $L_\mathrm{sym} = L_{\Phi\mathcal{Y}} + L_{\mathcal{Y}\Phi}$ or the one-sided InfoNCE loss $L_{\Phi\mathcal{Y}}$ both lead to a trained model that matches the true posterior (Section~\ref{sec:maintheory}). However, as illustrated in Figure~\ref{fig:mmd-ablation}(b), the symmetric loss yields more stable and precise posterior estimations. This is further supported by the visualization in Figure~\ref{fig:posterior-symmetry}, where the estimator employing the symmetric loss better matches the uniform circular reference distribution.

As before, we follow the setup outlined in Section~\ref{sec:l96-main}. To ensure a fair comparison and gain a better understanding of the inter-domain losses, we perform these experiments without the intra-domain InfoNCE loss. For completeness, we also show experiments using the $L_{\mathcal{Y}\Phi}$ loss even though this loss is not theoretically guaranteed to converge to the true posterior.

\vspace{5pt}
\noindent \textbf{Evaluation.} We further examine the behavior of the normalization constant $C(\by)$ for the posterior estimator using various losses. For each test instance $\by^o$, we estimate its normalization constant using a Monte Carlo approach: $C(\by^o) = \sum_{i=1}^{N'} \hat{r}(\bphi_i, \by^o)$, where $N' = 10,000$ different $\bphi_i$ were sampled from the prior distribution. We then calculate the mean $\mu(C)$ and standard deviation $\sigma(C)$ of the normalization constant for a set of different test instances $\{C(\by^o_i)\}_{i=1}^{50}$. As discussed in Section \ref{sec:maintheory}, the symmetry of our InfoNCE loss should theoretically ensure that the normalization constant $C(\by)$ does not vary with different data $\by$. To verify this, we compute the coefficient of variation metric $\mathrm{CV}(C) := \sigma(C) / \mu(C)$ to measure the degree of variation of the normalization constant across different test instances.

\vspace{5pt}
\noindent
\textbf{Results.}
Ideally, when the normalization constant remains unchanged for different $\by$, the coefficient of variation should approach zero. In Figure \ref{fig:normalizer}, we depict the dynamics of the variation of the normalization constant over $2000$ epochs and replicate the experiments for 10 trials to illustrate the error bars. The results demonstrate that the symmetric loss not only results in nearly zero variation of the normalization constant at the end of training but also causes the variance of the constant to decrease much faster, with narrower error bars during the initial training epochs. This suggests that the symmetric loss yield much more stable normalization constants for the estimated posterior for different data $\by$, further supporting our theoretical analysis.

\section{Discussion}
Based on tools from contrastive representation learning, we have proposed, theoretically analyzed, and empirically tested the new \textit{Embed and Emulate} (\eande) method for simulation-based inference (SBI). This approach efficiently handles high-dimensional data by simultaneously training an encoder to learn a compressed summary statistic and a fast latent emulator to learn the mapping from the parameters to the summary statistic. By jointly training the encoder $\hat{f}_\theta$ and emulator $\hat{g}_\theta$ using the symmetric InfoNCE loss from contrastive learning, \eande not only learns a summary statistic that captures all the necessary information for parameter estimation---a sufficient statistic---but also ensures that the likelihood for the learned statistic is a simple distribution given the trained emulator (Section \ref{sec:theory}). Alternatively, we can interpret the \eande method as identifying a structured latent space via contrastive representation learning that recasts the generative process in a simple form (Section \ref{sec:understanding}).

Compared with SBI methods that directly try to learn the posterior, such as neural posterior estimation (NPE) \citep{papamakarios2016fast,lueckmann2017flexible,greenberg2019automatic,rodrigues2021hnpe,ward2022robust}, \eande benefits significantly from learning a compressed summary statistic that simplifies the likelihood model leading to faster and more sample-efficient inference for high-dimensional data. Compared with SBI methods that separate the summary statistic from the likelihood model \citep{papamakarios2019sequential}---i.e., first identifying a summary statistic and then fitting a model for the likelihood---\eande's joint training ensures that the learned statistic is both sufficient for parameter estimation and easy to model, avoiding the need for complex density estimation methods such as deep generative models.

\eande has similarities with neural ratio estimation (NRE) methods \citep{cranmer2015approximating, thomas2020likelihoodfreeinferenceratioestimation,moustakides2019training, hermans2020likelihood, miller2021truncated, miller2022contrastive}, which also parameterize the likelihood-to-evidence ratio $\hat{r}_\theta(\bphi,\by)$ and train using cross-entropy losses similar to InfoNCE. Unlike prior NRE approaches, \eande parameterizes the likelihood-to-evidence ratio $\hat{r}_\theta(\bphi,\by) \propto e^{\hat{f}_\theta(\by)\cdot\hat{g}_\theta(\bphi)/\tau}$ in terms of a similarity measure in a latent space $\mathbb{S}^{n-1}$ and also uses a symmetric InfoNCE loss rather than a one-sided cross-entropy. The \eande parameterization has several benefits over a generic parameterization of the likelihood-to-evidence ratio, including faster inference due to only needing to run the encoder once on the high-dimensional data (Section \ref{sec:l96-main}). Working with an explicit low-dimensional summary statistic also provides additional inductive bias for the structure of the generative process (Section \ref{sec:understanding}), improving sample efficiency and perhaps providing a degree of interpretability for generative processes that have a low-dimensional sufficient statistic. Relatedly, the \eande parameterization implies that the posterior has the form of an exponential family distribution. To handle posteriors with heavy tails that lack a low-dimensional sufficient statistic, we would need to adapt \eande in the future to provide a distinct inductive bias for the form of the posterior while retaining the benefits of a fast latent emulator, e.g., by changing the similarity measure and latent space that parameterize the likelihood-to-evidence ratio. The symmetric form of the loss used in \eande also has significant benefits in terms of empirical performance (Section \ref{sec:ablation_symmetry} and \citet{ma2018noise}) as well as diagnostics (Figure \ref{fig:normalizer} and \citet{miller2022contrastive}).

The structure of the latent summary statistic space, which we interpret as a learned decomposition of the generative process (Section \ref{sec:understanding}), is loosely analogous with other representation learning methods, such as variational autoencoders (VAEs) \citep{kingma2019introduction, khemakhem2020variational}, which learn a structured latent space. We speculate that, by turning SBI into a representation learning problem, we have also opened the door for more interpretability. For example, an analysis of the learned summary statistic may point to relevant interpretable features that control the data generation process. This kind of interpretability may also lead to improved generalization and a better scientific understanding of the physical processes that produced the data. We hope to further explore this direction in future work.

\section{Acknowledgements and Disclosure of Funding}
This work was supported by DOE grant DE-SC0022232 and AFOSR grant FA9550-18-1-0166.
Additional support was provided by the National Institute for Mathematics and Theory in Biology (Simons Foundations award MP-TMPS-00005320 and National Science Foundation award DMS-2235451).
Peter Y. Lu gratefully acknowledges the support of the Eric and Wendy Schmidt AI in Science Postdoctoral Fellowship, a Schmidt Sciences program.

\newpage
\appendix
\section{Algorithms}
In this section, we present pseudocodes for the learning algorithms of \eande and include an example demonstrating how to integrate \eande into acceptance-rejection sampling.
\subsection{Main training algorithm}
\label{sec:algorithm-main}
\begin{algorithm}[ht]
\caption{\eande main learning algorithm}
\renewcommand{\algorithmicrequire}{\textbf{Input:}}
\renewcommand{\algorithmicensure}{\textbf{Output:}}
\newcommand{\INDSTATE}[1][1]{\hspace{#1\algorithmicindent}}
\newcommand{\GreyComment}[1]{\textcolor{gray}{\footnotesize{$\triangleright$ \texttt{#1}}}}
\begin{algorithmic}[1]
\REQUIRE Batch size $M$, constant $\tau$. \\ 
\INDSTATE[-2] \textbf{Initialize:} Encoder $\hat{f}_\btheta$ and emulator $\hat{g}_\btheta$.
\FOR{i.i.d.~sampled batch $\{\bphi_i, \by_i\}_{i=1}^M \sim p(\bphi,\by)$}
\STATE Compute the inter-domain loss $L_\mathrm{sym}
   (\hat{f}_\theta, \hat{g}_\theta,M)$ given in (\ref{eqn:L_inter}).
\STATE Update $\hat{f}_\btheta$ and $\hat{g}_\btheta$ using the gradient $\nabla_\theta L_\mathrm{sym}
   (\hat{f}_\btheta, \hat{g}_\btheta,M)$.
\ENDFOR
\ENSURE $\hat{f}_\btheta$, $\hat{g}_\btheta$
\end{algorithmic}
\label{alg:main}
\end{algorithm}
\noindent As discussed in Section~\ref{sec:inter-modal}, the main learning algorithm of \eande jointly optimizes the weights of the encoder $f_\btheta$ and the emulator $\hat{g}_\btheta$, as illustrated in Algorithm~\ref{alg:main}.

\subsection{Training with intra-domain regularization}
\label{sec:algorithm-intra}
\begin{algorithm}[ht]
\caption{\eande learning algorithm with intra-domain regularization}
\renewcommand{\algorithmicrequire}{\textbf{Input:}}
\renewcommand{\algorithmicensure}{\textbf{Output:}}
\newcommand{\INDSTATE}[1][1]{\hspace{#1\algorithmicindent}}
\newcommand{\GreyComment}[1]{\textcolor{gray}{\footnotesize{$\triangleright$ \texttt{#1}}}}
\begin{algorithmic}[1]
\REQUIRE Batch size $M$, constant $\tau$, access to conditional distribution $p(\tilde{\by} \mid \by)$, weight $\lambda$. \\
\INDSTATE[-2] \textbf{Initialize:} Encoder $\hat{f}_\btheta$ and emulator $\hat{g}_\btheta$.
\FOR{sampled batch $\{\bphi_i, \by_i\}_{i=1}^M$}
\FOR{for all $i \in \{1,\dots, M\}$}
\STATE Draw augmented view $\tilde{\by}_i$ from the conditional distribution $p(\tilde{\by} \mid \by)$
\ENDFOR
\STATE Compute the inter-domain loss $L_\mathrm{sym}
   (\hat{f}_\theta, \hat{g}_\theta,M)$ given in (\ref{eqn:L_inter}).
\STATE Compute the intra-domain loss $L_{\mathcal{Y}\mathcal{Y}}(\hat{f}_\btheta, \hat{g}_\btheta, M)$ given in (\ref{eqn:LYY}).
\STATE Update $\hat{f}_\btheta$ and $\hat{g}_\btheta$ using the gradient $\nabla_\theta (L_\mathrm{sym}
   (\hat{f}_\btheta, \hat{g}_\btheta,M) + \lambda L_{\mathcal{Y}\mathcal{Y}}(\hat{f}_\btheta, \hat{g}_\btheta, M))$.
\ENDFOR
\ENSURE $\hat{f}_\btheta$, $\hat{g}_\btheta$
\end{algorithmic}
\label{alg:intra-regularization}
\end{algorithm}
\noindent Leveraging domain knowledge to improve optimization through intra-domain regularization, we introduce the Algorithm~\ref{alg:intra-regularization}. This approach introduces an additional hyperparameter $\lambda$ to manage the loss weights.

\subsection{\eande with acceptance-rejection sampling}
\label{sec:algorithm-sampling}
\begin{algorithm}[ht]
\caption{\eande with acceptance-rejection sampling algorithm \citep{eckhardt1987stan, bishop2006pattern}}
\renewcommand{\algorithmicrequire}{\textbf{Input:}}
\renewcommand{\algorithmicensure}{\textbf{Output:}}
\newcommand{\INDSTATE}[1][1]{\hspace{#1\algorithmicindent}}
\newcommand{\GreyComment}[1]{\textcolor{gray}{\footnotesize{$\triangleright$ \texttt{#1}}}}
\begin{algorithmic}[1]
\REQUIRE Observation $\boldsymbol{y}$,  prior distribution $p(\boldsymbol{\phi})$, proposal distribution $\pi(\bphi)$, sample size $S$, batch size $N'$, constant $B$, constant $\tau$.\\
\INDSTATE[-2] \textbf{Initialize:} Set $s = 1$.
\STATE Run the forward pass $\hat{f}_\btheta(\boldsymbol{y})$.
\STATE Sample $\{\bphi'\}_{i=1}^{N'} \sim p(\bphi)$ and compute $\hat{g}_\theta(\bphi_i)$ using a batch forward pass.
\STATE Estimate the normalization constant $C(\by)$ in a Monte-Carlo way using Equation \ref{eqn:normalization}.
\REPEAT 
\STATE Sample $\boldsymbol{\phi}^{(s)} \sim \pi(\boldsymbol{\phi})$ and $u\sim \mathcal{U}_{(0, 1)}$.
\STATE Run the forward pass $\hat{f}_\btheta(\by)$.
\STATE Calculate the likelihood-to-evidence ratio using $ \hat{r}_\btheta (\boldsymbol{\phi}^{(s)},\boldsymbol{y}) = C(\by)^{-1} e^{\hat{f}_\theta(\by)\cdot\hat{g}_\theta(\bphi^{(s)})/\tau}$.
\STATE Accept $\boldsymbol{\phi}^{(s)}$ if 
$u \leq \frac{
\hat{r}_\btheta(\boldsymbol{\phi}^{(s)},\boldsymbol{y})\,
p(\boldsymbol{\phi}^{(s)})}{B\pi(\boldsymbol{\phi}^{(s)})}$ 
and increase the counter $s$ by 1. Otherwise, reject.
\UNTIL{$s = S$}
\ENSURE $\{\boldsymbol{\phi}^{(0)}, \dots \boldsymbol{\phi}^{(S)}\}$.
\end{algorithmic}
\label{alg:acceptance-rejection-sampling}
\end{algorithm}
\noindent As demonstrated in Algorithm \ref{alg:acceptance-rejection-sampling}, using the acceptance-rejection sampling algorithm as an example, we explain in detail how to integrate \eande into the sampling process.

For a given observation $\by$, our approach requires only a single forward pass through the encoder $\hat{f}_\btheta$ to compute the embedding of $\by$. This embedding is then used to compute the likelihood-to-evidence ratio estimation across multiple iterations. 

In the standard acceptance-rejection sampling method \citep{eckhardt1987stan}, one might typically calculate the normalization constant, which we denote as $C(\by)$. As illustrated in steps 2 and 3 of Algorithm \ref{alg:acceptance-rejection-sampling}, we can estimate 
\begin{equation}
C(\by) \approx \sum_{i=1}^{N'} e^{\hat{f}_\btheta(\by)\cdot\hat{g}_\btheta(\bphi_i)/\tau}
\label{eqn:normalization}
\end{equation}
by drawing samples from the prior.
Estimating the normalization constant can improve computational efficiency but is not strictly necessary since the algorithm primarily relies on the ratio of the target density to the proposal density.

At each iteration of the algorithm, a candidate parameter $\bphi^{(s)}$ is sampled from the proposal distribution $\pi(\bphi)$, and the emulator $\hat{g}_\btheta(\bphi^{(s)})$ is used to compute the corresponding likelihood-to-evidence ratio estimation $\hat{r}(\bphi^{(s)}, \by)$. The acceptance decision is made based on comparing the scaled ratio estimation, which accounts for the prior density, to a uniformly sampled variable in the interval $[0,1)$.
This design eliminates the need for additional runs of the encoder, ensuring that the computationally expensive operation of encoding the high-dimensional data is performed only once. 
As a result, the efficiency of the sampling process is significantly enhanced, particularly when dealing with large datasets or complex models.

\section{Theoretical proofs and additional theoretical analysis}\label{apx:proofs}
This section presents comprehensive proofs of the statements given in Section~\ref{sec:theory} as well as additional results, including an analysis of the one-sided inter-domain InfoNCE loss (Appendix~\ref{sec:infonceproofs}, Corollary~\ref{cor:single}) and a discussion on using different priors during data generation and inference (Appendix~\ref{sec:altpriors}).

\subsection{Asymptotics of the InfoNCE loss}
\label{sec:asymptotics}
Following prior theoretical work on contrastive learning \citep{wang2020understanding,zimmermann2021contrastive}, our analysis will focus on the asymptotic case where the batch size $M\to\infty$. The inter-domain InfoNCE losses each decompose into two terms: the cross alignment and the negative cross-entropy of the latent embeddings $\hat{f}_\theta(\by),\hat{g}_\theta(\bphi)$.

\begin{lemma}\label{lem:asymtotics}
As the batch size $M\to\infty$, the inter-domain InfoNCE losses converge to 
\begin{align}
    \overline L_{\Phi\mathcal{Y}}(\hat{f}_\theta,\hat{g}_\theta) &:= \lim_{M\to\infty}L_{\Phi\mathcal{Y}}(\hat{f}_\theta,\hat{g}_\theta,M) - \log M\\
    &= -\frac{1}{\tau}\E_{(\bphi,\by) \sim p(\bphi,\by)}\left[\hat{f}_\theta(\by)\cdot \hat{g}_\theta(\bphi)\right] & \text{\em (cross alignment)}\\
    &\quad + \E_{\by \sim p(\by)}\left[\log\E_{\bphi^-\sim p(\bphi)}\left[e^{\hat{f}_\theta(\by)\cdot \hat{g}_\theta(\bphi^-)/\tau}\right]\right] & \text{\em (negative cross-entropy)}\\
    \overline L_{\mathcal{Y}\Phi}(\hat{f}_\theta,\hat{g}_\theta) &:= \lim_{M\to\infty}L_{\mathcal{Y}\Phi}(\hat{f}_\theta,\hat{g}_\theta,M) - \log M\\
    &= -\frac{1}{\tau}\E_{(\bphi,\by) \sim p(\bphi,\by)}\left[\hat{f}_\theta(\by)\cdot \hat{g}_\theta(\bphi)\right] & \text{\em (cross alignment)}\\
    &\quad + \E_{\bphi\sim p(\bphi)}\left[\log\E_{\by^- \sim p(\by)}\left[e^{\hat{f}_\theta(\by^-)\cdot \hat{g}_\theta(\bphi)/\tau}\right]\right]. & \text{\em (negative cross-entropy)}
\end{align}
\end{lemma}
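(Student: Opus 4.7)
The plan is to decompose each InfoNCE loss into a ``positive pair'' term and a ``log-normalizer'' term, absorb the $\log M$ correction into the latter, and then pass to the limit using the law of large numbers together with the boundedness of the cosine similarity on $\mathbb{S}^{n-1}$. By the manifest symmetry $(\bphi,\by) \leftrightarrow (\by,\bphi)$ in the definitions of $L_{\Phi\mathcal{Y}}$ and $L_{\mathcal{Y}\Phi}$, it suffices to prove one of the two identities; I focus on $L_{\Phi\mathcal{Y}}$.

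First I would rewrite the loss by pulling $-\log M = -\frac{1}{M}\sum_{i=1}^M \log M$ inside the outer sum, obtaining
\begin{align*}
L_{\Phi\mathcal{Y}}(\hat{f}_\theta,\hat{g}_\theta, M) - \log M
&= -\frac{1}{M}\sum_{i=1}^M \frac{\hat{f}_\theta(\by_i)\cdot\hat{g}_\theta(\bphi_i)}{\tau}
+ \frac{1}{M}\sum_{i=1}^M \log\!\left(\frac{1}{M}\sum_{j=1}^M e^{\hat{f}_\theta(\by_i)\cdot\hat{g}_\theta(\bphi_j)/\tau}\right).
\end{align*}
The first term is an i.i.d.\ sample average of $\frac{1}{\tau}\hat{f}_\theta(\by_i)\cdot\hat{g}_\theta(\bphi_i)$ over $(\bphi_i,\by_i)\sim p(\bphi,\by)$, which is bounded in $[-1/\tau,1/\tau]$ since both embeddings lie on $\mathbb{S}^{n-1}$. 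The strong law of large numbers therefore sends it to the cross-alignment term $-\frac{1}{\tau}\E_{(\bphi,\by)\sim p(\bphi,\by)}[\hat{f}_\theta(\by)\cdot\hat{g}_\theta(\bphi)]$.

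For the second term, I would introduce $\mu_\theta(\by) := \E_{\bphi^-\sim p(\bphi)}[e^{\hat{f}_\theta(\by)\cdot\hat{g}_\theta(\bphi^-)/\tau}]$ and observe that, conditional on $\by_i$, the inner average $\frac{1}{M}\sum_{j=1}^M e^{\hat{f}_\theta(\by_i)\cdot\hat{g}_\theta(\bphi_j)/\tau}$ is essentially an i.i.d.\ sample mean converging almost surely to $\mu_\theta(\by_i)$ by the strong LLN, up to the single diagonal term $j=i$ whose contribution is $O(1/M)$. Continuity of $\log$ then gives convergence of $\log$ of the inner sum to $\log \mu_\theta(\by_i)$. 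The outer average over $i$ is another sample average in $\by_i\sim p(\by)$ that converges to $\E_{\by\sim p(\by)}[\log\mu_\theta(\by)]$, yielding the negative cross-entropy term stated in the lemma.

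The main obstacle will be combining these two nested limits into a single limit in $M$ rigorously. The cleanest route is to exploit the uniform bound $\hat{f}_\theta(\by)\cdot\hat{g}_\theta(\bphi)/\tau \in [-1/\tau,1/\tau]$: this implies $\log$ of the inner sum is uniformly bounded in $[-1/\tau,1/\tau]$ independent of $M$ and of the sample, so dominated convergence lets us swap the outer expectation with the inner limit. A quantitative refinement using Hoeffding's inequality on the bounded summands would give concentration of the inner sum around $\mu_\theta(\by)$ at rate $O(M^{-1/2})$ uniformly in $\by$, which combined with the outer LLN yields the joint convergence in $M$. The $L_{\mathcal{Y}\Phi}$ identity then follows by interchanging the roles of $\bphi$ and $\by$ throughout the argument.
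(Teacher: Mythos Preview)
Your proposal is correct and follows essentially the same approach as the paper: both decompose $L_{\Phi\mathcal{Y}} - \log M$ into the positive-pair term and the $\log$-of-average term, then replace the outer and inner sample averages by the corresponding expectations via the law of large numbers, and obtain $\overline L_{\mathcal{Y}\Phi}$ by symmetry. The paper's proof is considerably more terse---it simply performs the heuristic replacements $\tfrac{1}{M}\sum_i \to \E_{(\bphi,\by)}$ and $\tfrac{1}{M}\sum_j \to \E_{\bphi^-}$ and cites the analogous result in \citet{wang2020understanding}---whereas you supply the boundedness, dominated convergence, and uniform concentration arguments needed to make the nested limit rigorous; these additions are sound but not required for the paper's level of formality.
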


\begin{proof}
We can write the inter-domain InfoNCE loss as
\begin{align}
    &L_{\Phi\mathcal{Y}}(\hat{f}_\btheta, \hat{g}_\btheta, M) - \log M\\
    &\qquad =  -\frac{1}{M}\sum_{i=1}^M\left[\frac{1}{\tau}\hat{f}_\btheta(\by_i)\cdot\hat{g}_\btheta(\bphi_i) - \log\left(\frac{1}{M}\sum_{j=1}^M e^{\hat{f}_\btheta(\by_i)\cdot\hat{g}_\btheta(\bphi_j)/\tau}\right)\right].
\end{align}
Then, taking the limit $M\to\infty$ by replacing
\begin{align}
    \frac{1}{M}\sum_{i=1}^M \longrightarrow \E_{(\bphi,\by) \sim p(\bphi,\by)} \text{\quad and \quad} \frac{1}{M}\sum_{j=1}^M \longrightarrow \E_{\bphi^- \sim p(\bphi)},
\end{align}
we find
\begin{align}
    \overline L_{\Phi\mathcal{Y}}(\hat{f}_\theta,\hat{g}_\theta) &:= \lim_{M\to\infty}L_{\Phi\mathcal{Y}}(\hat{f}_\theta,\hat{g}_\theta,M) - \log M\\
    &= -\frac{1}{\tau}\E_{(\bphi,\by) \sim p(\bphi,\by)}\left[\hat{f}_\theta(\by)\cdot \hat{g}_\theta(\bphi)\right] + \E_{\by \sim p(\by)}\left[\log\E_{\bphi^-\sim p(\bphi)}\left[e^{\hat{f}_\theta(\by)\cdot \hat{g}_\theta(\bphi^-)/\tau}\right]\right].
\end{align}
Similarly,
\begin{align}
    \overline L_{\mathcal{Y}\Phi}(\hat{f}_\theta,\hat{g}_\theta) &:= \lim_{M\to\infty}L_{\mathcal{Y}\Phi}(\hat{f}_\theta,\hat{g}_\theta,M) - \log M\\
    &= -\frac{1}{\tau}\E_{(\bphi,\by) \sim p(\bphi,\by)}\left[\hat{f}_\theta(\by)\cdot \hat{g}_\theta(\bphi)\right] + \E_{\bphi\sim p(\bphi)}\left[\log\E_{\by^- \sim p(\by)}\left[e^{\hat{f}_\theta(\by^-)\cdot \hat{g}_\theta(\bphi)/\tau}\right]\right].
\end{align}
This result is analogous to Theorem 1 in \citet{wang2020understanding}, which gives a similar result for the intra-domain InfoNCE loss.
\end{proof}

\subsection{Learning the parameter posterior by optimizing the InfoNCE loss}
\label{sec:infonceproofs}

\begin{definition}\label{def:modeldist}
Let $\hat{f}_\theta: \mathcal{Y} \to \mathbb{S}^{n-1}, \hat{g}_\theta: \Phi \to \mathbb{S}^{n-1}$ be learnable embedding functions, and define the model posterior and model likelihood distributions
\begin{align}
    \hat{q}_\theta(\bphi\mid\by) &:=  C_{\Phi\mathcal{Y}}(\by)^{-1}\,e^{\hat{f}_\theta(\by)\cdot \hat{g}_\theta(\bphi)/\tau}\,p(\bphi)\\
    \hat{q}_\theta(\by\mid\bphi) &:=  C_{\mathcal{Y}\Phi}(\bphi)^{-1}\,e^{\hat{f}_\theta(\by)\cdot \hat{g}_\theta(\bphi)/\tau}\,p(\by),
\end{align}
where
\begin{align}
    C_{\Phi\mathcal{Y}}(\by) &:= \int e^{\hat{f}_\theta(\by)\cdot \hat{g}_\theta(\bphi^-)/\tau}\,p(\bphi^-)\,\mathrm{d}\bphi^-\\
    C_{\mathcal{Y}\Phi}(\bphi) &:= \int e^{\hat{f}_\theta(\by^-)\cdot \hat{g}_\theta(\bphi)/\tau}\,p(\by^-)\,\mathrm{d}\by^-
\end{align}
are normalization factors. Note that, as defined, these two model distributions are not necessarily related $\hat{q}_\theta(\bphi\mid\by)\,p(\by) \ne \hat{q}_\theta(\by\mid\bphi)\,p(\bphi)$.
\end{definition}

\begin{lemma}\label{lem:crossent}
We can rewrite the asymptotic inter-domain InfoNCE losses as
\begin{align}
    \overline L_{\Phi\mathcal{Y}}(\hat{f}_\theta,\hat{g}_\theta) &= D_\mathrm{KL}(p(\bphi,\by)\,\|\,\hat{q}_\theta(\bphi\mid\by)\,p(\by)) - I(\bphi,\by)\\
    \overline L_{\mathcal{Y}\Phi}(\hat{f}_\theta,\hat{g}_\theta) &= D_\mathrm{KL}(p(\bphi,\by)\,\|\,\hat{q}_\theta(\by\mid\bphi)\,p(\bphi)) - I(\bphi,\by),
\end{align}
where $D_\mathrm{KL}$ is the Kullback--Leibler (KL) divergence and $I(\bphi,\by) := D_\mathrm{KL}(p(\bphi,\by)\,\|\,p(\bphi)\,p(\by))$ is the mutual information between $\bphi$ and $\by$. Note that $I(\bphi,\by)$ is a constant, so minimizing these losses is equivalent to minimizing a KL divergence.
\end{lemma}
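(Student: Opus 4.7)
The plan is to proceed by direct algebraic manipulation, starting from the expressions given by Lemma~\ref{lem:asymtotics} and Definition~\ref{def:modeldist}. The key observation is that the second (negative cross-entropy) term in the asymptotic loss is exactly $\E_{\by\sim p(\by)}[\log C_{\Phi\mathcal{Y}}(\by)]$, since $C_{\Phi\mathcal{Y}}(\by) = \int e^{\hat{f}_\theta(\by)\cdot \hat{g}_\theta(\bphi^-)/\tau}\,p(\bphi^-)\,\mathrm{d}\bphi^-$ is precisely the inner expectation appearing in $\overline L_{\Phi\mathcal{Y}}$. Similarly, the second term of $\overline L_{\mathcal{Y}\Phi}$ equals $\E_{\bphi\sim p(\bphi)}[\log C_{\mathcal{Y}\Phi}(\bphi)]$.

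Next, I would expand the target KL divergence directly. Writing $p(\bphi,\by) = p(\bphi\mid\by)\,p(\by)$, one has
\begin{align*}
    D_\mathrm{KL}(p(\bphi,\by)\,\|\,\hat{q}_\theta(\bphi\mid\by)\,p(\by))
    = \E_{(\bphi,\by)\sim p(\bphi,\by)}\left[\log\frac{p(\bphi\mid\by)}{\hat{q}_\theta(\bphi\mid\by)}\right].
\end{align*}
Substituting the definition $\hat{q}_\theta(\bphi\mid\by) = C_{\Phi\mathcal{Y}}(\by)^{-1}\,e^{\hat{f}_\theta(\by)\cdot \hat{g}_\theta(\bphi)/\tau}\,p(\bphi)$, the log splits into four pieces: $\log p(\bphi\mid\by)$, $-\log p(\bphi)$, $\log C_{\Phi\mathcal{Y}}(\by)$, and $-\hat{f}_\theta(\by)\cdot \hat{g}_\theta(\bphi)/\tau$. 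The first two, taken together under the expectation over $p(\bphi,\by)$, yield $\E[\log(p(\bphi\mid\by)/p(\bphi))] = I(\bphi,\by)$ by definition of mutual information. The remaining two, by the observation above, combine into exactly $\overline L_{\Phi\mathcal{Y}}(\hat{f}_\theta,\hat{g}_\theta)$. Rearranging yields the first claim.

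The second identity follows by a symmetric calculation: expand $D_\mathrm{KL}(p(\bphi,\by)\,\|\,\hat{q}_\theta(\by\mid\bphi)\,p(\bphi))$ using $p(\bphi,\by) = p(\by\mid\bphi)\,p(\bphi)$ and substitute the definition of $\hat{q}_\theta(\by\mid\bphi)$, with the roles of $\bphi$ and $\by$ exchanged throughout. Since the inner product $\hat{f}_\theta(\by)\cdot \hat{g}_\theta(\bphi)$ is symmetric in the two variables, the cross-alignment term contributes identically, while the normalization factor now reproduces the $\overline L_{\mathcal{Y}\Phi}$ negative cross-entropy term.

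There is no real obstacle here — the result is essentially a bookkeeping exercise that reinterprets the InfoNCE loss as a cross-entropy between $p(\bphi,\by)$ and the model $\hat{q}_\theta(\bphi\mid\by)\,p(\by)$ (respectively $\hat{q}_\theta(\by\mid\bphi)\,p(\bphi)$). The only subtle point worth flagging is that $\hat{q}_\theta(\bphi\mid\by)\,p(\by)$ and $\hat{q}_\theta(\by\mid\bphi)\,p(\bphi)$ are, in general, two distinct joint distributions (since $C_{\Phi\mathcal{Y}}$ and $C_{\mathcal{Y}\Phi}$ need not coincide), which is why the one-sided losses produce two separate KL divergences; this is the technical reason the symmetric loss of Theorem~\ref{thm:main} is needed to force the normalizer to be $\by$-independent.
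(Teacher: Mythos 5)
Your proposal is correct and is essentially the paper's own argument run in reverse: the paper starts from the asymptotic loss of Lemma~\ref{lem:asymtotics}, identifies the negative cross-entropy term with $\E_{\by\sim p(\by)}[\log C_{\Phi\mathcal{Y}}(\by)]$, and regroups into $D_\mathrm{KL}(p(\bphi,\by)\,\|\,\hat{q}_\theta(\bphi\mid\by)\,p(\by)) - I(\bphi,\by)$, whereas you expand the KL divergence and recover the loss---the same substitution and bookkeeping either way, with the second identity obtained by the same swap of roles. Your closing remark that $\hat{q}_\theta(\bphi\mid\by)\,p(\by)$ and $\hat{q}_\theta(\by\mid\bphi)\,p(\bphi)$ are in general distinct joints is accurate and matches the note in Definition~\ref{def:modeldist}.
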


\begin{proof}
Noting that the normalization constant can be written as
\begin{align}
    C_{\Phi\mathcal{Y}}(\by) = \E_{\bphi^-\sim p(\bphi)}\left[e^{\hat{f}_\theta(\by)\cdot \hat{g}_\theta(\bphi^-)/\tau}\right],
\end{align}
the asymptotic inter-domain InfoNCE loss (Lemma~\ref{lem:asymtotics})
\begin{align}
    \overline L_{\Phi\mathcal{Y}}(\hat{f}_\theta,\hat{g}_\theta) &= -\frac{1}{\tau}\E_{(\bphi,\by) \sim p(\bphi,\by)}\left[\hat{f}_\theta(\by)\cdot \hat{g}_\theta(\bphi)\right] + \E_{\by \sim p(\by)}\left[\log\E_{\bphi^-\sim p(\bphi)}\left[e^{\hat{f}_\theta(\by)\cdot \hat{g}_\theta(\bphi^-)/\tau}\right]\right]\\
    &= -\frac{1}{\tau}\E_{(\bphi,\by) \sim p(\bphi,\by)}\left[\hat{f}_\theta(\by)\cdot \hat{g}_\theta(\bphi)\right] + \E_{\by \sim p(\by)}\left[\log C_{\Phi\mathcal{Y}}(\by)\right]\\
    &=\E_{(\bphi,\by) \sim p(\bphi,\by)}\left[-\log\left(C_{\Phi\mathcal{Y}}(\by)^{-1}\,e^{\hat{f}_\theta(\by)\cdot \hat{g}_\theta(\bphi)/\tau}\,p(\bphi)\right) + \log p(\bphi)\right]\\
    &=\E_{(\bphi,\by) \sim p(\bphi,\by)}\left[-\log\hat{q}_\theta(\bphi\mid\by) + \log p(\bphi)\right]\\
    &=\E_{(\bphi,\by) \sim p(\bphi,\by)}\left[-\log\frac{\hat{q}_\theta(\bphi\mid\by)\,p(\by)}{p(\bphi,\by)} - \log\frac{p(\bphi,\by)}{p(\bphi)\,p(\by)}\right]\\
    &= D_\mathrm{KL}(p(\bphi,\by)\,\|\,\hat{q}_\theta(\bphi\mid\by)\,p(\by)) - I(\bphi,\by).
\end{align}
Similarly, by swapping $\hat{f}\leftrightarrow\hat{g}$, $\by\leftrightarrow\bphi$, and $\mathcal{Y}\leftrightarrow\Phi$, we can derive
\begin{align}
    \overline L_{\mathcal{Y}\Phi}(\hat{f}_\theta,\hat{g}_\theta) &= D_\mathrm{KL}(p(\bphi,\by)\,\|\,\hat{q}_\theta(\by\mid\bphi)\,p(\bphi)) - I(\bphi,\by).
\end{align}
\end{proof}

\assumpflexiblemain*

\begin{lemma}\label{lem:ratioconstant}
Given Assumption~\ref{assump:flexiblemain}, the likelihood-to-evidence ratio can be written
\begin{align}
    r(\bphi,\by) := \frac{p(\bphi,\by)}{p(\bphi)\,p(\by)} 
    &= \hat{q}_{\theta^*}(\bphi\mid\by)/p(\bphi)\\
    &= \hat{q}_{\theta^*}(\by\mid\bphi)/p(\by)\\
    &= C^{*-1}\,e^{\hat{f}_{\theta^*}(\by)\cdot \hat{g}_{\theta^*}(\bphi)/\tau},
\end{align}
where the normalization constant
\begin{align}
    C^* := C_{\Phi\mathcal{Y}}(\by; \theta^*) = C_{\mathcal{Y}\Phi}(\bphi; \theta^*)
\end{align}
does not vary with $\by$ or $\bphi$.
\end{lemma}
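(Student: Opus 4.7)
The plan is to exploit the two different factorizations of the joint distribution $p(\bphi,\by) = p(\bphi\mid\by)\,p(\by) = p(\by\mid\bphi)\,p(\bphi)$ and match them against the two equations supplied by Assumption~\ref{assump:flexiblemain}. First, I would divide the first equation of Assumption~\ref{assump:flexiblemain} by $p(\bphi)$, yielding
\begin{equation*}
    r(\bphi,\by) = \frac{p(\bphi\mid\by)}{p(\bphi)} = C_{\Phi\mathcal{Y}}(\by;\theta^*)^{-1}\,e^{\hat{f}_{\theta^*}(\by)\cdot\hat{g}_{\theta^*}(\bphi)/\tau},
\end{equation*}
and, analogously, divide the second by $p(\by)$ to get
\begin{equation*}
    r(\bphi,\by) = \frac{p(\by\mid\bphi)}{p(\by)} = C_{\mathcal{Y}\Phi}(\bphi;\theta^*)^{-1}\,e^{\hat{f}_{\theta^*}(\by)\cdot\hat{g}_{\theta^*}(\bphi)/\tau}.
\end{equation*}
At this point both expressions for $r(\bphi,\by)$ are immediate from the definition of the likelihood-to-evidence ratio, and they give the stated identities $\hat{q}_{\theta^*}(\bphi\mid\by)/p(\bphi) = \hat{q}_{\theta^*}(\by\mid\bphi)/p(\by) = r(\bphi,\by)$ once we recall the definitions in Definition~\ref{def:modeldist}.

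The second step is to show that these two normalization factors are actually a single constant. Equating the two expressions for $r(\bphi,\by)$ and cancelling the common exponential factor $e^{\hat{f}_{\theta^*}(\by)\cdot\hat{g}_{\theta^*}(\bphi)/\tau}$ (which is strictly positive) yields
\begin{equation*}
    C_{\Phi\mathcal{Y}}(\by;\theta^*) = C_{\mathcal{Y}\Phi}(\bphi;\theta^*)
\end{equation*}
for all $\bphi,\by$ in the supports of their marginals. The left-hand side depends only on $\by$ while the right-hand side depends only on $\bphi$, so the usual separation-of-variables argument forces both to equal a single constant, which we name $C^*$. Substituting this back gives the final closed form $r(\bphi,\by) = C^{*-1}\,e^{\hat{f}_{\theta^*}(\by)\cdot\hat{g}_{\theta^*}(\bphi)/\tau}$.

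The content of this lemma is essentially a book-keeping consequence of Assumption~\ref{assump:flexiblemain}, so I do not expect a genuine obstacle. The only subtlety worth flagging is the separation-of-variables step: strictly speaking, $C_{\Phi\mathcal{Y}}(\by;\theta^*) = C_{\mathcal{Y}\Phi}(\bphi;\theta^*)$ is an equality that need only hold $p(\bphi,\by)$-almost everywhere, so the constant $C^*$ is well-defined a.e.\ and then extends to all of $\mathrm{supp}(p(\by))\times\mathrm{supp}(p(\bphi))$ by the continuity implicit in the definitions of the normalization integrals. This is the only place I would be careful; the rest is direct substitution.
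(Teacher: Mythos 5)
Your proposal is correct and follows essentially the same route as the paper: divide the two exponential-family identities of Assumption~\ref{assump:flexiblemain} by $p(\bphi)$ and $p(\by)$ respectively, equate the resulting expressions for $r(\bphi,\by)$, cancel the common exponential, and conclude by separation of variables that $C_{\Phi\mathcal{Y}}(\by;\theta^*) = C_{\mathcal{Y}\Phi}(\bphi;\theta^*)$ is a single constant $C^*$. Your remark on the almost-everywhere subtlety is a reasonable extra precaution but not a departure from the paper's argument, which asserts the identity for all $\by\in\mathcal{Y}$, $\bphi\in\Phi$.
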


\begin{proof}
Using Assumption~\ref{assump:flexiblemain} and recalling Definition~\ref{def:modeldist},
\begin{align}
    \hat{q}_{\theta^*}(\bphi\mid\by)/p(\bphi) &= \hat{q}_{\theta^*}(\by\mid\bphi)/p(\by)\\
    C_{\Phi\mathcal{Y}}(\by; \theta^*)^{-1}\,e^{\hat{f}_{\theta^*}(\by)\cdot \hat{g}_{\theta^*}(\bphi)/\tau} &= C_{\mathcal{Y}\Phi}(\bphi; \theta^*)^{-1}\,e^{\hat{f}_{\theta^*}(\by)\cdot \hat{g}_{\theta^*}(\bphi)/\tau}\\
    C_{\Phi\mathcal{Y}}(\by; \theta^*) &= C_{\mathcal{Y}\Phi}(\bphi; \theta^*)
\end{align}
for all $\by \in \mathcal{Y}$ and $\bphi \in \Phi$, we can conclude that $C_{\Phi\mathcal{Y}}(\by; \theta^*)$ and $C_{\mathcal{Y}\Phi}(\bphi; \theta^*)$ must independent of both $\by$ and $\bphi$.
\end{proof}

\thmmain*

\begin{proof}
Lemma~\ref{lem:crossent} implies that
\begin{align}
    \overline L_\mathrm{sym}(\hat{f}_\theta,\hat{g}_\theta) &= \overline L_{\Phi\mathcal{Y}}(\hat{f}_\theta,\hat{g}_\theta) + \overline L_{\mathcal{Y}\Phi}(\hat{f}_\theta,\hat{g}_\theta)\\
    \begin{split}
    &= D_\mathrm{KL}(p(\bphi,\by)\,\|\,\hat{q}_\theta(\bphi\mid\by)\,p(\by))\\
    &\qquad + D_\mathrm{KL}(p(\bphi,\by)\,\|\,\hat{q}_\theta(\by\mid\bphi)\,p(\bphi)) - 2\,I(\bphi,\by)
    \end{split}\label{eqn:symlossKL}\\
    &\ge D_\mathrm{KL}(p(\bphi,\by)\,\|\,\hat{q}_\theta(\bphi\mid\by)\,p(\by)) - 2\,I(\bphi,\by)\\
    &\ge - 2\,I(\bphi,\by).
\end{align}
Given Assumption~\ref{assump:flexiblemain}, there exists $\theta^*$ such that
\begin{align}
    p(\bphi\mid\by) &= \hat{q}_{\theta^*}(\bphi\mid\by)\\
    p(\by\mid\bphi) &= \hat{q}_{\theta^*}(\by\mid\bphi).
\end{align}
Plugging this into (\ref{eqn:symlossKL}), we find
\begin{align}
    \overline L_\mathrm{sym}(\hat{f}_{\theta^*},\hat{g}_{\theta^*}) = - 2\,I(\bphi,\by) = \min_\theta \overline L_\mathrm{sym}(\hat{f}_\theta,\hat{g}_\theta),
\end{align}
i.e., $\theta^*$ is a global minimizer for $\overline L_\mathrm{sym}$. Furthermore, since $D_\mathrm{KL}(a\,\|\,b) = 0 \Leftrightarrow a = b$, any global minimizer $\theta^* \in \argmin_\theta \overline L_\mathrm{sym}(\hat{f}_\theta,\hat{g}_\theta)$ must have
\begin{align}
    \hat{q}_{\theta^*}(\bphi\mid\by) &= p(\bphi\mid\by)\\
    \hat{q}_{\theta^*}(\by\mid\bphi) &= p(\by\mid\bphi),
\end{align}
and therefore, by Lemma~\ref{lem:ratioconstant},
\begin{align}
    \hat{r}_{\theta^*}(\bphi,\by) &:= \hat{q}_{\theta^*}(\bphi\mid\by)/p(\bphi) = C^{*-1}\,e^{\hat{f}_{\theta^*}(\by)\cdot \hat{g}_{\theta^*}(\bphi)/\tau} = r(\bphi,\by).
\end{align}
\end{proof}

\begin{corollary}\label{cor:single}
The asymptotic one-sided inter-domain InfoNCE loss has the form
\begin{align}
    \overline L_{\Phi\mathcal{Y}}(\hat{f}_\theta,\hat{g}_\theta) &= D_\mathrm{KL}(p(\bphi,\by)\,\|\,\hat{q}_\theta(\bphi\mid\by)\,p(\by)) - I(\bphi,\by).
\end{align}
Assuming there exists $\theta^*$ such that $p(\bphi\mid\by) = \hat{q}_{\theta^*}(\bphi\mid\by)$ ((\ref{eqn:assump1}) from Assumption~\ref{assump:flexiblemain}), the global minimum is
\begin{align}
    \min_\theta \overline L_{\Phi\mathcal{Y}}(\hat{f}_\theta,\hat{g}_\theta) = -I(\bphi,\by),
\end{align}
and, for any global minimizer $\theta^* \in \argmin_{\theta}  \overline L_{\Phi\mathcal{Y}}(\hat{f}_\theta,\hat{g}_\theta)$, the model posterior
\begin{align}
    \hat{q}_{\theta^*}(\bphi\mid\by) := \hat{r}_{\theta^*}(\bphi,\by)\,p(\bphi) = p(\bphi\mid\by)
\end{align}
and model likelihood-to-evidence ratio
\begin{align}
    \hat{r}_{\theta^*}(\bphi,\by) = C(\by)^{-1}\,e^{\hat{f}_{\theta^*}(\by)\cdot \hat{g}_{\theta^*}(\bphi)/\tau} = r(\bphi,\by),
\end{align}
where $C(\by)$ is data-dependent normalization factor.
\end{corollary}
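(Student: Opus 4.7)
The plan is to treat this as a one-sided specialization of the argument behind Theorem~\ref{thm:main}, reusing the ingredients already developed in the paper. The first displayed equality is nothing more than the statement of Lemma~\ref{lem:crossent} (top half), so I would simply cite it and move on without rederiving it. The asymmetric assumption (only (\ref{eqn:assump1})) is precisely what is needed for the $\Phi\mathcal{Y}$ branch, since $\overline L_{\Phi\mathcal{Y}}$ only involves $\hat{q}_\theta(\bphi\mid\by)$ and not $\hat{q}_\theta(\by\mid\bphi)$.

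From there, non-negativity of KL divergence gives the universal lower bound $\overline L_{\Phi\mathcal{Y}}(\hat{f}_\theta,\hat{g}_\theta)\ge -I(\bphi,\by)$. To see the bound is tight, I would plug the hypothesized $\theta^*$ (from (\ref{eqn:assump1})) into Lemma~\ref{lem:crossent}: since $\hat{q}_{\theta^*}(\bphi\mid\by)\,p(\by) = p(\bphi\mid\by)\,p(\by) = p(\bphi,\by)$, the KL term vanishes and $\overline L_{\Phi\mathcal{Y}}(\hat{f}_{\theta^*},\hat{g}_{\theta^*}) = -I(\bphi,\by)$, so $\theta^*$ is a global minimizer. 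Conversely, for any global minimizer $\theta^\dagger$, attaining the lower bound forces $D_\mathrm{KL}(p(\bphi,\by)\,\|\,\hat{q}_{\theta^\dagger}(\bphi\mid\by)\,p(\by)) = 0$, hence (by the identification-of-measures property of KL) $\hat{q}_{\theta^\dagger}(\bphi\mid\by)\,p(\by) = p(\bphi,\by)$ almost everywhere, and dividing by $p(\by)$ yields $\hat{q}_{\theta^\dagger}(\bphi\mid\by) = p(\bphi\mid\by)$.

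The ratio statement then follows by definition: $\hat{r}_{\theta^\dagger}(\bphi,\by) = \hat{q}_{\theta^\dagger}(\bphi\mid\by)/p(\bphi) = p(\bphi\mid\by)/p(\bphi) = r(\bphi,\by)$, and the exponential-family form of $\hat{q}_{\theta^\dagger}$ (Definition~\ref{def:modeldist}) shows that $\hat{r}_{\theta^\dagger}(\bphi,\by) = C_{\Phi\mathcal{Y}}(\by;\theta^\dagger)^{-1}e^{\hat{f}_{\theta^\dagger}(\by)\cdot \hat{g}_{\theta^\dagger}(\bphi)/\tau}$, so we take $C(\by) := C_{\Phi\mathcal{Y}}(\by;\theta^\dagger)$.

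There isn't really a hard step; the only subtlety worth flagging in the write-up is what is \emph{not} proved here, which is the whole point of stating this as a contrast to Theorem~\ref{thm:main}. Lemma~\ref{lem:ratioconstant} was what eliminated the $\by$-dependence of the normalizer in the symmetric case, and its hypothesis required \emph{both} halves of Assumption~\ref{assump:flexiblemain} together with both KL terms vanishing simultaneously. With only the one-sided loss, the second KL term in (\ref{eqn:symlossKL}) is absent, so no pressure is placed on $\hat{q}_\theta(\by\mid\bphi)$ to match $p(\by\mid\bphi)$, and hence no mechanism forces $C_{\Phi\mathcal{Y}}(\by;\theta^\dagger)$ to be constant in $\by$. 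I would make this explicit at the end as the motivation for why the symmetric loss is preferred in practice, matching the discussion in Section~\ref{sec:maintheory} and the empirical evidence in Section~\ref{sec:ablation_symmetry}.
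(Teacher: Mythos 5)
Your proposal is correct and follows essentially the same route as the paper's proof: invoke Lemma~\ref{lem:crossent} for the one-sided decomposition, use non-negativity of the KL divergence for the lower bound $-I(\bphi,\by)$, achieve it at the $\theta^*$ guaranteed by (\ref{eqn:assump1}), and conclude that any minimizer recovers $p(\bphi\mid\by)$ and hence $r(\bphi,\by)$ with a normalizer that may still depend on $\by$. Your closing remark on why no mechanism forces $C_{\Phi\mathcal{Y}}(\by;\theta^*)$ to be constant is a nice articulation of the contrast with Theorem~\ref{thm:main}, but it is commentary rather than a different argument.
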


\begin{proof}
This follows analogously to the proof of Theorem~\ref{thm:main}. Lemma~\ref{lem:crossent} implies that
\begin{align}
    \overline L_{\Phi\mathcal{Y}}(\hat{f}_\theta,\hat{g}_\theta) &= D_\mathrm{KL}(p(\bphi,\by)\,\|\,\hat{q}_\theta(\bphi\mid\by)\,p(\by)) - I(\bphi,\by) \ge -I(\bphi,\by).
\end{align}
Thus, given there exists $\theta^*$ such that $p(\bphi\mid\by) = \hat{q}_{\theta^*}(\bphi\mid\by)$, $\theta^*$ must be a global minimizer with
\begin{align}
    \overline L_{\Phi\mathcal{Y}}(\hat{f}_{\theta^*},\hat{g}_{\theta^*}) = -I(\bphi,\by) = \min_\theta \overline L_{\Phi\mathcal{Y}}(\hat{f}_\theta,\hat{g}_\theta).
\end{align}
Furthermore, for any $\theta^* \in \argmin_\theta \overline L_{\Phi\mathcal{Y}}(\hat{f}_\theta,\hat{g}_\theta)$,
\begin{align}
    \hat{q}_{\theta^*}(\bphi\mid\by) = p(\bphi\mid\by)
\end{align}
and therefore
\begin{align}
    \hat{r}_{\theta^*}(\bphi,\by) &= \hat{q}_{\theta^*}(\bphi\mid\by)/p(\bphi) = C(\by)^{-1}\,e^{\hat{f}_{\theta^*}(\by)\cdot \hat{g}_{\theta^*}(\bphi)/\tau} = r(\bphi,\by).
\end{align}
\end{proof}

\corsufficientstatisticmain*

\begin{proof}
Given Theorem~\ref{thm:main} or Corollary~\ref{cor:single}, we can write the posterior as
\begin{align}
    p(\bphi\mid\by) = \hat{q}_{\theta^*}(\bphi\mid\by) =  C'(\hat{f}_{\theta^*}(\by))^{-1}\,e^{\hat{f}_{\theta^*}(\by)\cdot \hat{g}_{\theta^*}(\bphi)/\tau}\,p(\bphi),
\end{align}
where we have rewritten the normalization factor $C(\by)$ as $C'(\hat{f}_{\theta^*}(\by))$ to emphasize its dependence on only $\hat{f}_{\theta^*}(\by)$. This fulfills the condition for Bayesian sufficiency
\begin{align}
    p(\bphi\mid\by) = p(\bphi\mid\hat{f}_{\theta^*}(\by)).
\end{align}

We can also write the likelihood as
\begin{align}
    p(\by\mid\bphi) = p(\bphi\mid\by)\,p(\by)/p(\bphi) &=  C'(\hat{f}_{\theta^*}(\by))^{-1}\,e^{\hat{f}_{\theta^*}(\by)\cdot \hat{g}_{\theta^*}(\bphi)/\tau}\,p(\by)\\
    &= a(\by)\,b_\bphi(\hat{f}_{\theta^*}(\by)),
\end{align}
where $a(\by) = p(\by)$ and $b_\bphi(\hat{f}_{\theta^*}(\by)) = C'(\hat{f}_{\theta^*}(\by))^{-1}\,e^{\hat{f}_{\theta^*}(\by)\cdot \hat{g}_{\theta^*}(\bphi)/\tau}$. Therefore, $\hat{f}_{\theta^*}(\by)$ is a sufficient statistic for $\bphi$ by the Fisher--Neyman factorization theorem.
\end{proof}

\subsection{Optimal data compression and non-identifiable parameters}

\thmembeddings*

\begin{proof}
Since
\begin{align}
    r(\bphi,\by) &= \frac{p(\bphi\mid\by)}{p(\bphi)} = \frac{p(\bphi\mid S(\by))}{p(\bphi)}\\
    r(\bphi,\by) &= \frac{p(\by\mid\bphi)}{p(\by)} = \frac{p(\by\mid\Pi(\bphi))}{p(\by)},
\end{align}
the likelihood-to-evidence ratio
\begin{align}
    r(\bphi,\by) = r'(\Pi(\bphi), S(\by))
\end{align}
must be expressible as a function $r'$ of only $S(\by)$ and $\Pi(\bphi)$. From Theorem~\ref{thm:main}, we have
\begin{align}
    \hat{f}_{\theta^*}(\by)\cdot \hat{g}_{\theta^*}(\bphi) &= \tau(\log r(\bphi,\by) + \log C^*)\\
    &= \tau(\log r'(\Pi(\bphi), S(\by)) + \log C^*),
\end{align}
so $(\by,\bphi) \mapsto \hat{f}_{\theta^*}(\by)\cdot\hat{g}_{\theta^*}(\bphi)$ must also be expressible as a function of only $S(\by)$ and $\Pi(\bphi)$. This means that $\forall \by_1,\by_2 \in \mathcal{Y} : S(\by_1) = S(\by_2)$ and $\forall \bphi_1,\bphi_2 \in \Phi : \Pi(\bphi_1) = \Pi(\bphi_2)$,
\begin{align}\label{eqn:embeddingequiv}
    \hat{f}_{\theta^*}(\by_1)\cdot \hat{g}_{\theta^*}(\bphi_1) = \hat{f}_{\theta^*}(\by_2)\cdot \hat{g}_{\theta^*}(\bphi_2),
\end{align}
i.e., changes in $\by,\bphi$ that do not alter $S(\by),\Pi(\bphi)$ also do not affect the likelihood-to-evidence ratio.\\

\noindent Since $S$ and $\Pi$ are surjective, let $S^\dagger: \mathcal{M}\to\mathcal{Y}$ be a right-inverse of $S$ (i.e., $S \circ S^\dagger = \mathrm{id}_\mathcal{M}$), and let $\Pi^\dagger: \Psi\to\Phi$ be a right-inverse of $\Pi$ (i.e., $\Pi \circ \Pi^\dagger = \mathrm{id}_\Psi$).\footnote{Note that surjectivity is not a true constraint on $S$ and $\Pi$ since we can always redefine $\mathcal{M} = \mathrm{Im}(S)$ and $\Psi = \mathrm{Im}(\Pi)$ so that $S$ and $\Pi$ are always surjective.} Now, define $\hat{f}_\mathcal{M}:\mathcal{M}\to\mathbb{S}^{n-1}$ and $\hat{g}_\Psi:\Psi\to\mathbb{S}^{n-1}$ to be
\begin{align}
    \hat{f}_\mathcal{M} &:= \hat{f}_{\theta^*}\circ S^\dagger\\
    \hat{g}_\Psi &:= \hat{g}_{\theta^*}\circ \Pi^\dagger.
\end{align}
Then, because $S(S^\dagger\circ S(\by)) = S(\by)$ and $\Pi(\Pi^\dagger\circ\Pi(\bphi)) = \Pi(\bphi)$, we can use (\ref{eqn:embeddingequiv}) to conclude that, for all $\by \in \mathcal{Y},\bphi \in \Phi$,
\begin{align}
    \hat{f}_{\theta^*}(\by)\cdot \hat{g}_{\theta^*}(\bphi) &= \hat{f}_{\theta^*}(S^\dagger \circ S(\by))\cdot \hat{g}_{\theta^*}(\Pi^\dagger \circ \Pi(\bphi))\\
    &= \hat{f}_\mathcal{M}(S(\by))\cdot\hat{g}_\Psi(\Pi(\bphi)),
\end{align}
and thus
\begin{align}
    r(\bphi,\by) = r'(\Pi(\bphi), S(\by)) = C^{*-1}\,e^{\hat{f}_\mathcal{M}(S(\by))\cdot \hat{g}_\Psi(\Pi(\bphi))/\tau}.
\end{align}
\end{proof}

\corminimal*

\begin{proof}
Let $S(\by)$ be a minimal sufficient statistic for $\bphi$ (with $\by \sim G(\bphi)$), and $\Pi(\bphi) = \bphi$ be the identity. Applying Theorem~\ref{thm:embeddings} and letting
\begin{align}
    \hat{f}_\mathrm{opt} := \hat{f}_\mathcal{M}\circ S,
\end{align}
we can express the likelihood-to-evidence ratio as
\begin{align}
    r(\bphi,\by) = C^{*-1}\,e^{\hat{f}_\mathrm{opt}(\by)\cdot \hat{g}_{\theta^*}(\bphi)/\tau}.
\end{align}
Furthermore, due to the exponential family form of the likelihood-to-evidence ratio, for $\by \sim G(\bphi)$, $\hat{f}_\mathrm{opt}(\by)$ is a sufficient statistic for $\bphi$. In fact, $\hat{f}_\mathrm{opt}(\by)$ is a \emph{minimal} sufficient statistic for $\bphi$ since $\hat{f}_\mathrm{opt}(\by) := \hat{f}_\mathcal{M}(S(\by))$ is a function of another minimal sufficient statistic $S(\by)$.
\end{proof}

\corredundant*

\begin{proof}
Let $S(\by) = \by$ be the identity and $\Pi(\bphi) = \boldsymbol{\psi} : \bphi \in \bphi \in \Phi_\mathrm{eq}^{(\boldsymbol{\psi})}$ be the projection operator. Applying Theorem~\ref{thm:embeddings} and letting
\begin{align}
    \hat{g}_\mathrm{id} := \hat{g}_\Psi\circ \Pi,
\end{align}
we can express the likelihood-to-evidence ratio as
\begin{align}
    r(\bphi,\by) = C^{*-1}\,e^{\hat{f}_{\theta^*}(\by)\cdot \hat{g}_\mathrm{id}(\bphi)/\tau}.
\end{align}
\end{proof}

\subsection{Understanding the learned latent space}

\lemnormconstant*

\begin{proof}
Using the constraints from Definition~\ref{def:toymodel}, we can write the marginal
\begin{align}
    p(\by) &= \int_\mathcal{Z} p(\by\mid\bz)\,p(\bz)\,\mathrm{d}\bz\\
    &= \int_\mathcal{Z} C_{\mathcal{Y}\mathcal{Z}}^{-1}\,\eta(\by)\,\delta(f(\by) - \bz)\,|\mathcal{Z}|^{-1}\,\mathrm{d}\bz\\
    &= C_{\mathcal{Y}\mathcal{Z}}^{-1}|\mathcal{Z}|^{-1}\eta(\by)\\
    &= \eta(\by),
\end{align}
where $C_{\mathcal{Y}\mathcal{Z}} = |\mathcal{Z}|^{-1}$ since both $p(\by)$ and $\eta(\by)$ are normalized distributions. Inserting this result into (\ref{eqn:toymodel}) and integrating, we obtain
\begin{align}
    p(\by\mid \bphi) &= |\mathcal{Z}|C_\kappa^{-1}\left[\int_\mathcal{Z} e^{\kappa\,\bz\cdot g(\bphi)}\,\delta(f(\by) - \bz)\,\mathrm{d}\bz\right]\,p(\by)\\
    &= |\mathcal{Z}|C_\kappa^{-1}e^{\kappa\,f(\by)\cdot g(\bphi)}\,p(\by).
\end{align}
\end{proof}

\thmrotation*

\begin{proof}
Using the result from Theorem~\ref{thm:main} and Lemma~\ref{lem:normconstant}, we have
\begin{align}
    \hat{q}_{\theta^*}(\bphi,\by) &= p(\bphi,\by)\\
    C^{*-1}\,e^{\hat{f}_{\theta^*}(\by)\cdot \hat{g}_{\theta^*}(\bphi)/\tau}\,p(\by)\,p(\bphi) &= |\mathcal{Z}|C_{\kappa}^{-1}e^{\kappa\,f(\by)\cdot g(\bphi)}\,p(\by)\,p(\bphi)
\end{align}
so, given $\tau = 1/\kappa$,
\begin{align}\label{eqn:matchingcondition}
    \hat{f}_{\theta^*}(\by)\cdot \hat{g}_{\theta^*}(\bphi) &= f(\by)\cdot g(\bphi) +\log(|\mathcal{Z}|C^*/C_{\kappa})/\kappa.
\end{align}
Recalling that $\hat{f}_{\theta^*}(\by), \hat{g}_{\theta^*}(\bphi), f(\by), g(\bphi) \in \mathbb{S}^{n-1}$, we have $\hat{f}_{\theta^*}(\by)\cdot \hat{g}_{\theta^*}(\bphi),\ f(\by)\cdot g(\bphi) \in [-1,1]$ where $f(\by)\cdot g(\bphi)$ is known to cover the full range due to constraint (iii) in Definition~\ref{def:toymodel}. Since (\ref{eqn:matchingcondition}) must hold for all $\by$, $\bphi$, the additional constant $\log(|\mathcal{Z}|C^*/C_{\kappa})/\kappa$ must vanish, giving
\begin{align}\label{eqn:innerproductpreserved}
    \hat{f}_{\theta^*}(\by)\cdot \hat{g}_{\theta^*}(\bphi) &= f(\by)\cdot g(\bphi).
\end{align}
By an extension of the Mazur--Ulam theorem \citep{zimmermann2021contrastive}, since (\ref{eqn:innerproductpreserved}) implies the metric on $\mathbb{S}^{n-1}$ is preserved, the learned embedding and the original latent space can only differ by an isometry, i.e.,
\begin{align}
    \hat{f}_{\theta^*}(\by) &= Rf(\by)\\
    \hat{g}_{\theta^*}(\bphi) &= Rg(\bphi)
\end{align}
for some orthogonal matrix $R \in \mathrm{SO}(n)$.
\end{proof}

\subsection{Using alternative parameter inference priors}\label{sec:altpriors}
In many applications, the parameter prior distribution that we want to use during inference is different from the prior used to collect or generate the data. In fact, the true parameter prior in the data is often unknown and must be empirically estimated if required. Fortunately, our approach allows us to easily use a different prior $\widetilde{p}(\bphi)$ during inference than the prior $p(\bphi)$ from the data.

\begin{corollary}
Using an alternative prior $\widetilde{p}(\bphi)$ gives the posterior
\begin{align}
    \widetilde{p}(\bphi\mid\by) \propto p(\by\mid\bphi)\,\widetilde{p}(\bphi).
\end{align}
Then, from Theorem~\ref{thm:main} and assuming $\mathrm{supp}(\widetilde{p}(\bphi)) \subseteq \mathrm{supp}(p(\bphi))$, we have
\begin{align}
    \widetilde{q}_{\theta^*}(\bphi\mid\by) &:= \widetilde{C}^{*-1}(\by)\,e^{\hat{f}_{\theta^*}(\by)\cdot \hat{g}_{\theta^*}(\bphi)/\tau}\,\widetilde{p}(\bphi)\\
    &= \frac{C^*}{\widetilde{C}^*(\by)}\cdot \frac{p(\bphi\mid\by)\,\widetilde{p}(\bphi)}{p(\bphi)}\\
    &= \frac{C^*}{\widetilde{C}^*(\by)}\cdot \frac{p(\by\mid\bphi)\,\widetilde{p}(\bphi)}{p(\by)}\\
    &= \widetilde{p}(\bphi\mid\by),
\end{align}
where the normalization constant $\widetilde{C}^*(\by) = C^*\,\widetilde{p}(\by)/p(\by)$.
\end{corollary}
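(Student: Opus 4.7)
The plan is to substitute the trained model ratio into the definition of $\widetilde{q}_{\theta^*}$ and reduce to the Bayesian expression for the posterior under the new prior. By Theorem~\ref{thm:main}, at any global optimum $\theta^*$, we have
\begin{align}
    e^{\hat{f}_{\theta^*}(\by)\cdot \hat{g}_{\theta^*}(\bphi)/\tau} = C^*\,r(\bphi,\by) = C^*\,\frac{p(\by\mid\bphi)}{p(\by)}
\end{align}
for every $\bphi \in \mathrm{supp}(p(\bphi))$ and $\by \in \mathrm{supp}(p(\by))$. The assumption $\mathrm{supp}(\widetilde{p}(\bphi)) \subseteq \mathrm{supp}(p(\bphi))$ ensures that this identity is valid on the entire support of the integrand that will appear under $\widetilde{p}(\bphi)$, so the substitution is justified almost everywhere with respect to $\widetilde{p}$.

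Next, I would compute the normalization constant. By its defining integral,
\begin{align}
    \widetilde{C}^*(\by) = \int e^{\hat{f}_{\theta^*}(\by)\cdot \hat{g}_{\theta^*}(\bphi)/\tau}\,\widetilde{p}(\bphi)\,\mathrm{d}\bphi = \frac{C^*}{p(\by)}\int p(\by\mid\bphi)\,\widetilde{p}(\bphi)\,\mathrm{d}\bphi = \frac{C^*\,\widetilde{p}(\by)}{p(\by)},
\end{align}
where $\widetilde{p}(\by) := \int p(\by\mid\bphi)\,\widetilde{p}(\bphi)\,\mathrm{d}\bphi$ is the evidence under the new prior. Substituting both expressions back into the definition of $\widetilde{q}_{\theta^*}$ yields
\begin{align}
    \widetilde{q}_{\theta^*}(\bphi\mid\by) = \frac{p(\by)}{C^*\,\widetilde{p}(\by)}\cdot C^*\,\frac{p(\by\mid\bphi)}{p(\by)}\,\widetilde{p}(\bphi) = \frac{p(\by\mid\bphi)\,\widetilde{p}(\bphi)}{\widetilde{p}(\by)} = \widetilde{p}(\bphi\mid\by),
\end{align}
which is the desired equality.

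The main subtlety, rather than any calculation, is the support condition. Theorem~\ref{thm:main} only pins down $\hat{r}_{\theta^*}(\bphi,\by) = r(\bphi,\by)$ on $\mathrm{supp}(p(\bphi))\times\mathrm{supp}(p(\by))$; off this set the learned emulator $\hat{g}_{\theta^*}$ could take any value consistent with the loss, so the substitution step would fail if $\widetilde{p}$ placed mass outside $\mathrm{supp}(p(\bphi))$. Imposing $\mathrm{supp}(\widetilde{p}(\bphi)) \subseteq \mathrm{supp}(p(\bphi))$ removes this obstruction and makes the short chain of substitutions above fully rigorous.
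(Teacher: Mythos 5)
Your proof is correct and follows essentially the same approach as the paper's inline derivation: substitute the trained ratio $e^{\hat{f}_{\theta^*}(\by)\cdot\hat{g}_{\theta^*}(\bphi)/\tau} = C^*\,r(\bphi,\by)$ from Theorem~\ref{thm:main}, identify $\widetilde{C}^*(\by)$, and simplify to recover Bayes' rule with the new prior. You additionally make explicit the integral computation of $\widetilde{C}^*(\by) = C^*\,\widetilde{p}(\by)/p(\by)$ and give a clean justification for the support condition, both of which the paper leaves implicit.
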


By using a different inference prior, we lose the nice data-independent property of the normalizing constant. However, if the inference prior is sufficiently similar to the data prior such that the ratio of the data marginals $\widetilde{p}(\by)/p(\by)$ is slowly varying, we still have a slowly varying normalization constant which likely retains much of the performance benefits of the data independent constant.

\section{Experimental details}
In this section, we provide a detailed description of the experimental setup for the experiments presented in Section~\ref{sec:experiments}, including evaluation metrics, data generation, and hyperparameter selection.

\subsection{Synthetic experiments}
\label{apx:exp-synthetic}
\textbf{Data generation.} To set up the invertible MLP $f$, we adapt the setup from \citet{hyvarinen2016unsupervised, zimmermann2021contrastive}.
Specifically, we use five hidden layers with leaky ReLU units and randomly initialized weights and ensure the invertibility of $f$ by controlling the conditional number of the weight matrices.
We set the transformation matrix in the parameter space as 
$A = \begin{bmatrix} 
0.5 & 0.2 \\ 0.0 & 0.8
\end{bmatrix}$.

\vspace{5pt}
\noindent \textbf{Validation metrics.}
For the validation purpose, given a set of parameters-data pairs $\{(\bphi_i, \by_i)\}_{i=1}^{N_{\rm val}} \overset{i.i.d.}{\sim} p(\bphi, \by)$, we use the values of the estimated posterior as validation metric:
\begin{equation} \label{metric:validation}
    \mathrm{Acc.}(\hat{q}) = \sum_{i=1}^{N_{\rm val}} \hat{q}(\bphi_i \mid \by_i).
\end{equation}
Here, the goal is to maximize the posterior for matched data-parameters pairs sampled from the joint distribution.
For the synthetic toy experiment, we set the size of the validation dataset as $N_{\rm val} = 100$, and tune the model based on the median value of this metric across the entire validation dataset.

\vspace{5pt}
\noindent \textbf{Evaluation metrics.}
For each observation $\by$, we report the $l1$ distance between the estimated posterior and the true posterior following:
\begin{equation} \label{metric:l1-distance}
\mathrm{Dist.}(\hat{q}, p) = \sum_{i=1}^{N_{\rm eval}} |\hat{q}(\bphi_i \mid \by) - p(\bphi_i \mid \by) |,
\end{equation}
where a set of size $N_{\rm eval}=10000$ parameters are sampled from the prior distribution $p(\bphi)$.

\vspace{5pt}
\noindent \textbf{Implementation details.} 
We configure both the encoder and the emulator using the residual-connected architecture described in \citet{jiang2022embed}, where the width of hidden layers is chosen using grid search from a set of $\{60, 90, 120, 150\}$.
We set the total training epochs as $2000$ and use the cosine learning rate scheduler during training.
We choose the initial learning rate using the grid search from $\{ 5e^{-4}, 1e^{-3}\}$.

Since our objective in Section \ref{sec:vmf} is to confirm the connection between the embedding and the latent space defined in the data generative process, we set the embedding dimension to $2$, matching the dimensionality of the latent space, for both the unimodal and multimodal scenarios.
We set $\tau = \frac{1}{\kappa}$ following Theorem~\ref{thm:rotation}.

\subsection{High-dimensional Lorenz 96 experiments}
\label{apx:l96}
\textbf{Validation metrics.} As in Section~\ref{apx:exp-synthetic}, we use the estimated posterior values as the validation metrics. We set the size of the validation data set as $N_{\rm val}=50$.

\vspace{5pt}
\noindent\textbf{Evaluation metrics.} 
We use the acceptance-rejection sampling for drawing the samples from the modeled posterior, following the implementation in \citet{tejero-cantero2020sbi}.
We then evaluate the quality of the estimated posterior $\hat{q}(\bphi\mid\by^o)$ using the sample-based maximum mean discrepancy (MMD) metric. The MMD between two collections of samples drawn from two distributions is defined as:
\begin{align*}
    \widehat{\text{MMD}}^2 & \big(\{ \hat{\bphi}_{j} \}_{j=1}^M, \{{\bphi}_{j} \}_{j=1}^M \big) =  \\
    & \left( \frac{1}{M^2} \sum_{j=1}^M \sum_{j'=1}^M k(\hat{\bphi}_j, \hat{\bphi}_{j'}) \right) - \left( \frac{2}{M^2} \sum_{j=1}^M \sum_{j'=1}^M k(\hat{\bphi}_i, \bphi_{j'}) \right) + \left( \frac{1}{M^2} \sum_{j=1}^M \sum_{j'=1}^M k({\bphi}_j, {\bphi}_{j'}) \right),
\end{align*}
where $k(\cdot, \cdot)$ corresponds to a Gaussian kernel with standard deviation $\sigma$. A smaller $\sigma$ typically allows for a more precise detection of fine-grained differences between the distributions. 

\vspace{5pt}
\noindent \textbf{Implementation details.}
We train \eande for 2000 epochs utilizing a cosine learning rate scheduler starting at $1e^{-3}$. For {\sc NPE-C} \citep{greenberg2019automatic} and {\sc NRE-C} \citep{miller2022contrastive}, we perform training for a total of 4000 epochs to guarantee convergence, using a learning rate of $1e^{-4}$ as we observed improved performance with a reduced learning rate. 
For all methods, we evaluate earlier checkpoints using the validation metrics and select the one with the highest validation accuracy to evaluate on the testing data.
For all experiments, we use the AdamW optimizer with weight decay $5e^{-4}$.
For the size of $N=500$ training dataset, we use batch size $M=500$ for \eande and {\sc NPE-C}.
In \eande, to enhance the convergence of the InfoNCE loss, we employ a memory bank method \citep{he2020momentum} to empirically increase the number of negative samples $M$.
Specifically, we set $M$ equal to the training batch size by storing representations from previous mini-batches.

All hyperparameters are chosen using 
the reserved validation set. The range of values searched over are as follows:
\begin{itemize}
\item For \eande, the temperature values $\tau$ controlling the radius of the hypersphere were selected from the set $\{1e^{-4}, 1e^{-3}, 1e^{-2}, 1e^{-1} \}$.
\item For \eande, when we choose to use the intra-domain InfoNCE loss, we selected its weight $\lambda$ from the set $\{0, 0.2, 0.4, 0.6, 0.8, 1.0\}$.
\item For {\sc NRE-C}, we choose the hyperparameter that implies the odds that the pairs are drawn dependently to completely independently ($\gamma$ in \citet{miller2022contrastive}) from the set $\{1e^{-4}, 1e^{-3}, 1e^{-2}, 1e^{-1}\}$.
\item For {\sc NRE-C}, we try to increase the hyperparameter value controlling the number of classes ($K$ in \citet{miller2022contrastive}). However, as the number of classes increases, the number of required forward passes scales linearly. Given the memory constraints and a fixed batch size, it is not feasible to increase the number of classes indefinitely. Therefore, in four parallel GPU training sessions, we search for the optimal batch size from the set $\{40, 60, 80, 100, 120\}$ and the largest number of classes that are allowed from the set $\{4, 8, 12, 16\}$ per GPU.
\item For all methods, we choose the embedding dimensionality from the set $\{128, 256, 512\}$.
\end{itemize}

\noindent \textbf{Computational resources.} Training of \eande and {\sc NRE-C} was performed on a system with 4x Nvidia A40 GPUs, 2 AMD EPYC 7302 CPUs, and 128GB of RAM. Training of {\sc NPE-C} was performed on a system with 1x Nvidia A40 GPUs, 2 AMD EPYC 7302 CPUs, and 128GB of RAM. Evaluation for all three methods was performed on a system with 1x Nvidia A40 GPUs, 2 AMD EPYC 7302 CPUs, and 128GB of RAM.

\subsection{Additional visualizations}
\label{apx:visualizations}
\begin{figure}[t]
\centering
\begin{minipage}{1\textwidth}
  \centering
  \begin{subfigure}{.3\textwidth}
    \centering
    \includegraphics[trim={0.3cm 0.5cm 7cm 6.6cm},clip,height=4.5cm]{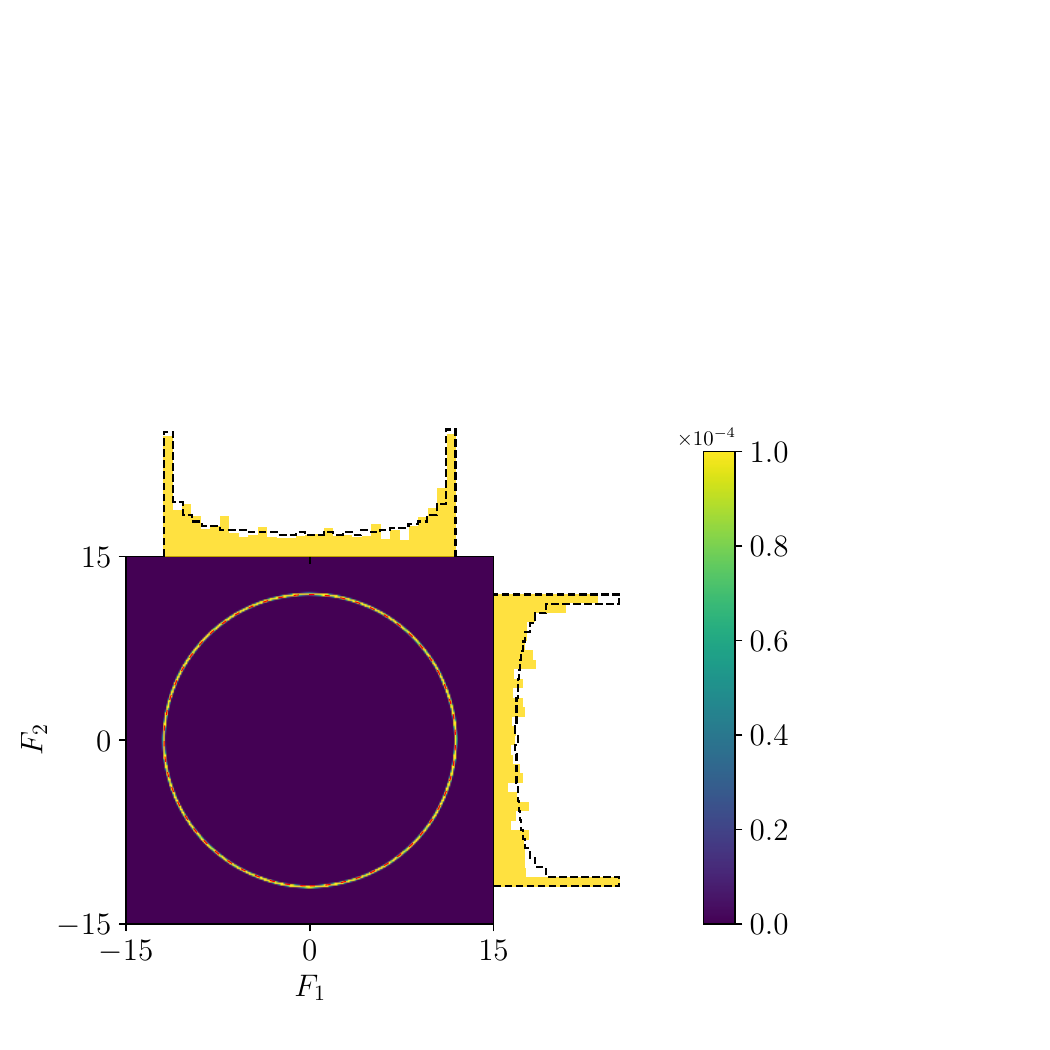}
    \caption{\eande}
  \end{subfigure}%
  \hfill
  \begin{subfigure}{.3\textwidth}
    \centering
    \includegraphics[trim={0.3cm 0.5cm 7cm 6.6cm},clip,height=4.5cm]{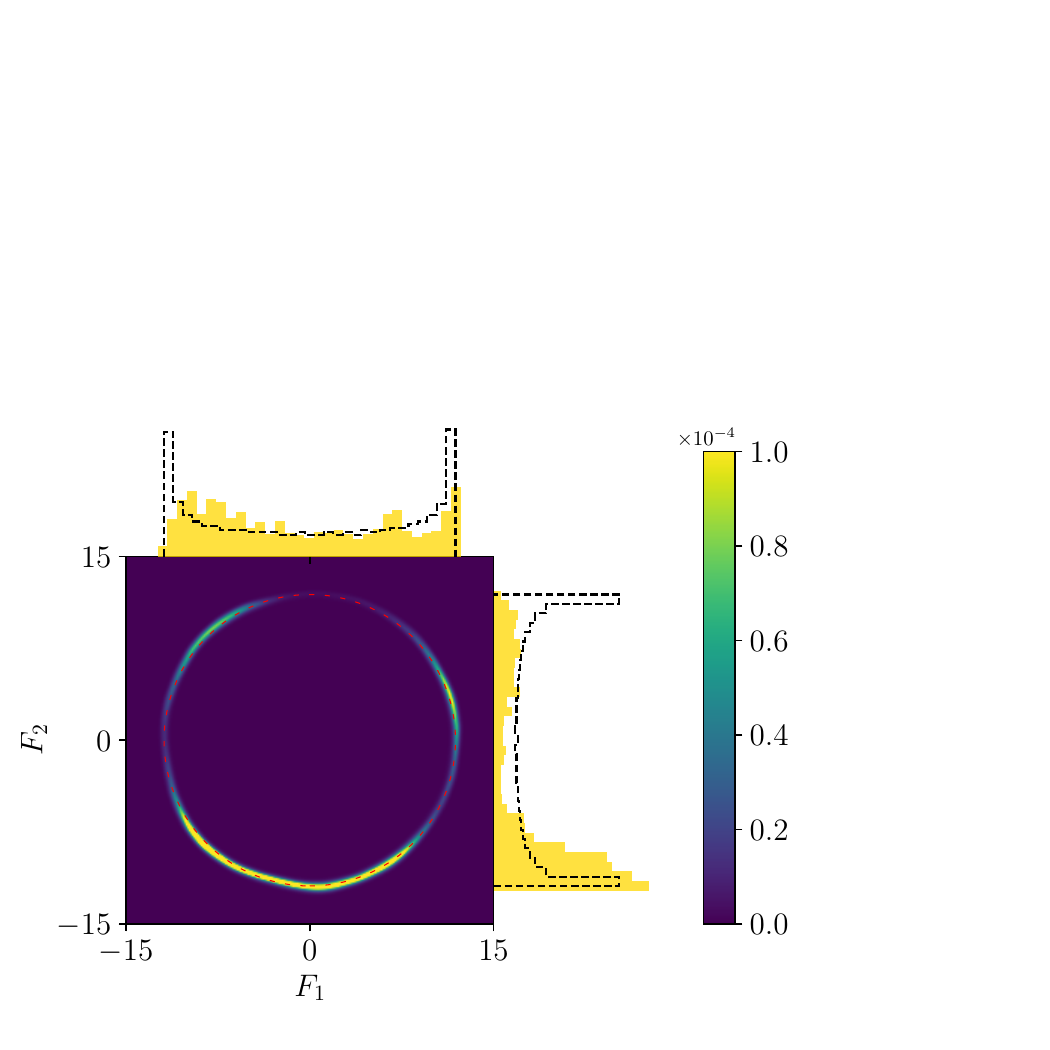}
    \caption{NRE-C}
  \end{subfigure}%
  \hfill
  \begin{subfigure}{.37\textwidth}
    \centering
    \includegraphics[trim={0.3cm 0.5cm 3cm 6.6cm},clip, height=4.5cm]{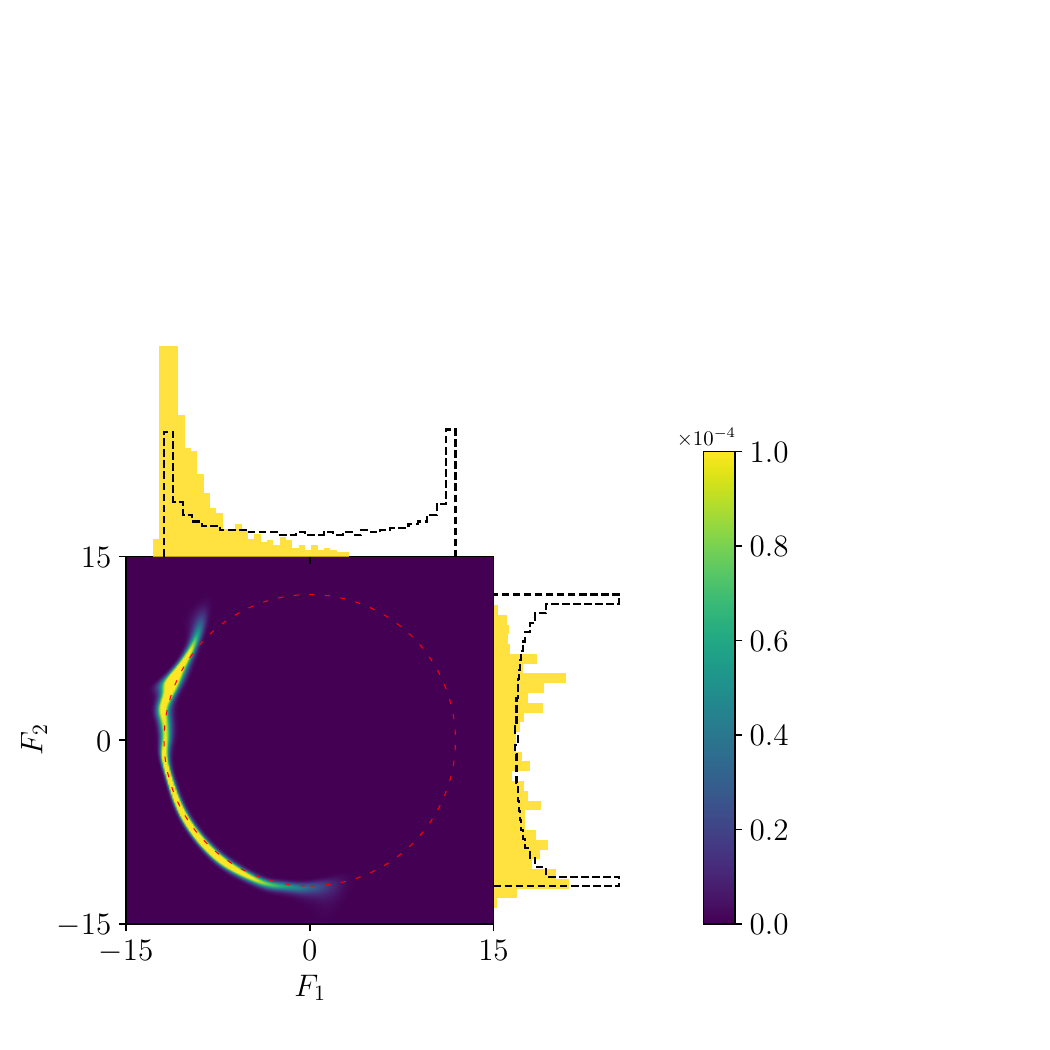}
    \caption{NPE-C}
  \end{subfigure}
\end{minipage}

\begin{minipage}{1\textwidth}
  \centering
  \begin{subfigure}{.3\textwidth}
    \centering
    \includegraphics[trim={0.3cm 0.5cm 7cm 6.6cm},clip,height=4.5cm]{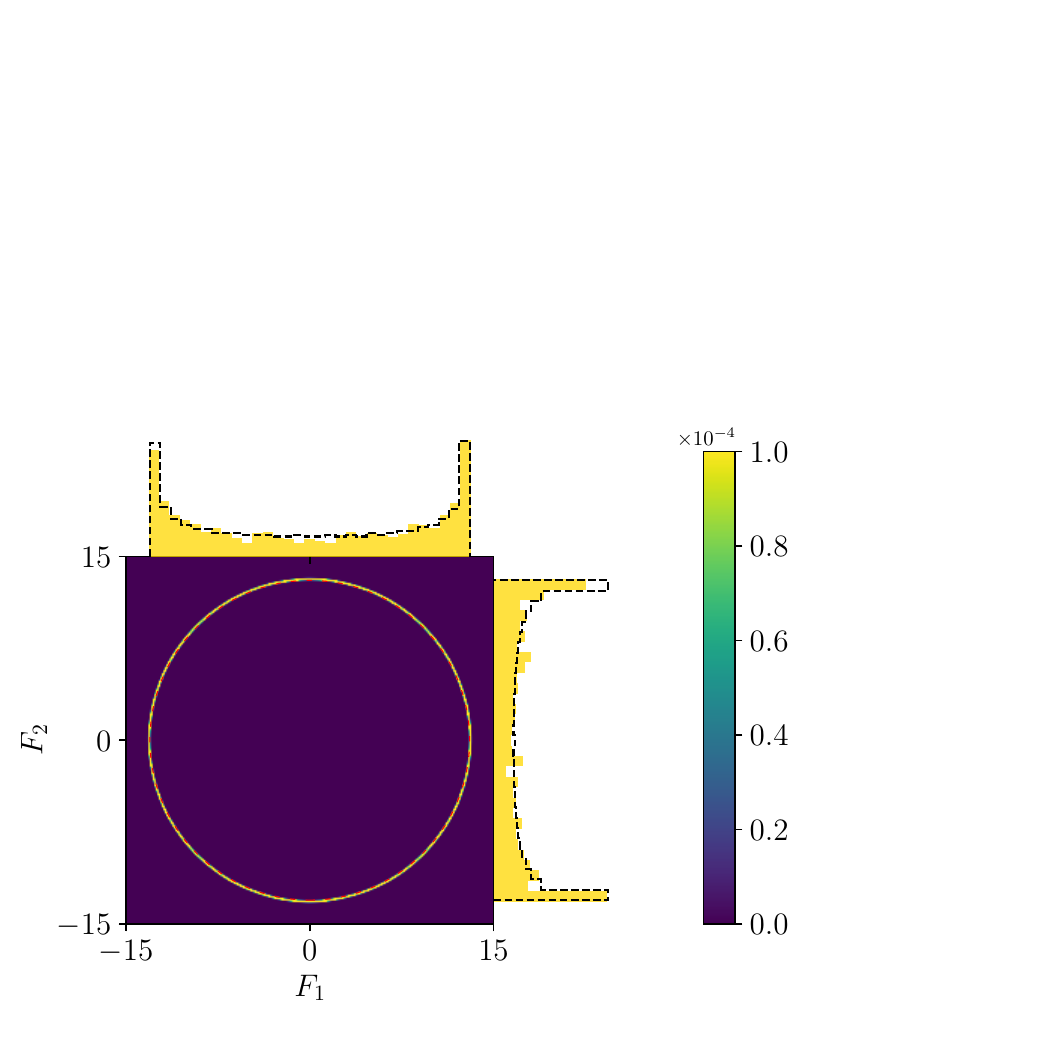}
    \caption{\eande}
  \end{subfigure}%
  \hfill
  \begin{subfigure}{.3\textwidth}
    \centering
    \includegraphics[trim={0.3cm 0.5cm 7cm 6.6cm},clip,height=4.5cm]{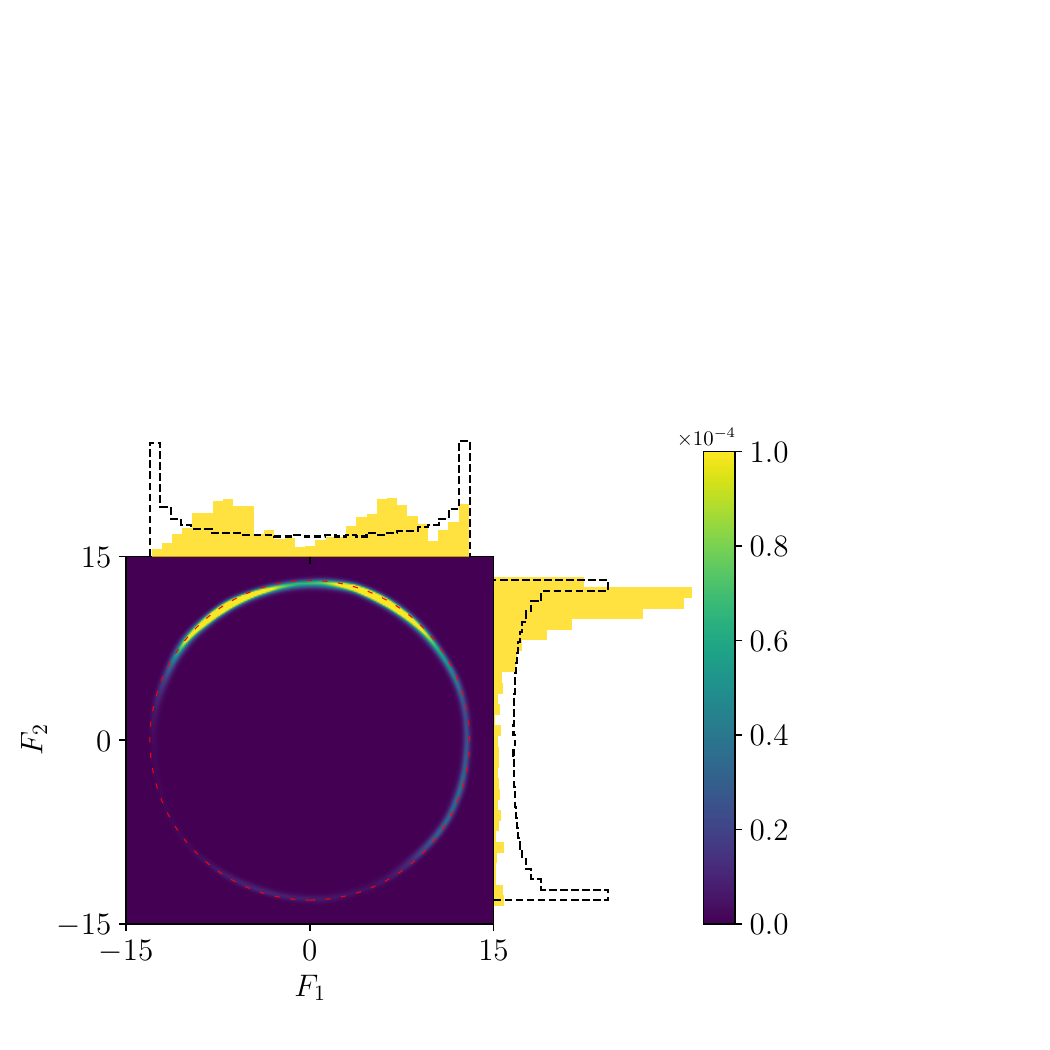}
    \caption{NRE-C}
  \end{subfigure}%
  \hfill
  \begin{subfigure}{.37\textwidth}
    \centering
    \includegraphics[trim={0.3cm 0.5cm 3cm 6.6cm},clip, height=4.5cm]{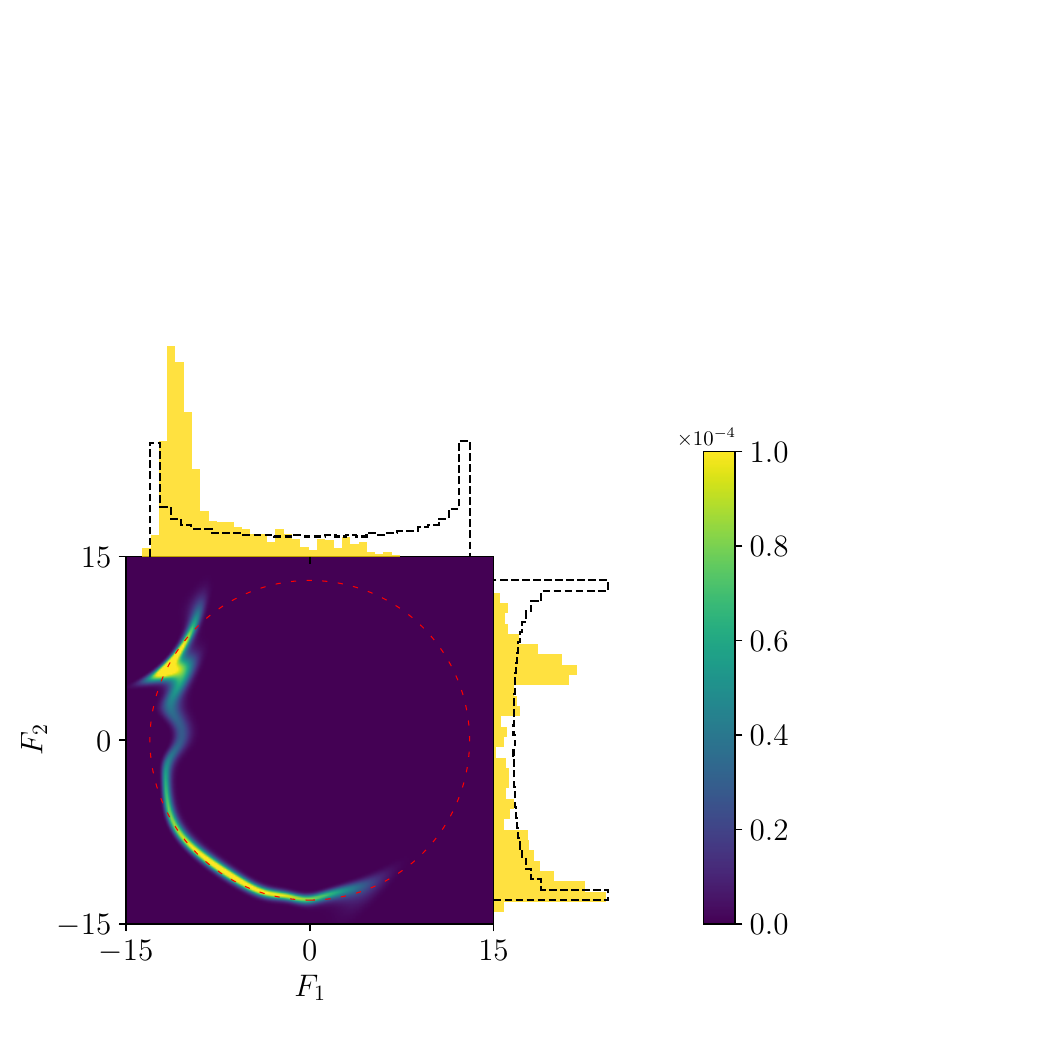}
    \caption{NPE-C}
  \end{subfigure}
\end{minipage}

\caption{\textbf{Visual comparison of the estimated joint and marginal posterior distributions for additional test samples.} Each row corresponds to one sample and shows posteriors from \eande, NRE-C \citep{miller2022contrastive}, and NPE-C \citep{greenberg2019automatic}.
}
\label{fig:posterior_11}
\end{figure}

\begin{figure}[ht!]
    \centering
    \includegraphics[width=0.5\linewidth]{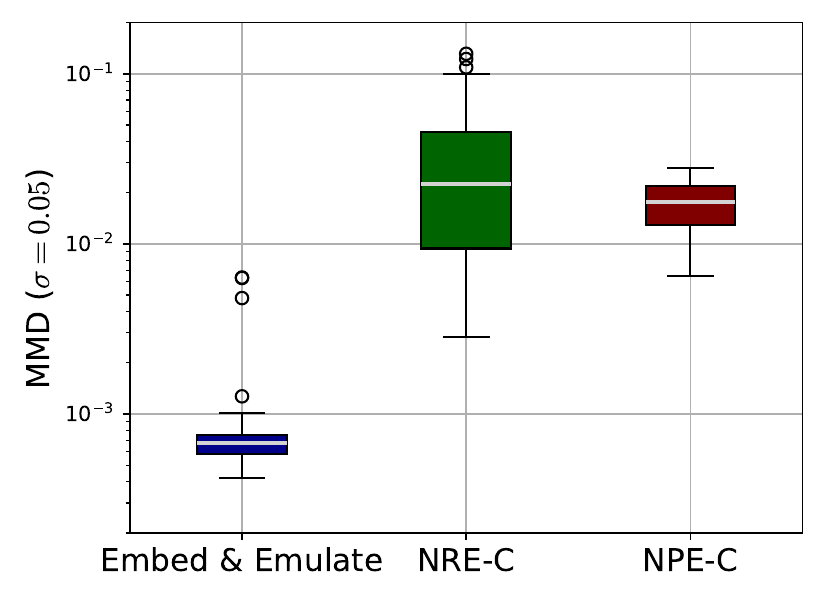}
    \caption{\textbf{Comparison of sample quality using maximum mean discrepancy over 50 testing instances (MMD).} Each box plot shows the median (25th, 75th percentiles) of the error statistics.
    We compare \eande with NRE-C \citep{ miller2022contrastive} and NPE-C \citep{greenberg2019automatic}.
    With a larger Gaussian kernel width $\sigma = 0.05$ for the MMD, the differences between methods are more pronounced.
    }
    \label{fig:mmd_05}
\end{figure}

We provide additional visualizations of the estimated posterior in Figure~\ref{fig:posterior_11}.
Furthermore, we utilize the samples obtained in Section~\ref{sec:l96-main} to calculate the Maximum Mean Discrepancy (MMD) between the posterior distributions of the learned model and the actual reference distribution. This calculation employs a Gaussian kernel with an increased width
of $\sigma = 0.05$.
As shown in Figure~\ref{fig:mmd_05}, 
with a larger kernel width, 
\eande achieves a consistently lower error with a significantly reduced variance.

\pagebreak

\bibliographystyle{unsrtnat}
\bibliography{reference}  

\end{document}